\documentclass{article}
\usepackage{arxiv/mylatexstyle}
\usepackage{amsmath,amssymb,amsthm,mathtools}
\usepackage{bm}
\usepackage{booktabs,array,tabularx}
\usepackage{graphicx}
\graphicspath{{arxiv/}}
\usepackage[caption=false,font=normalsize]{subfig}
\usepackage{placeins}
\usepackage{wrapfig}

\usepackage[colorlinks=true,linkcolor=red,citecolor=blue,urlcolor=green]{hyperref}

\usepackage{xurl}
\usepackage[nameinlink,capitalize,noabbrev]{cleveref}
\crefname{equation}{Eq.}{Eqs.}
\Crefname{equation}{Eq.}{Eqs.}

\title{Muon Sublates the Edge of Stability in LLM Pretraining}

\author{
  Yanzhe Chen\thanks{School of Mathematical Sciences,
  Shanghai Jiao Tong University, China.
  Email: \texttt{cyzebra\_z@sjtu.edu.cn}}
  ~~~
  Qifang Zhao\thanks{Alibaba Inc., Hangzhou, China.
  Emails: \texttt{james.zqf@alibaba-inc.com} (Qifang Zhao);
  \texttt{xiaoxiao.xuxx@alibaba-inc.com} (Xiaoxiao Xu).}
  ~~~
  Xiaoxiao Xu\footnotemark[2]
  ~~~
  Fanghui Liu\thanks{School of Mathematical Sciences,
  Institute of Natural Sciences and MOE-LSC,
  Shanghai Jiao Tong University, China.
  Email: \texttt{fanghui.liu@sjtu.edu.cn} (Corresponding author)}
}
\date{}

\newtheorem{theorem}{Theorem}[section]
\newtheorem{proposition}[theorem]{Proposition}

\newtheorem{corollary}[theorem]{Corollary}
\theoremstyle{definition}
\newtheorem{definition}{Definition}[section]

\theoremstyle{remark}

\newcommand{\R}{\mathbb{R}}
\newcommand{\E}{\mathbb{E}}
\newcommand{\bW}{\bm{W}}
\newcommand{\bG}{\bm{G}}
\newcommand{\bP}{\bm{P}}
\newcommand{\bU}{\bm{U}}
\newcommand{\bH}{\bm{H}}

\newcommand{\bZ}{\bm{Z}}
\newcommand{\bDelta}{\bm{\Delta}}
\newcommand{\Pol}{\operatorname{Pol}}

\newcommand{\rank}{\operatorname{rank}}
\newcommand{\opnorm}[1]{\left\lVert #1\right\rVert_{\mathrm{op}}}
\newcommand{\nucnorm}[1]{\left\lVert #1\right\rVert_{*}}
\newcommand{\normF}[1]{\left\lVert #1\right\rVert_{\mathrm{F}}}
\newcommand{\ipF}[2]{\left\langle #1,#2\right\rangle_{\mathrm{F}}}
\newcommand{\dd}{\,\mathrm{d}}

\definecolor{fhcolor}{rgb}{0.523, 0.235, 0.625}

\begin{document}
\maketitle
\begin{abstract}

Muon is increasingly used for language-model pretraining, yet its large-step dynamics are not captured by the classical edge-of-stability (EoS) picture of gradient descent (GD). In GD, loss neutrality, equal-magnitude update reversal, and marginal stability meet at a single learning-rate-dependent edge. We show that Muon breaks this coupling. For stochastic no-momentum Muon, we derive a coherence-corrected conditional loss-neutral boundary $2\rho_b/\eta$, while temporal alignment follows a separate geometry. Controlled experiments show that loss balance and temporal alignment respond differently to learning rate and batch size. Across our language model experiments, the 130M Llama-like LLM runs exhibit loss-boundary tracking with weak negative alignment, whereas the studied 1B LLM configuration shows stronger partial cancellation; in both settings, directions remain far from coherent reversal while training continues to improve. These results support a split EoS picture for Muon: a stochastic loss-neutral edge survives, but it is not accompanied by a universal temporal-direction signature. The source code for reproducing the experiments can be found in \url{https://github.com/cyzebra/Muon-Sublates-the-Edge-of-Stability-in-LLM-Pretraining}.

\end{abstract}

\section{Introduction}
\label{sec:introduction}

Muon is increasingly used for large language model (LLM) pretraining, where it replaces each matrix gradient by an approximately semi-orthogonal polar
direction~\citep{jordan2024muon,liu2025muonscalablellmtraining}. This
transformation changes the geometry of the update, not merely its scale.
Consequently, Muon's large-step dynamics cannot be inferred directly from the
familiar gradient-descent picture.

For gradient descent (GD), large-step neural-network training is commonly
described through the \emph{edge of stability} (EoS): curvature approaches a
learning-rate-dependent boundary, loss becomes non-monotone over short
horizons, and training nevertheless progresses over longer
ones~\citep{NEURIPS2018_6651526b,Jastrzebski2020The,
lewkowycz2020catapult,cohen2021gradient,damian2023self}. In a scalar quadratic
mode of curvature $\lambda$, the GD multiplier is $1-\eta\lambda$. Negative
alignment begins already at $\eta\lambda>1$; at $\eta\lambda=2$, the stronger
coincidence of one-step loss neutrality, equal-magnitude reversal, and marginal
linear stability produces the classical single-edge
picture~\citep{litman2026originedgestability}.

Muon removes the mechanism forcing these signatures to coincide. Its loss
change depends on curvature along the current polar direction, whereas its
temporal geometry depends on how that direction changes between updates.
Stochasticity introduces a further distinction: an unbiased minibatch gradient
need not induce an unbiased polar direction. Existing geometry-aware analyses
of Spectral GD and normalized Spectral GD are primarily deterministic or
controlled~\citep{islamov2026noneuclidean}, while language-model pretraining
uses fresh minibatches, approximate polar updates, many heterogeneous matrix
blocks, and an auxiliary optimizer. To our knowledge, there is no study to check whether EoS exists in LLM pre-training via Muon or not, leading to the following question:
\begin{center}
\emph{Which signatures of EoS survive under Muon in language-model pretraining?}
\end{center}

\begin{figure}[!t]
\centering
\subfloat[\small Transformer: loss\label{fig:intro-transformer-loss}]{\includegraphics[width=.32\linewidth]{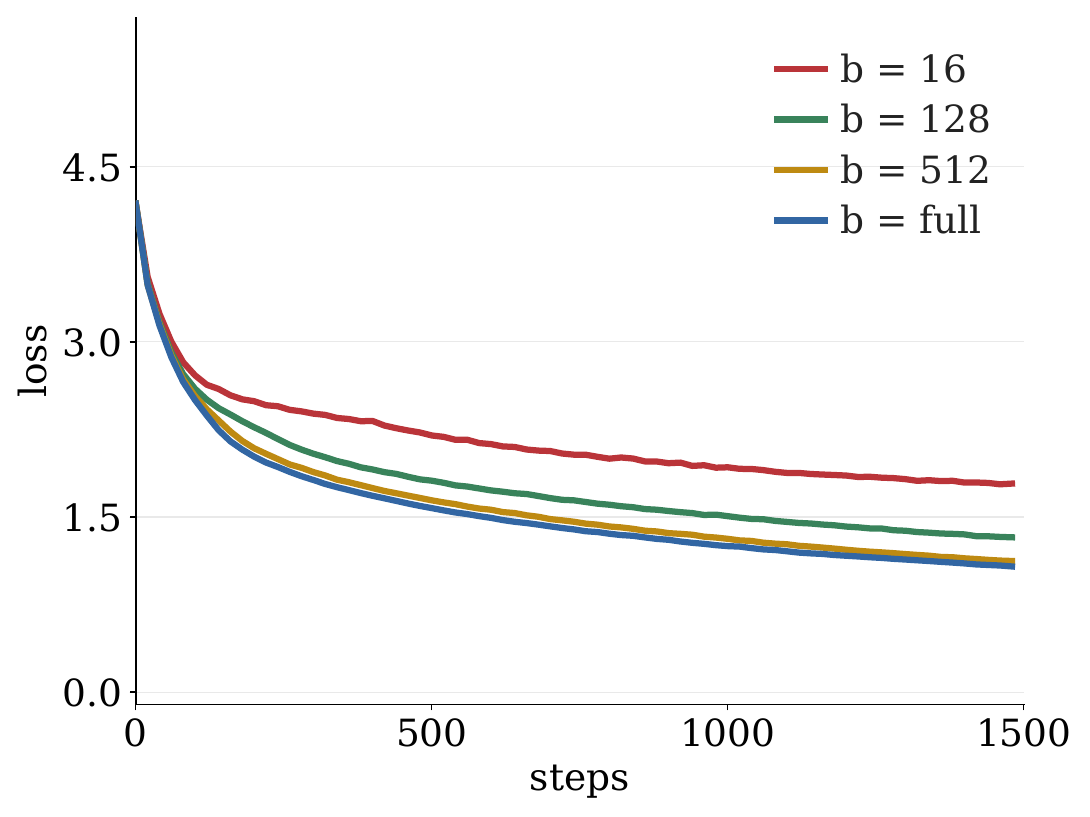}}
\hfill
\subfloat[\small Transformer: loss balance\label{fig:intro-transformer-balance}]{\includegraphics[width=.32\linewidth]{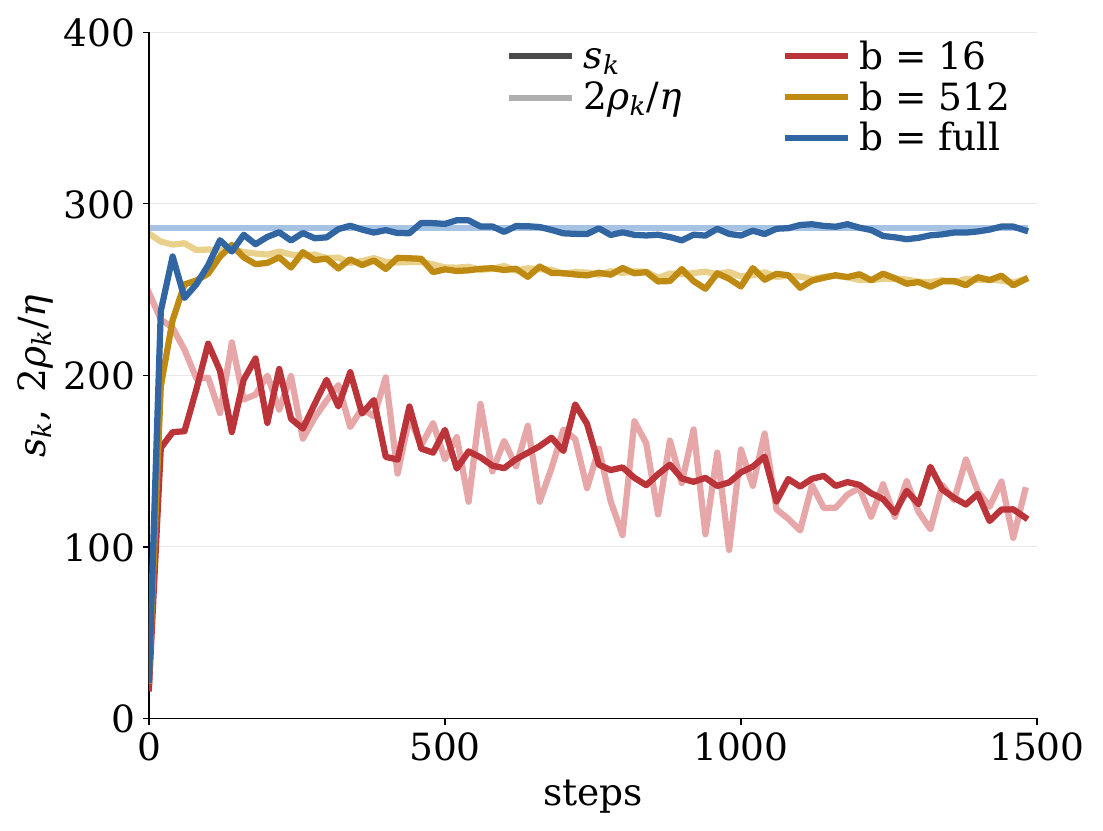}}
\hfill
\subfloat[\small Transformer: update directions\label{fig:introduction_muon_130M:a}]{\includegraphics[width=.32\linewidth]{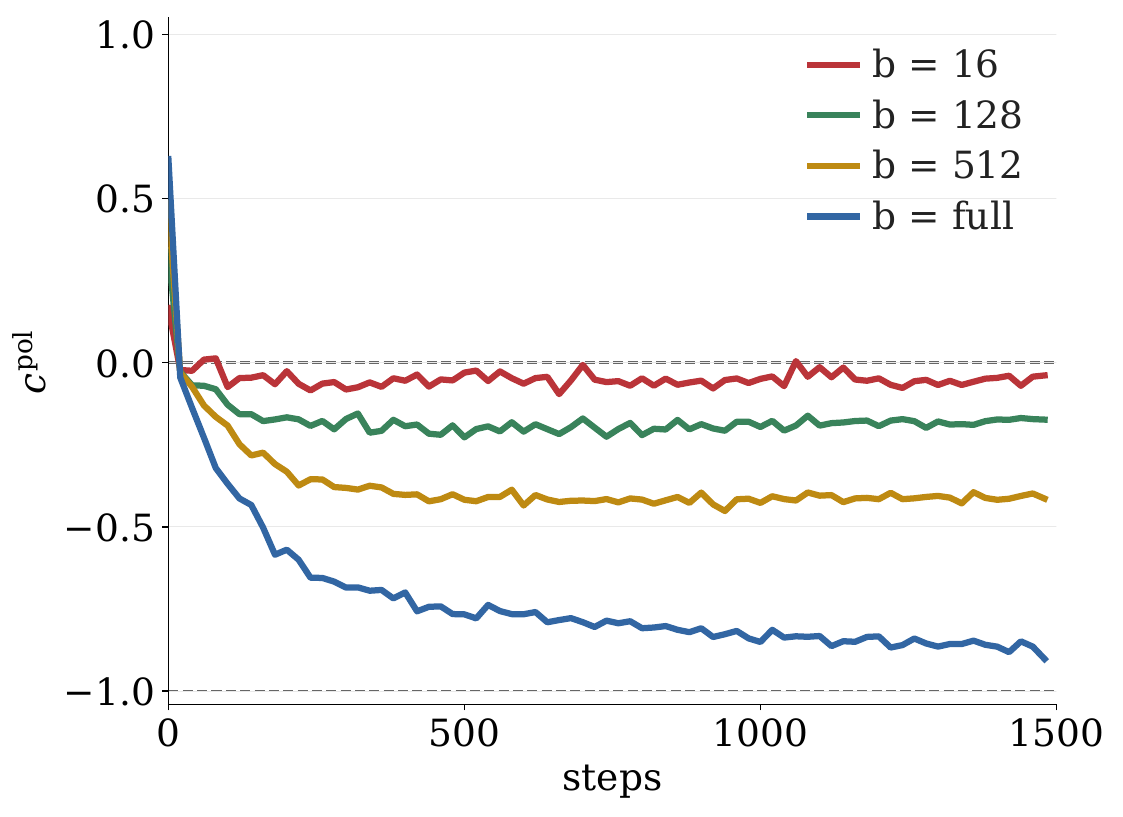}}
\par\smallskip
\subfloat[\small 130M: loss\label{fig:introduction_muon_130M:b}]{\includegraphics[width=.32\linewidth]{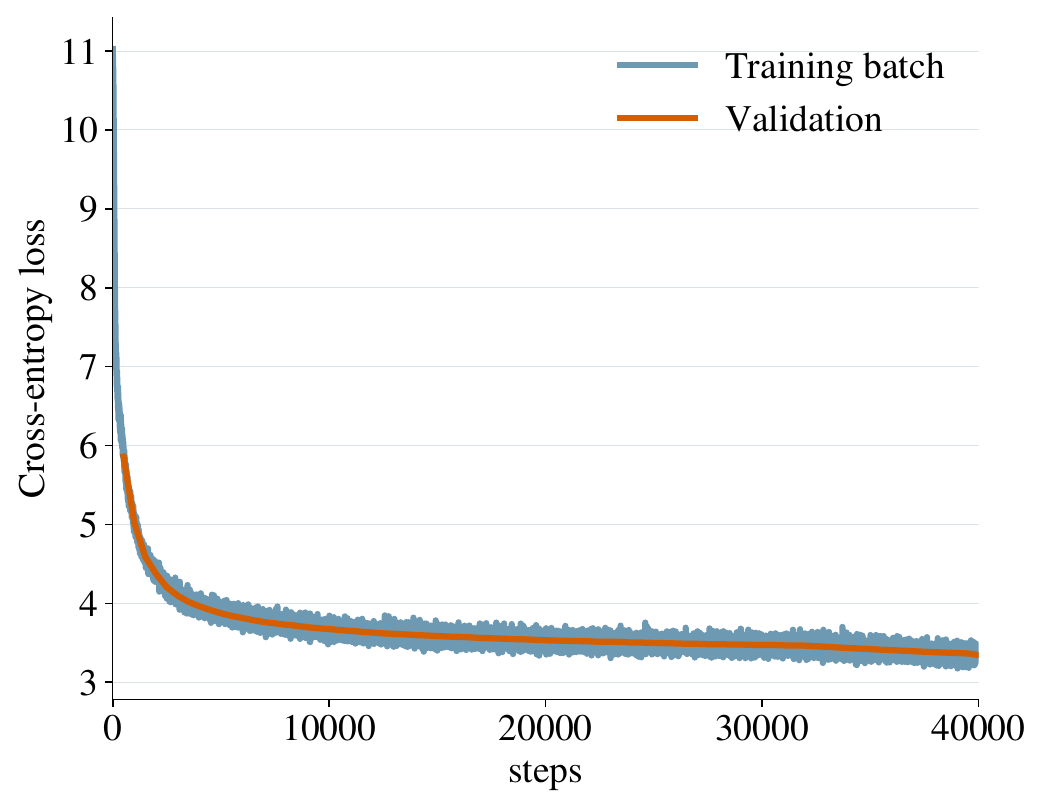}}
\hfill
\subfloat[\small 130M: loss balance\label{fig:introduction_muon_130M:d}]{\includegraphics[width=.32\linewidth]{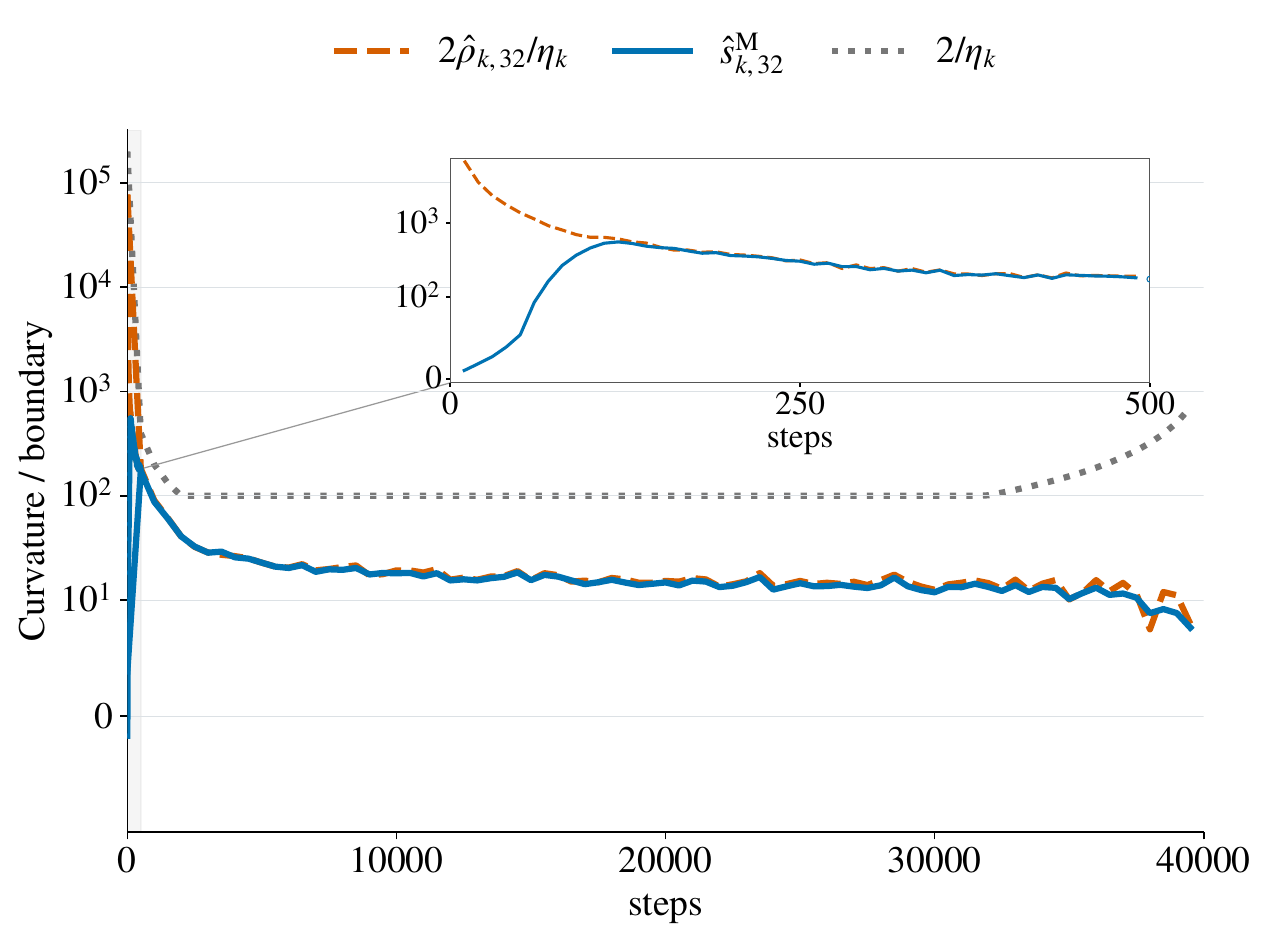}}
\hfill
\subfloat[\small 130M: update directions\label{fig:introduction_muon_130M:c}]{\includegraphics[width=.32\linewidth]{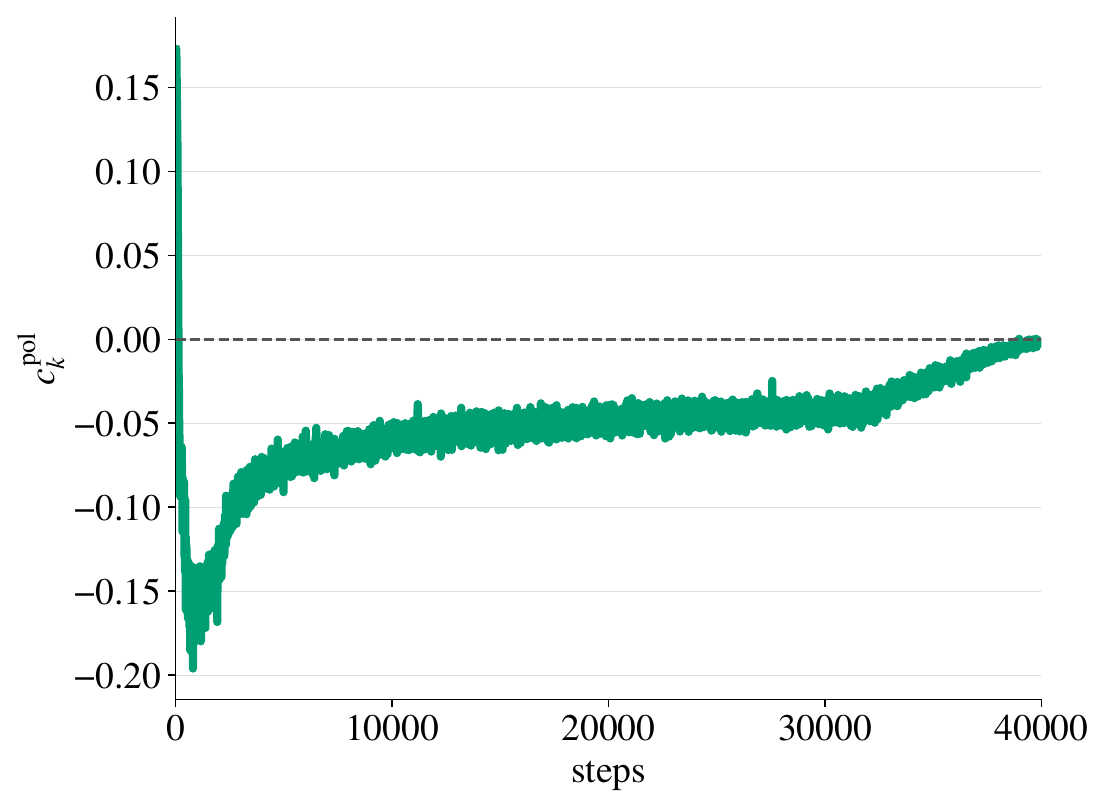}}
\caption{\textbf{Results on a controlled Transformer (a,b,c) to language-model pretraining (d,e,f).} Transformer experiments (on the top) use SVD-polar updates
at a fixed learning rate $\eta=0.007$ and varying batch sizes; 130M Llama-like LLM pre-training (on the bottom) uses no-momentum NS-5 Muon at a warmup-stable-decay (WSD) \citep{hu2024minicpm} learning rate schedule (with a peak learning rate $0.02$). Columns show the loss, conditional effective curvature and its coherence-corrected loss-neutral boundary $2\hat\rho_{k,b}/\eta_k$, and global Muon-direction cosine in a range of $[-1,1]$, respectively. In the 130M LLM run, conditional loss balance coexists with continued learning and a weak negative directional bias. The inset in (e) enlarges steps 10--490. The experiments on 1B LLM pre-training are deferred to \cref{subsec:llm-1b}.}
\label{fig:introduction-muon}
\end{figure}

We provide an affirmative answer to this question via our theoretical analysis and comprehensive experiments from controlled small-scale models (MLP, CNN, Transformers) to language models (at the scale of 22M, 130M, and 1B parameters).
We find that, no-momentum Muon neither simply preserves nor eliminates the classical EoS, but \emph{\bf sublates}\footnote{The term ``sublate'' is borrowed from Hegelian philosophy, translating the German \emph{aufheben}: to negate a prior form while preserving its essential content in a new, more encompassing form.} EoS: It preserves a coherence-corrected loss-neutral edge, but it breaks the classical coupling between loss balance and update reversal. We call this a \emph{split edge of stability}. Across the studied small-scale models and language-model configurations in \cref{fig:introduction-muon}, the loss edge in the spirit of classical EoS survives, while temporal alignment ranges from weak negative alignment to substantial partial cancellation, that remains far from coherent reversal at $-1$ in the classical EoS.

\subsection{Contributions and findings}
In this paper, we focus on stochastic no-momentum exact-polar Muon to isolate the dynamics induced by its defining polar normalization. In this setting, the direction on each Muon parameter block is determined by current stochastic gradient alone. 
We evaluate on MLP, CNN, and Transformer with exact-polar but test language-model experiments at the scale of 22M, 130M, 1B parameters with practical no-momentum NS-5 approximation on Muon parameter blocks, together with AdamW on auxiliary parameters. We define two diagnostics

\begin{equation}
\underbrace{
s_{k,b}^{\mathrm M}=\frac{2\rho_{k,b}}{\eta_k}
}_{\text{T1: conditional loss neutrality}},
\qquad
c_{k,b}^{\mathrm{pol}}
=
\frac{\langle \bP_k,\bP_{k+1}\rangle_{\mathrm{F}}}
{\|\bP_k\|_F\|\bP_{k+1}\|_F},
\qquad
\underbrace{
c_{k,b}^{\mathrm{pol}}=0
}_{\text{T2: temporal orthogonality}} .
\label{eq:intro-diagnostics}
\end{equation}
T1 is a loss-neutral boundary, whereas $c_{k,b}^{\mathrm{pol}}$ is a temporal-direction diagnostic and T2 is a temporal orthogonality threshold:
it separates acute from obtuse alignment and is not itself a stability
criterion. Our main contributions and findings are:
\begin{itemize}

\item \textbf{A coherence-corrected loss edge survives.}
For stochastic exact-polar no-momentum Muon, in \cref{sec:stochastic}, we derive an exact conditional identity that gives the loss-neutral boundary $s_{k,b}^{\mathrm M}=2\rho_{k,b}/\eta_k$.
The full-batch case recovers $2/\eta_k$, while minibatch sampling moves the boundary through the batch-to-population coherence $\rho_{k,b}$.
Controlled SVD-polar experiments approach this boundary, see \cref{fig:introduction-muon,fig:controlled-diagnostics}.
Besides, finite-reference Muon-only probes in the 130M runs repeatedly lie near it, while the complete training runs continue to improve over longer horizons.

\item \textbf{Loss balance does not determine temporal direction.}
Loss balance and temporal alignment are distinct. A matrix quadratic proves that the same effective loss curvature can produce either positive or negative consecutive-direction alignment. Controlled experiments further show that the observed T1 and T2 crossings can occur in either order (\cref{fig:small-network-onsets}). More directly, at fixed checkpoints of the 22M and 130M models, changing the learning rate changes the sign of the conditional loss increment while \(c_{k,b}^{\mathrm{pol}}\) remains negative (\cref{fig:perturb22m-lr}). Thus, negative temporal alignment does not determine which side of the loss-neutral boundary the update lies on.

\item \textbf{Batch size controls access to coherent reversal.}
Across the controlled MLP, CNN, and Transformer experiments in \cref{fig:introduction-muon,fig:controlled-diagnostics},
increasing the batch size makes temporal alignment progressively more negative, with full-batch trajectories moving toward coherent reversal at
$c^{\mathrm{pol}}=-1$, whereas small-batch trajectories remain much closer to orthogonality.
The same tendency appears in the fixed-checkpoint 22M interventions.

\item \textbf{LLM pretraining exhibits loss balance without universal reversal.}
Across LLM experiments in \cref{sec:practice}, we find that temporal Muon alignment is consistently obtuse but not universal in magnitude: it ranges from near
orthogonality at 130M scale to partial directional cancellation at 1B LLM pre-training under a (relatively large) batch size 512, while remaining far from coherent reversal at $-1$ (\cref{fig:llm-1b}).
Thus, practical LLM pretraining can reach the loss-neutral edge without entering the coherent directional regime observed near full batch.
Besides, we also conduct experiments on layerwise measurements of $c_{k,b}^{\mathrm{pol}}$, which reveals additional heterogeneity hidden by the global cosine  (\cref{fig:llm-module-medians}). 
\end{itemize}

\noindent {\bf Practical implications.} Together, these results support a qualified EoS picture for Muon: a stochastic loss-neutral edge survives, but no universal temporal direction accompanies it. This separation identifies distinct loss and trajectory signals that may inform future design of learning-rate and batch-size schedules in LLM pre-training. To be specific, the two signals should be {\bf monitored jointly}: during a WSD schedule, weaker negative alignment indicates reduced cancellation, but increasing the learning rate is justified only if the T1 diagnostic leaves sufficient room before loss neutrality (\cref{fig:perturb22m-lr}). Likewise, batch size affects both batch-to-population coherence and temporal alignment, suggesting a data-driven notion of critical batch size based on their joint response rather than gradient noise alone. Our controlled and fixed-checkpoint interventions provide initial evidence for these distinct responses; converting them into optimized adaptive schedules is left for future work.

\subsection{Related Work}
\label{sec:related}
\noindent {\bf Origins and mechanisms of EoS.}
Early work connected large learning rates to dynamical stability, sharp directions, break-even points, and catapult dynamics~\citep{NEURIPS2018_6651526b,jast2019relation,Jastrzebski2020The,lewkowycz2020catapult}. \citet{cohen2021gradient} documented full-batch GD sharpening toward $2/\eta$. Subsequent explanations include unstable convergence~\citep{pmlr-v162-ahn22a}, progressive sharpening~\citep{NEURIPS2022_40bb79c0}, self-stabilization~\citep{damian2023self}, and two-step dynamics and bifurcation~\citep{pmlr-v202-chen23b,NEURIPS2023_e2a9256b}. \citet{litman2026originedgestability} derives an edge coupling underlying curvature visits. 

\noindent {\bf Stochastic, adaptive, and LLM edges.}
Batch-aware curvature diagnostics extend EoS to SGD~\citep{lee2023a,andreyev2024edge}; adaptive methods instead involve a preconditioned Hessian and optimizer state~\citep{cohen2022adaptive}. In LLM pretraining, \citet{cai2026does} distinguishes an edge of convexity from EoS, reporting EoS signatures across SGD and Adam settings as prevalent but not universal. \citet{kalra2026scalable} reports progressive sharpening and EoS-like behavior in AdamW language-model training up to 7B parameters. These results motivate studying stochastic Muon, where nonlinear polar normalization changes batch-to-population alignment.

\noindent {\bf Muon and matrix-aware optimization.}
Muon combines momentum with approximate matrix orthogonalization~\citep{jordan2024muon}; its no-momentum core has a spectral-norm steepest-descent interpretation~\citep{bernstein2024neuripsw-old}. Large-scale implementations add scaling, weight decay, and distributed execution~\citep{liu2025muonscalablellmtraining}; theory also relates Muon with decoupled weight decay to spectral-norm constraints~\citep{chen2026tmlr-muon}. \citet{wang2026muon} study Muon's advantage over Adam through directional curvature, decomposing its curvature penalty into within-layer and cross-layer contributions. 

\noindent {\bf Non-Euclidean and Muon EoS.}
Directional smoothness measures curvature along an optimization chord~\citep{NEURIPS2024_1ac83203}. \citet{islamov2026noneuclidean} extends EoS diagnostics to arbitrary norms, including Spectral GD and normalized Spectral GD. Its full-batch experiments include normalized Spectral GD on a CNN and unnormalized Spectral GD on a Transformer; but stochastic extensions remain open. Instead, we derive a coherence-corrected stochastic loss boundary, prove its separation from temporal-alignment sign, and examine them in LLM pre-training.

\section{Analysis of Stochastic Muon with T1 and T2}
\label{sec:stochastic}

In this section, we present our analysis on stochastic Muon on loss balance and update alignment, leading to two boundaries, T1 and T2, and show their separation in controlled networks and a matrix quadratic setting. 
Before introducing our results, we give a brief introduction of Muon as below for self-completeness.

\noindent {\bf Brief introduction of Muon.} Let $L:\R^{m\times n}\to\R$ be a continuously differentiable population or fixed full-data objective and $L_B$ a batch loss. For a fresh batch $B_k\sim\mathcal D_b$ of size $b$, 
assume $L=\E_bL_B$, $\nabla L=\E_b\nabla L_B$, with fixed $\eta>0$, the gradient and polar map are given by
\begin{equation}
\bG_k=\nabla L(\bW_k),\quad
\widehat{\bG}_k=\nabla L_{B_k}(\bW_k),\quad
\bP_k=\Pol(\widehat{\bG}_k),\quad
\bW_{k+1}=\bW_k-\eta\bP_k\,,
\label{eq:stoch-update}
\end{equation}
where we use a compact SVD $\bG=\bm U_r\bm\Sigma_r\bm V_r^\top$ containing positive singular values, leading to $\Pol(\bG)=\bm U_r\bm V_r^\top$ and $\Pol(\bm0)=\bm0$. Write $\ipF{\cdot}{\cdot}$ for the Frobenius inner product and $\normF{\cdot}$, $\opnorm{\cdot}$, $\nucnorm{\cdot}$ for the Frobenius, operator, and nuclear norms. Active directions have $\opnorm{\bP_k}=1$, and operator--nuclear duality gives
\begin{equation}
\max_{\opnorm{\bZ}\leq1}\ipF{\bG}{\bZ}
=\ipF{\bG}{\Pol(\bG)}=\nucnorm{\bG}.
\label{eq:operator-nuclear-duality}
\end{equation}
Thus $\eta\nucnorm{\bG_k}$ is the available first-order descent scale; unbiased batch gradients need not yield unbiased polar directions.

\subsection{T1: stochastic loss balance}\label{subsec:t1}

To identify a loss-neutral edge for Muon, we ask when its expected one-step loss change is zero from the spirit of classical EoS. This change has two parts: first-order descent and a finite-step remainder. Minibatch sampling changes the alignment of the polar direction with the population gradient. We use $\rho_b$ to measure the retained descent and $s_b^{\mathrm M}$ to measure curvature along the update segment. Their balance defines T1.

\begin{definition}[Coherence and effective curvature]
\label{def:batch-coherence}
For the update in \cref{eq:stoch-update} with $\bG_k\neq\bm{0}$, define
\begin{equation}
\rho_b(\bW_k):=\frac{\E_b\ipF{\bG_k}{\bP_k}}{\nucnorm{\bG_k}}\in[-1,1]\,.
\label{eq:rho}
\end{equation}

The effective curvature is the finite-step loss remainder below. If $L$ is also $C^2$ near $\bW_k$, it has the stated small-step limit, with $\bH_k=\nabla^2L(\bW_k)$:
\begin{equation}
\begin{aligned}
s_b^{\mathrm M}(\bW_k;\eta)
&:=\frac{2\E_b\!\left[L(\bW_k-\eta\bP_k)-L(\bW_k)+\eta\ipF{\bG_k}{\bP_k}\right]}{\eta^2\nucnorm{\bG_k}} \xrightarrow{\eta\to0}
\frac{\E_b\ipF{\bP_k}{\bH_k[\bP_k]}}{\nucnorm{\bG_k}}\,.
\end{aligned}
\label{eq:stoch-curvature}
\end{equation}
\end{definition}
It is the expected directional Hessian curvature, normalized by $\nucnorm{\bG_k}$. The finite-step definition requires only the original $C^1$ assumption and finite expectations. Coherence measures the retained first-order descent. Operator--nuclear duality in \cref{eq:operator-nuclear-duality} bounds $\rho_b$ by $[-1,1]$. Together, $\rho_b$ and $s_b^{\mathrm M}$ give the exact one-step loss change. The proofs are deferred to Appendix~\ref{app:det-details}.

\begin{figure}[!t]
\centering
\subfloat[\small MLP: loss balance\label{fig:controlled-mlp-t1}]{\includegraphics[width=.24\linewidth]{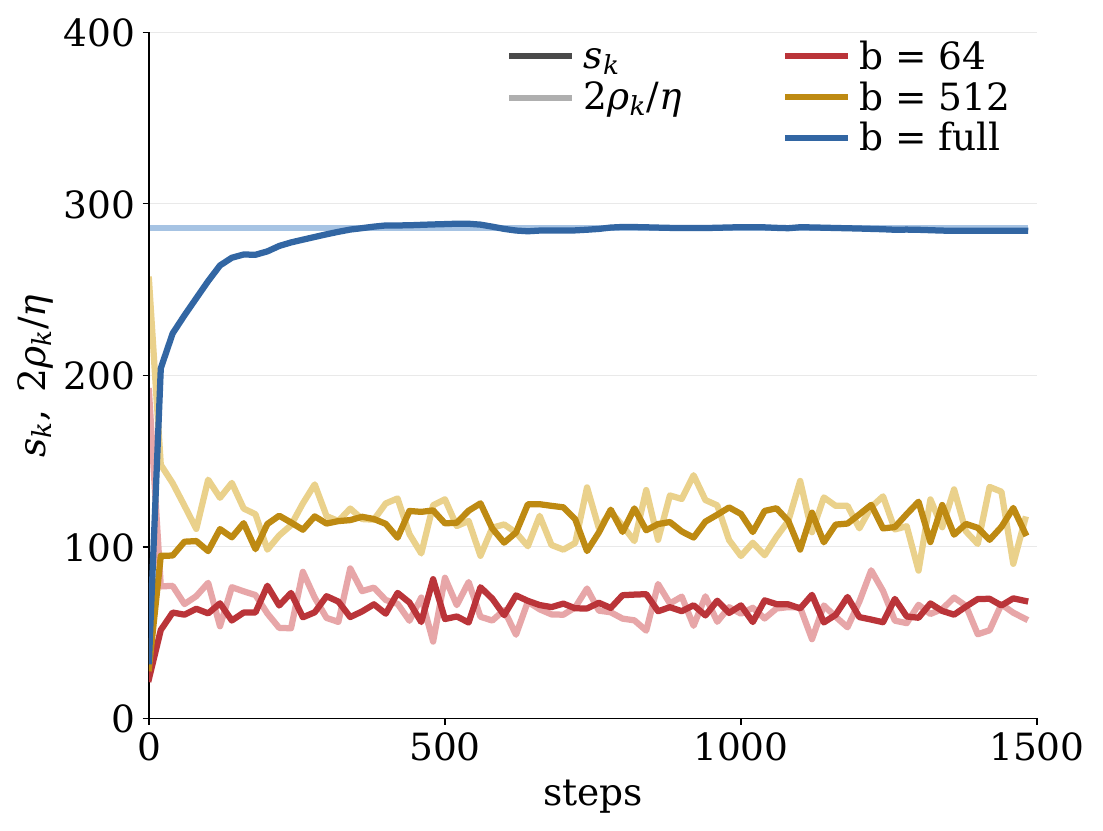}}
\hfill
\subfloat[\small CNN: loss balance\label{fig:controlled-cnn-t1}]{\includegraphics[width=.24\linewidth]{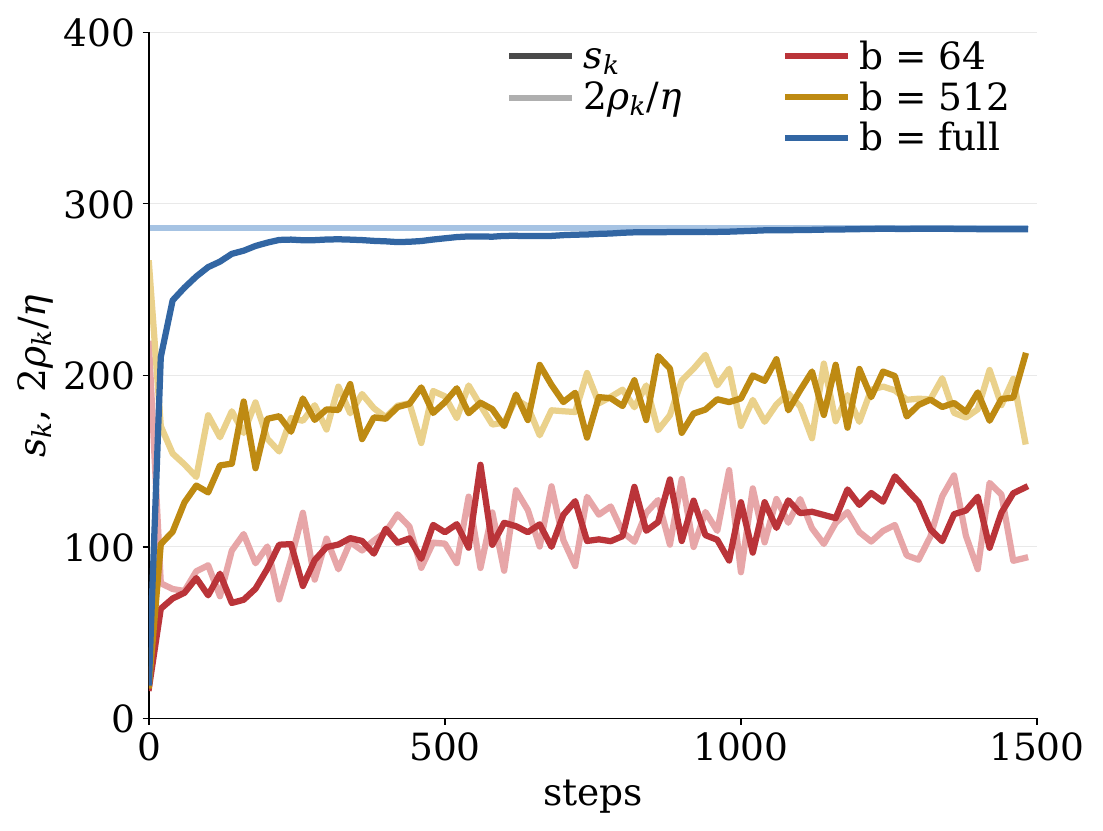}}
\hfill
\subfloat[\small MLP: $c_{k,b}^{\mathrm{pol}}$ directions\label{fig:controlled-mlp-t2}]{\includegraphics[width=.24\linewidth]{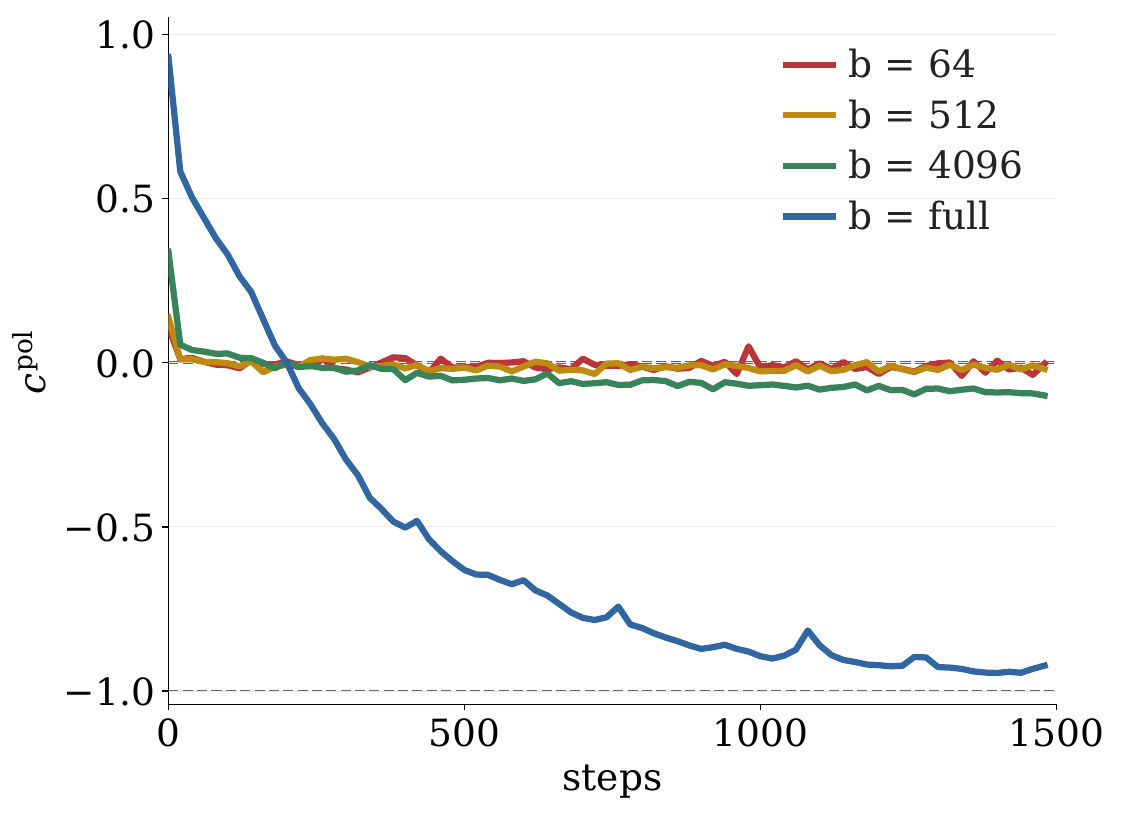}}
\hfill
\subfloat[\small CNN: $c_{k,b}^{\mathrm{pol}}$ directions\label{fig:controlled-cnn-t2}]{\includegraphics[width=.24\linewidth]{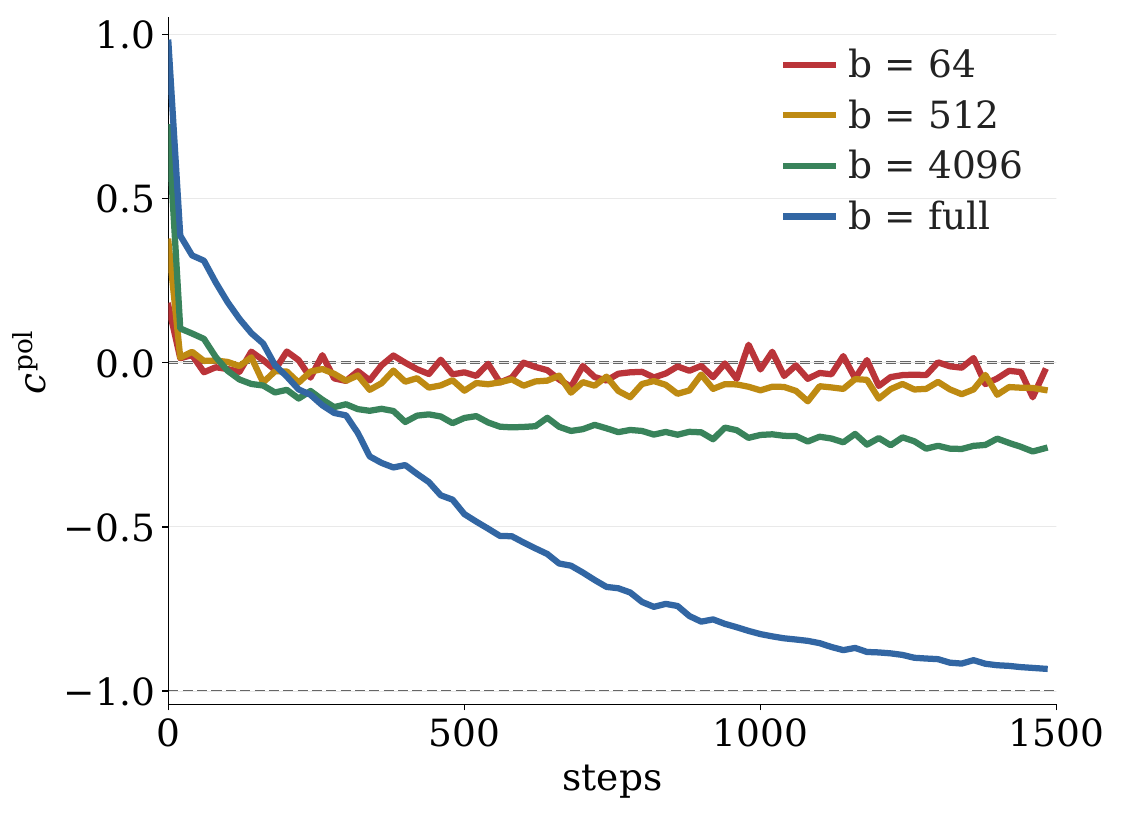}}
\caption{\textbf{Loss balance and update alignment across batch sizes.}
Results are shown for MLP and CNN experiments with SVD polar updates at $\eta=0.007$.
(a) and (b) compare conditional curvature $\hat s_b^{\mathrm M}$ with $2\hat\rho_b/\eta$ on MLP and CNN, respectively;
(c) and (d) show global consecutive-update cosines on MLP and CNN, respectively.
Dark and light curves in (a) and (b) denote curvature and boundary, respectively.
The corresponding results on Transformer appear in
\cref{fig:intro-transformer-balance,fig:introduction_muon_130M:a}.}
\label{fig:controlled-diagnostics}
\end{figure}

\begin{theorem}[Conditional loss identity]
\label{thm:stoch-loss}
Let $L:\R^{m\times n}\to\R$ be continuously differentiable. Consider the update in \cref{eq:stoch-update} with $\bG_k=\nabla L(\bW_k)\neq\bm{0}$ and $\eta>0$, we have
\begin{equation}
\E_b\!\left[L(\bW_{k+1})-L(\bW_k)\mid\bW_k\right]
=\frac{\eta^2\nucnorm{\bG_k}}2
\left(s_b^{\mathrm M}(\bW_k)-\frac{2\rho_b(\bW_k)}\eta\right).
\label{eq:stoch-loss}
\end{equation}
The conditional expected loss is nonincreasing if and only if $s_b^{\mathrm M}\leq2\rho_b/\eta$. Its boundary is
\begin{equation}
 s_b^{\mathrm M}=2\rho_b/\eta\,.
\label{eq:moving-edge}
\end{equation}
\end{theorem}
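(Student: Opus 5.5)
The identity is essentially algebraic: the definitions of $\rho_b$ and $s_b^{\mathrm M}$ in \cref{def:batch-coherence} were built so that their combination reproduces the expected one-step loss change. I would first condition on $\bW_k$, so that $\bG_k$ is fixed and $\bP_k=\Pol(\widehat{\bG}_k)$ is the only source of randomness. I would then use the update in \cref{eq:stoch-update} to write $L(\bW_{k+1})-L(\bW_k)=L(\bW_k-\eta\bP_k)-L(\bW_k)$. Adding and subtracting $\eta\ipF{\bG_k}{\bP_k}$ inside the expectation splits this into two terms:
\begin{equation*}
\E_b\!\left[L(\bW_k-\eta\bP_k)-L(\bW_k)+\eta\ipF{\bG_k}{\bP_k}\right]-\eta\,\E_b\ipF{\bG_k}{\bP_k}.
\end{equation*}
Linearity of expectation is legitimate here under the standing assumption that these expectations are finite. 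That assumption is automatic for the inner product, since $\opnorm{\bP_k}\le1$ gives $|\ipF{\bG_k}{\bP_k}|\le\nucnorm{\bG_k}$.

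Next I would substitute the definitions. By \cref{eq:stoch-curvature}, the first term equals $\tfrac{\eta^2}{2}\nucnorm{\bG_k}\,s_b^{\mathrm M}(\bW_k)$. By \cref{eq:rho}, the second term equals $-\eta\nucnorm{\bG_k}\rho_b(\bW_k)$. Both normalizations are well defined because $\bG_k\neq\bm0$ implies $\nucnorm{\bG_k}>0$. Factoring out $\tfrac{\eta^2}{2}\nucnorm{\bG_k}$ gives $s_b^{\mathrm M}-2\rho_b/\eta$, which is exactly \cref{eq:stoch-loss}. The bound $\rho_b\in[-1,1]$ follows from the operator--nuclear duality \cref{eq:operator-nuclear-duality}, though the identity itself does not need it.

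For the monotonicity claim, I would use that the prefactor $\tfrac{\eta^2}{2}\nucnorm{\bG_k}$ is strictly positive because $\eta>0$ and $\bG_k\neq\bm0$. Hence the conditional expected change is $\le0$ if and only if $s_b^{\mathrm M}\le 2\rho_b/\eta$, and it vanishes exactly on the boundary \cref{eq:moving-edge}.

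The only real obstacle is measure-theoretic bookkeeping rather than analysis. One must ensure that $L(\bW_k-\eta\bP_k)$ is integrable, so that the split expectation is not of the form $\infty-\infty$, and that $\bP_k$ is measurable even though $\Pol$ is discontinuous at rank changes. I would handle integrability by invoking the finite-expectation assumption stated after \cref{eq:stoch-curvature}. I would handle measurability by noting that $\Pol$ is Borel measurable, being a pointwise limit of continuous maps such as $\bG(\bG^\top\bG+\epsilon\bm I)^{-1/2}$ as $\epsilon\to0$. No Taylor expansion is needed, because the $C^1$ hypothesis only ensures that $\bG_k$ exists and that the small-step limit in \cref{eq:stoch-curvature} is a separate remark.
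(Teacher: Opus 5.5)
Your proposal is correct and takes the same route as the paper: rearrange the definition of $s_b^{\mathrm M}$ in \cref{eq:stoch-curvature}, substitute \cref{eq:rho}, and read off the sign from the strictly positive prefactor $\eta^2\nucnorm{\bG_k}/2$. Your integrability and measurability remarks are sound additions that the paper leaves implicit, including the approximation of $\Pol$ by the continuous maps $\bG(\bG^\top\bG+\epsilon\bm I)^{-1/2}$.
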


At full batch, $\rho_{\mathrm{full}}=1$, so T1 recovers $2/\eta$. The curvature equals the tight operator-norm directional smoothness of \citet{islamov2026noneuclidean} divided by $\nucnorm{\bG_k}$. Besides, the above loss identity also gives a curvature constraint over time. Following the telescoping argument for GD in \citet{litman2026originedgestability}, we derive $\limsup_{k\to\infty:\bG_k\ne\bm0}s_k^{\mathrm M}\geq\frac2\eta$ if $\sum_{k<K:\bG_k\ne\bm0}\eta^2\nucnorm{\bG_k} \to \infty$. That means, curvature cannot eventually stay a fixed distance below $2/\eta$.
The proofs are deferred to Appendix~\ref{app:det-details}.

\noindent {\bf Controlled-network empirical validation.}
Our experiments in \cref{fig:intro-transformer-balance} on Transformer and 
\cref{fig:controlled-mlp-t1} on MLP and \cref{fig:controlled-cnn-t1} on CNN study how batch size changes both effective curvature and the coherence-dependent loss boundary. One can see that, the curvature approaches and oscillates around this boundary, marking near-zero conditional loss increments and changes in their sign. Larger batch size (ranging from 64 to 512 and full batch) encourages larger $\rho_b$ and induces less fluctuation. Full-batch exact-polar updates make $\rho_{\mathrm{full}}=1$, so the boundary reduces to $2/\eta$. These experiments use numerical SVD-polar updates; more experimental settings about architectures, probes, and normalization are given in Appendix~\ref{app:experiments}.

\subsection{T2: the temporal-orthogonality boundary}
\label{subsec:t2}
For scalar quadratic GD, directions reverse when $\eta\lambda>1$; equal-amplitude reversal and loss balance coincide at $\eta\lambda=2$. Muon loss curvature does not determine the alignment of successive polar directions. We therefore measure their cosine as a separate diagnostic.

\begin{definition}[Update-direction cosine]
\label{def:polar-cos}
For two consecutive nonzero directions, define
\begin{equation}
c_{k,b}^{\mathrm{pol}}:=\frac{\ipF{\bP_k}{\bP_{k+1}}}{\normF{\bP_k}\normF{\bP_{k+1}}}\in[-1,1]\,.
\label{eq:stoch-cos}
\end{equation}
\end{definition}
T2 is the sign boundary $c_{k,b}^{\mathrm{pol}}=0$. It separates acute from obtuse alignment. A discrete trajectory may cross this boundary without attaining zero. Exact reversal $c_{k,b}^{\mathrm{pol}}=-1$ is a stronger condition, which implies a two-step return on polar directions and parameters in the sense of classical EoS.
\begin{equation}
c_{k,b}^{\mathrm{pol}}=-1
\iff \bP_{k+1}=-\bP_k
\iff \bW_{k+2}=\bW_k.
\label{eq:exact-update-reversal}
\end{equation}
A negative cosine does not imply this two-step return or identify a stability edge by itself. 

\noindent {\bf Controlled-network empirical validation.}
As shown in \cref{fig:introduction_muon_130M:a} on Transformer, \cref{fig:controlled-mlp-t2} on MLP, and \cref{fig:controlled-cnn-t2} on CNN, larger batches give more negative alignment in the displayed runs, while smaller batches remain closer to orthogonality. Small-batch trajectories can remain weakly negative over long intervals. No displayed global trajectory reaches exact reversal. 

Besides, we also study the order in which T1 and T2 occur. Appendix~\ref{app:secant-details} makes the distinction precise through secant relations between consecutive updates: For GD, T2 boundary occurs at the directional secant response $\overline s_k^{\mathrm{GD}}=1/\eta$; Full-batch exact-polar Muon instead leads to the T2 condition $\overline s_k^{\mathrm M}=\vartheta_k/\eta$, with negative alignment when $\overline s_k^{\mathrm M}>\vartheta_k/\eta$. Here $\vartheta_k$ depends on the singular-value geometry of consecutive gradients and need not equal one. Thus, Muon's temporal-orthogonality boundary is geometry-dependent rather than a universal $1/\eta$ threshold. \cref{fig:small-network-onsets-panel-a,fig:small-network-onsets-panel-b} empirically shows no consistent ordering: at $b=512$, T2 precedes T1 in the MLP and CNN, whereas at $b=4096$, T1 precedes T2. 

\noindent {\bf Is weak alignment from minibatch sampling or learning dynamics?}
The small-batch trajectories in
\cref{fig:introduction_muon_130M:a,fig:controlled-mlp-t2,fig:controlled-cnn-t2}
and the 130M trajectory in \cref{fig:introduction_muon_130M:c}
have $c_{k,b}^{\mathrm{pol}}$ close to zero.
This observation alone cannot tell us whether the weak alignment comes from Muon's training dynamics or simply from minibatch randomness, since random high-dimensional vectors are nearly orthogonal.

To distinguish this, we compare frozen-state and consecutive-step directions in a Tiny Transformer with no-momentum NS-5 Muon, batch size eight, and two seeds. Frozen-state mean cosines are small and positive, while stable-stage temporal means are negative. This supports minibatch dispersion as a source of weak alignment and state changes as a source of negative bias. \cref{fig:fixedcp-noise} in the appendix reports the results, see Appendix~\ref{app:fixedcp-protocol} for details.

\begin{figure}[!t]
\centering
\subfloat[MLP\label{fig:small-network-onsets-panel-a}]{\includegraphics[width=.32\linewidth]{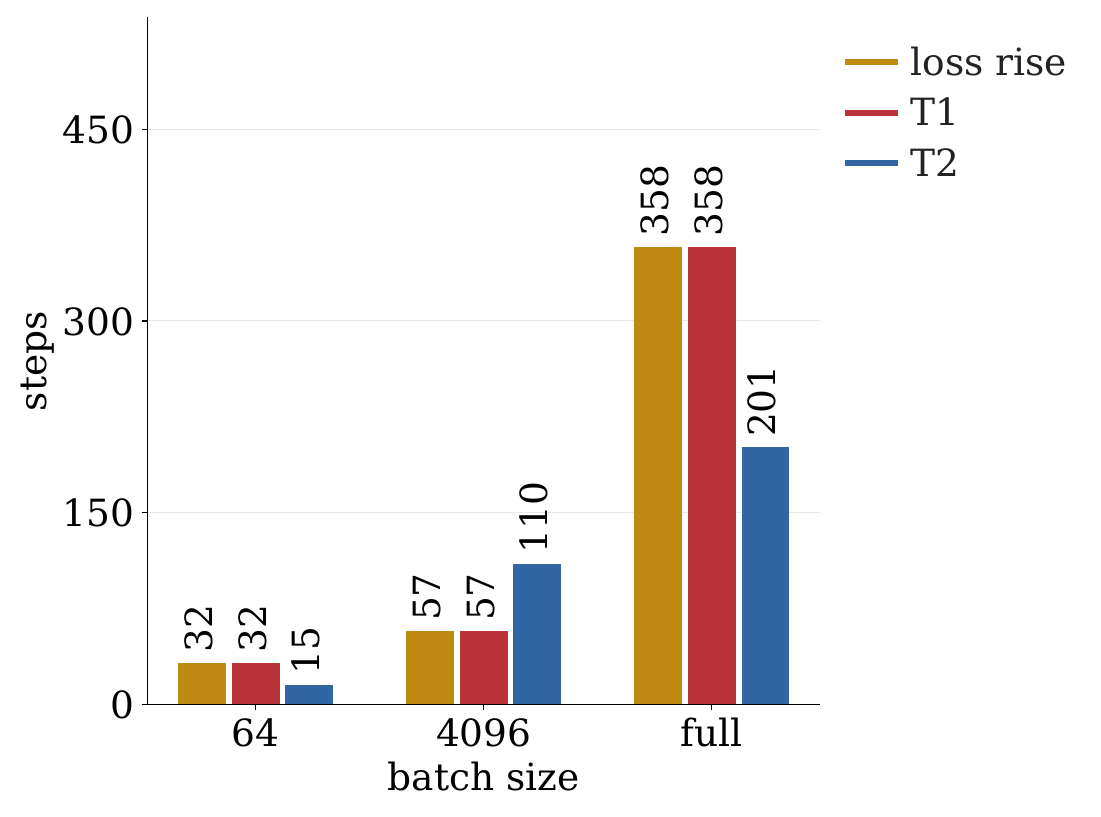}}\hfill
\subfloat[CNN\label{fig:small-network-onsets-panel-b}]{\includegraphics[width=.32\linewidth]{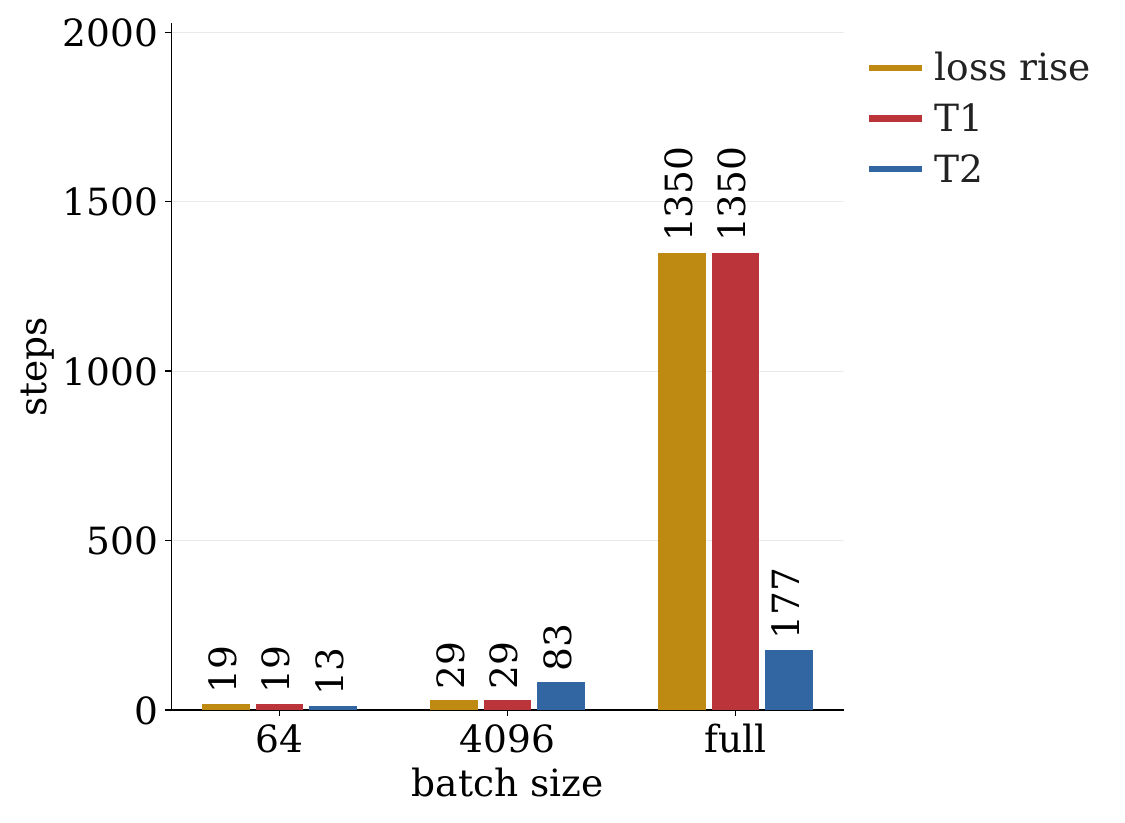}}\hfill
\subfloat[Transformer\label{fig:small-network-onsets-panel-c}]{\includegraphics[width=.32\linewidth]{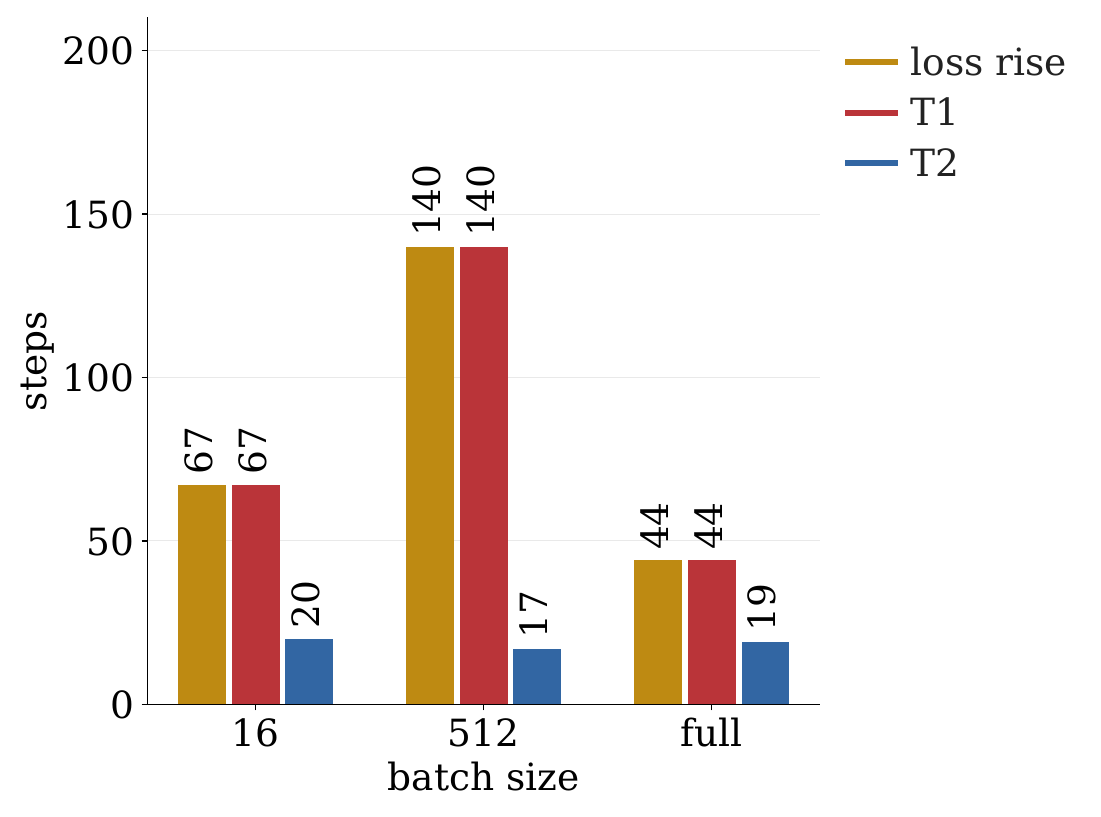}}
\caption{\textbf{Diagnostic events in controlled networks.} These experiments use the MLP, CNN, and Transformer at $\eta=0.007$ with the listed batch sizes; they measure the first loss-rise, T1, and T2 events. Each bar reports the first qualifying step under the recorded tolerances.}
\label{fig:small-network-onsets}
\end{figure}

\subsection{Toy example: A quadratic linear model of separation}
\label{sec:noncommuting-quadratic}\label{sec:noisy-quadratic}
Here we take a toy model, a quadratic linear model, as an example, to calculate the exact formulation of $s_k^{\mathrm M}$ and $c_k^{\mathrm{pol}}$. It aims to quantitatively demonstrate the separation between the loss curvature and alignment signs. We consider a matrix quadratic~\citep{meterez2026defensequadraticmodel} training loss:
\[
L_{\bm{X},\bm{E}}(\bW)=\tfrac12\normF{\bW \bm{X}-\bm{Y}}^2\,, \quad \bm Y = \bm W_{\star} \bm X + \bm E \,,
\]
where $\bm W_{\star} \in \mathbb{R}^{m \times d}$ is the target parameter matrix that requires to be estimated, $\bm{X}\in\R^{d\times n}$ is the data matrix with its label matrix $\bm Y \in \mathbb{R}^{m \times n}$, $\bm{X}\bm{X}^\top\succ0$, $\bW\in\R^{m\times d}$ is the parameter matrix, and $\bm{E}\in\R^{m\times n}$ is the label noise matrix. Clearly, loss curvature and alignment will depend on different spectral quantities. Use full-batch Muon with $m\geq d$ and a full-column-rank gradient. Let $\bm{S}_k=(\bG_k^\top\bG_k)^{1/2}$ and assume $\bm{S}_k-\eta\bm{X}\bm{X}^\top$ is nonsingular, these two thresholds can be given by
\begin{equation}
s_k^{\mathrm M}=\frac{\normF{\bm{X}}^2}{\operatorname{tr}(\bm{S}_k)}\,,\qquad
c_k^{\mathrm{pol}}=1-\frac{2\#_{n_-}(\bm{S}_k-\eta\bm{X}\bm{X}^\top)}d\,,
\label{eq:quadratic-summary}
\end{equation}
where $\#_{n_-}(\bm A)$ counts negative eigenvalues of a matrix $\bm A$. One can see that curvature depends on a trace; alignment depends on a sign count.
Clearly, the curvature alone cannot determine alignment, even at the same learning rate and full coherence.
For instance, set $\bm{X}\bm{X}^\top=\bm{I}_3$ and take $\bm{S}_k$ to be either $\eta\operatorname{diag}(0.8,0.8,0.8)$ or $\eta\operatorname{diag}(0.1,1.1,1.2)$. Both give $s_k^{\mathrm M}=5/(4\eta)<2/\eta$, but their cosines are $-1$ and $1/3$, respectively. More analysis and discussion can be found in Appendix~\ref{app:matrix-quadratic}.

\section{Language model training at the scale}
\label{sec:practice}
In this section, we study language-model pretraining at the scale of 22M, 130M, and 1B. The used training strategy is with fresh minibatches, a warmup--stable--decay (WSD) schedule~\citep{hu2024minicpm}, approximate polar updates, and AdamW on auxiliary parameters. The 22M and 130M experiments change the learning rate at a shared checkpoint within each model. The 22M experiments also change training batch size. Separate 130M runs track loss balance and update alignment throughout training. The 1B run measures loss and direction without loss balance (T1) due to compute limit.
We present our main results based on the 130M Llama-like LLM pre-training.

\begin{figure}[!t]
\centering
\subfloat[$\eta_{\rm peak}=0.04$\label{fig:llm-lr-T1-004}]{\includegraphics[width=.32\linewidth]{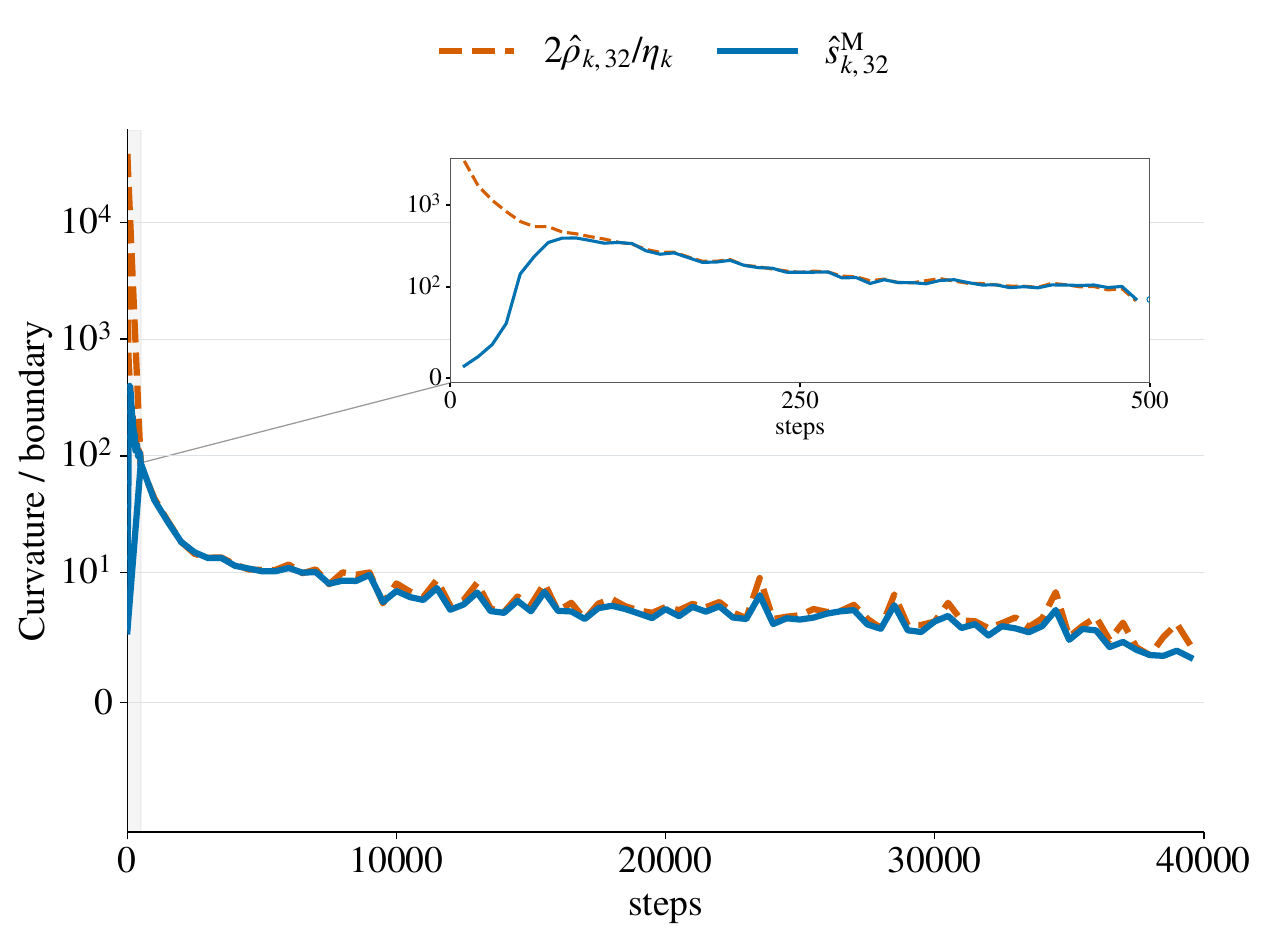}}
\hfill
\subfloat[$\eta_{\rm peak}=0.08$\label{fig:llm-lr-T1-008}]{\includegraphics[width=.32\linewidth]{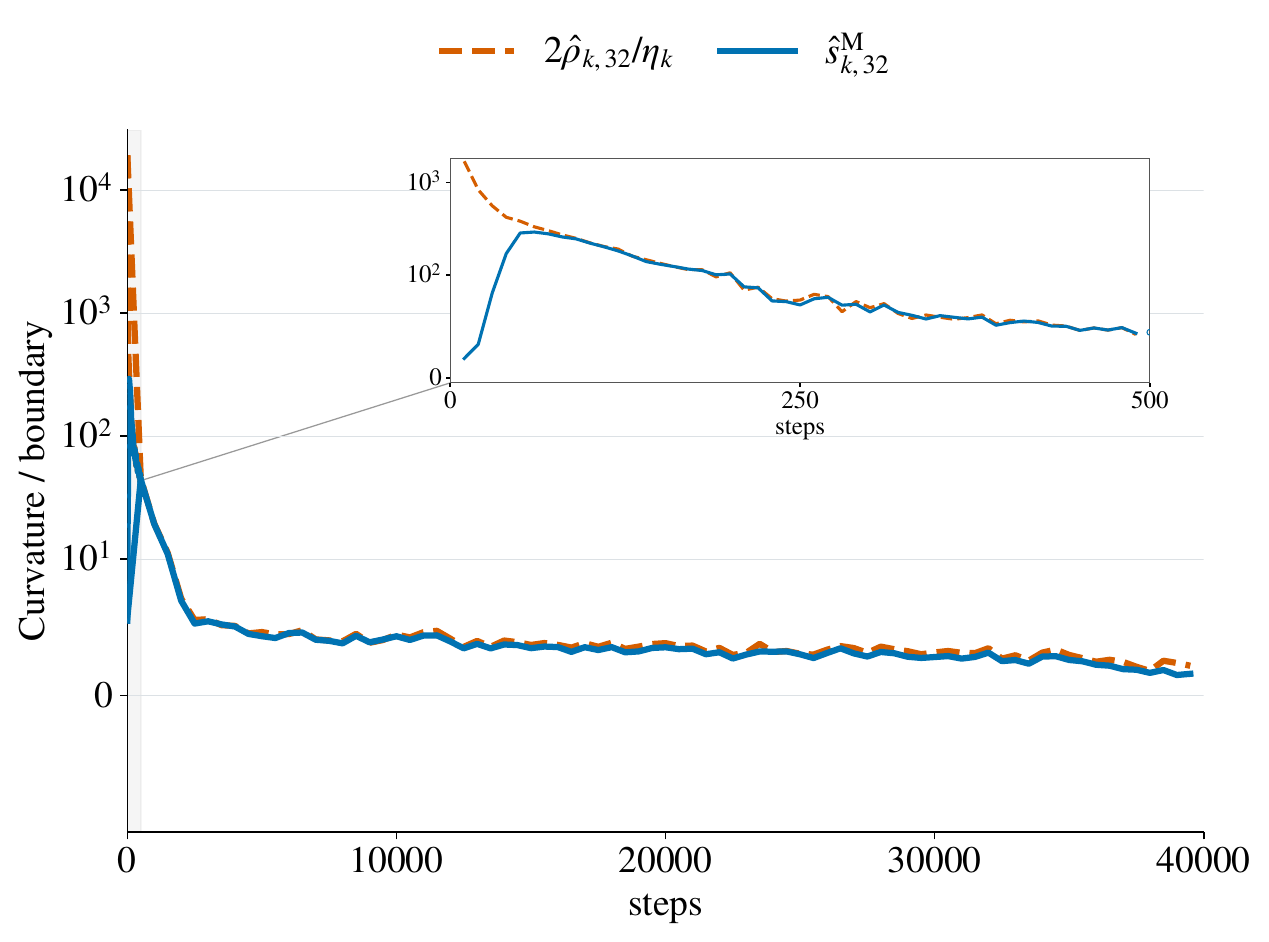}}
\hfill
\subfloat[$\eta_{\rm peak}=0.12$\label{fig:llm-lr-T1-012}]{\includegraphics[width=.32\linewidth]{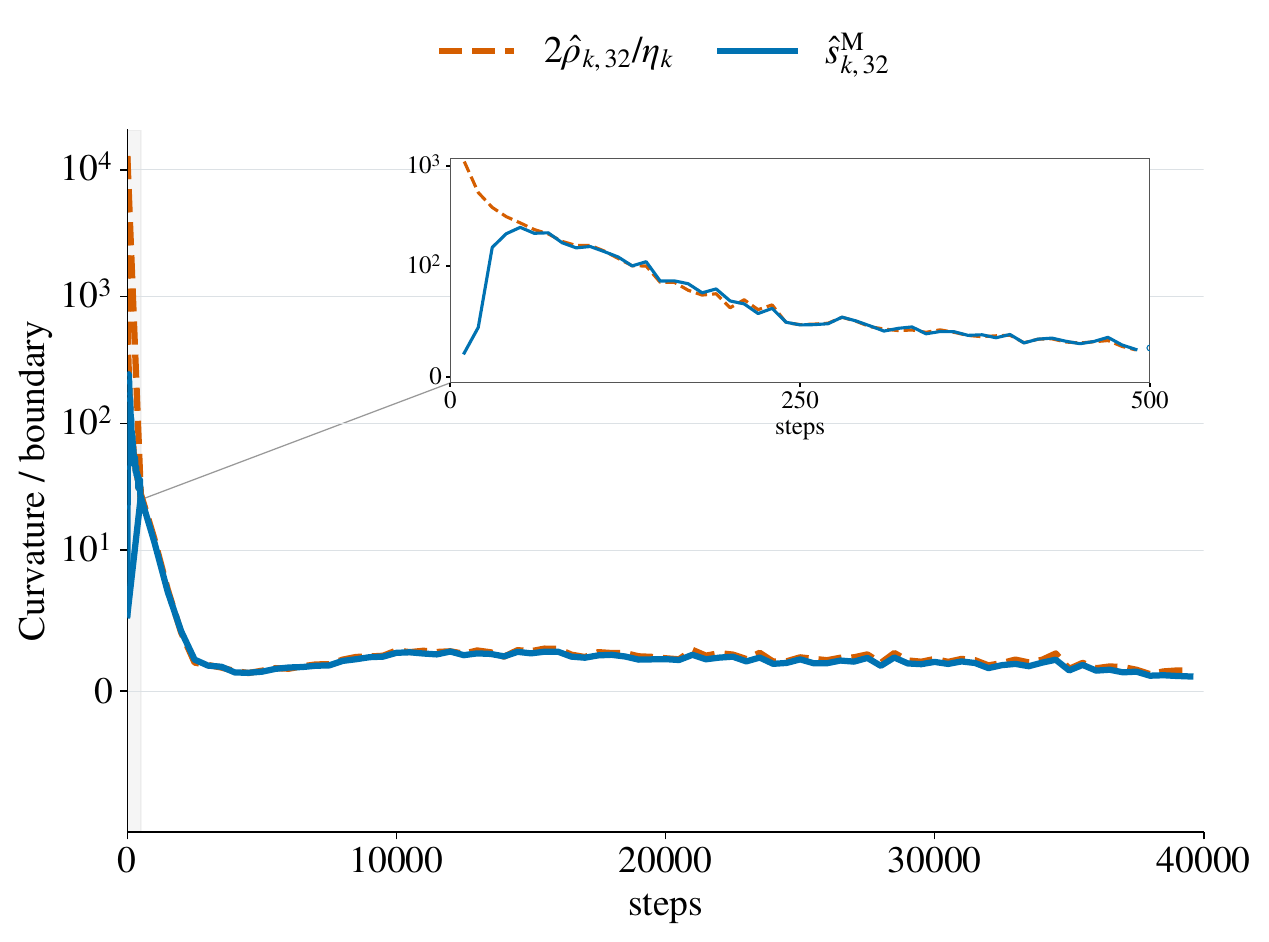}}
\par\smallskip
\subfloat[$\eta_{\rm peak}=0.04$\label{fig:llm-lr-cosine-004}]{\includegraphics[width=.32\linewidth]{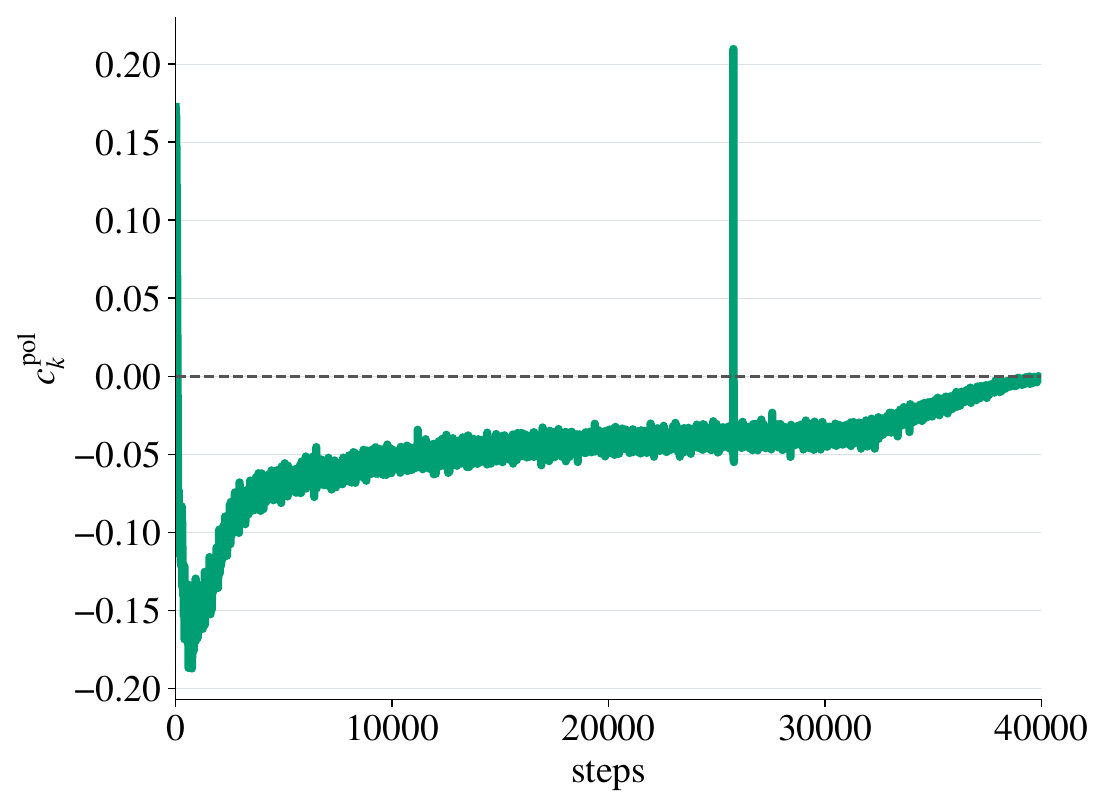}}
\hfill
\subfloat[$\eta_{\rm peak}=0.08$\label{fig:llm-lr-cosine-008}]{\includegraphics[width=.32\linewidth]{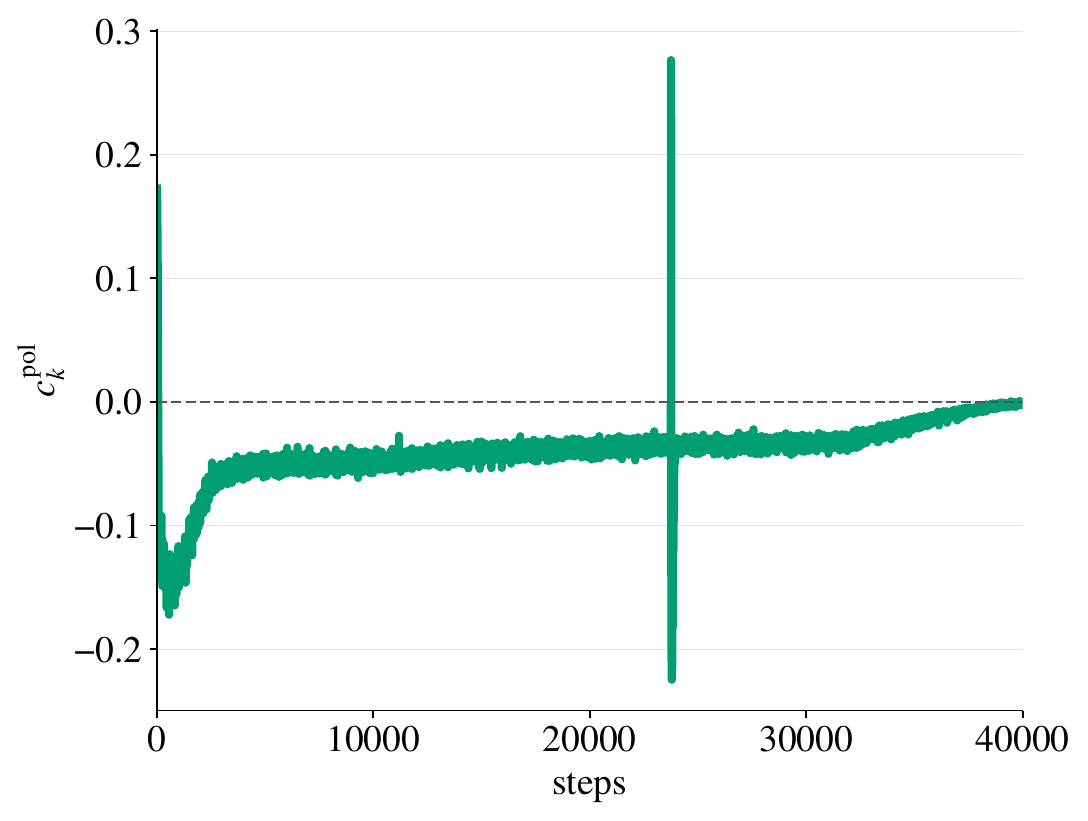}}
\hfill
\subfloat[$\eta_{\rm peak}=0.12$\label{fig:llm-lr-cosine-012}]{\includegraphics[width=.32\linewidth]{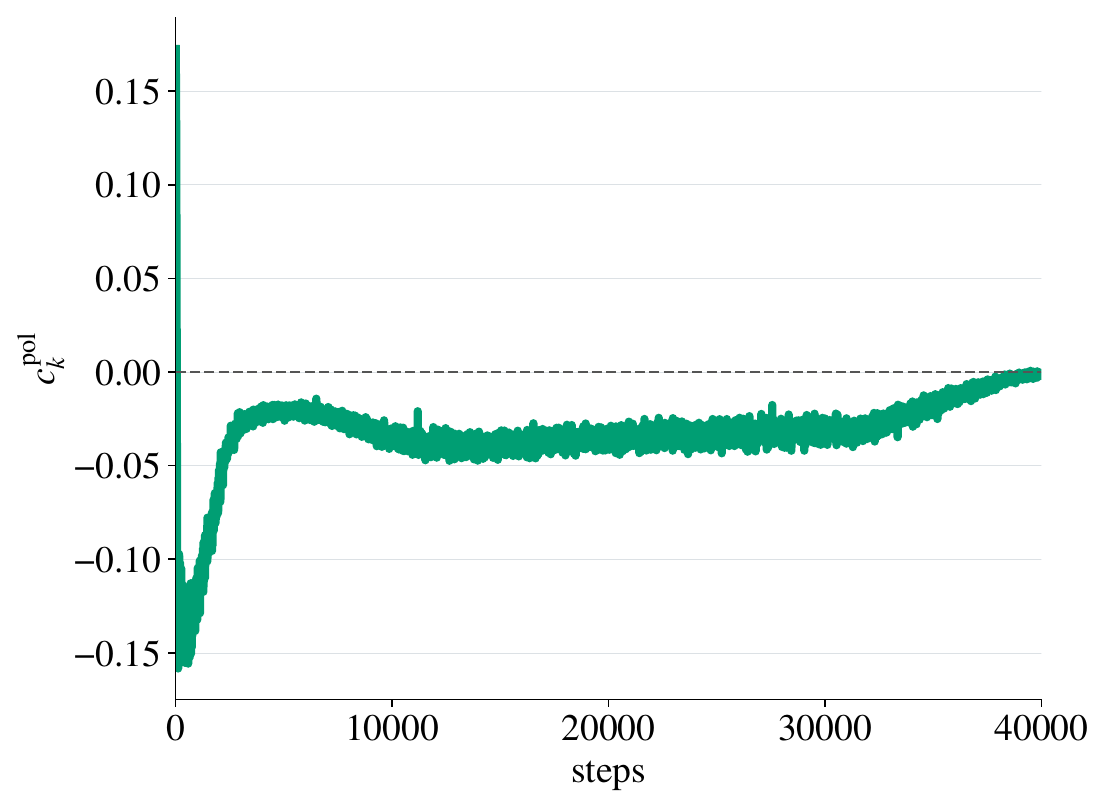}}
\caption{\textbf{Loss balance and update alignment across learning rates on 130M LLM pre-training.} The 130M runs use no-momentum NS-5 Muon and batch size 32. Columns show peak learning rates $0.04$, $0.08$, and $0.12$, respectively. The top row compares conditional curvature with $2\hat\rho_{k,32}/\eta_k$; the bottom row shows global direction cosines. \cref{fig:introduction_muon_130M:d,fig:introduction_muon_130M:c} shows the learning rate $0.02$ baseline.}
\label{fig:llm-lr}
\end{figure}

\begin{figure}[!t]
\centering
\subfloat[$\eta_{\rm peak}=0.02$\label{fig:llm-batch-panel-a}]{\includegraphics[width=.24\linewidth]{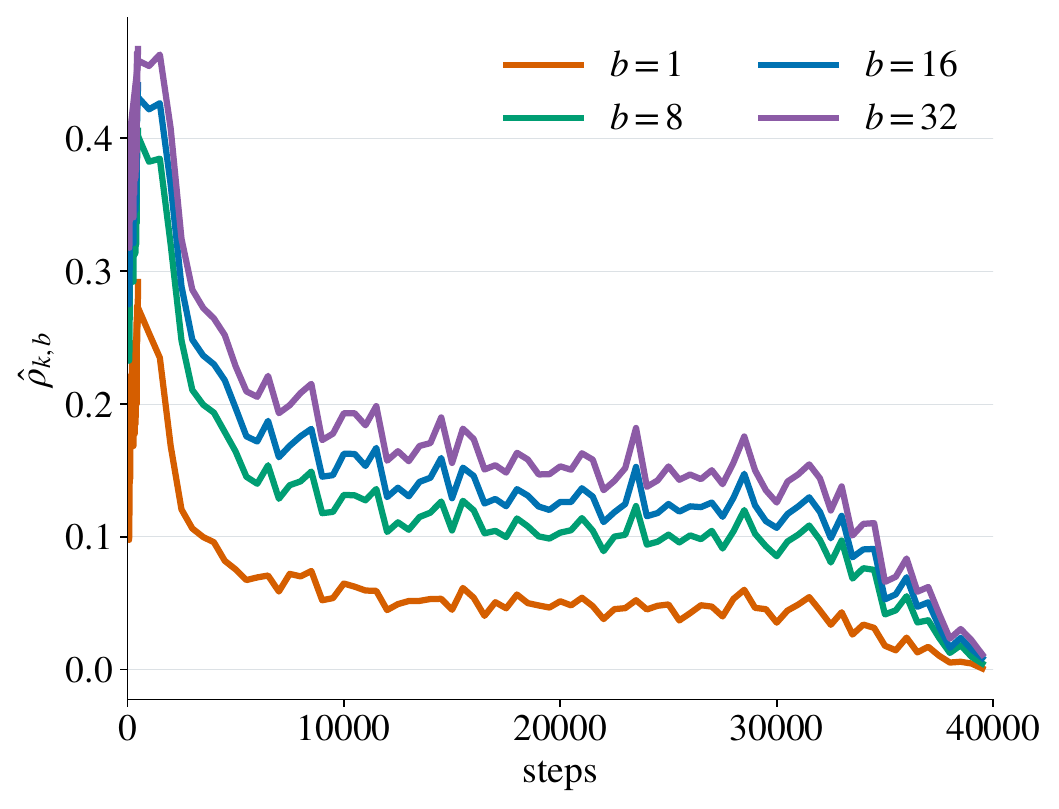}}\hfill
\subfloat[$\eta_{\rm peak}=0.04$\label{fig:llm-batch-panel-b}]{\includegraphics[width=.24\linewidth]{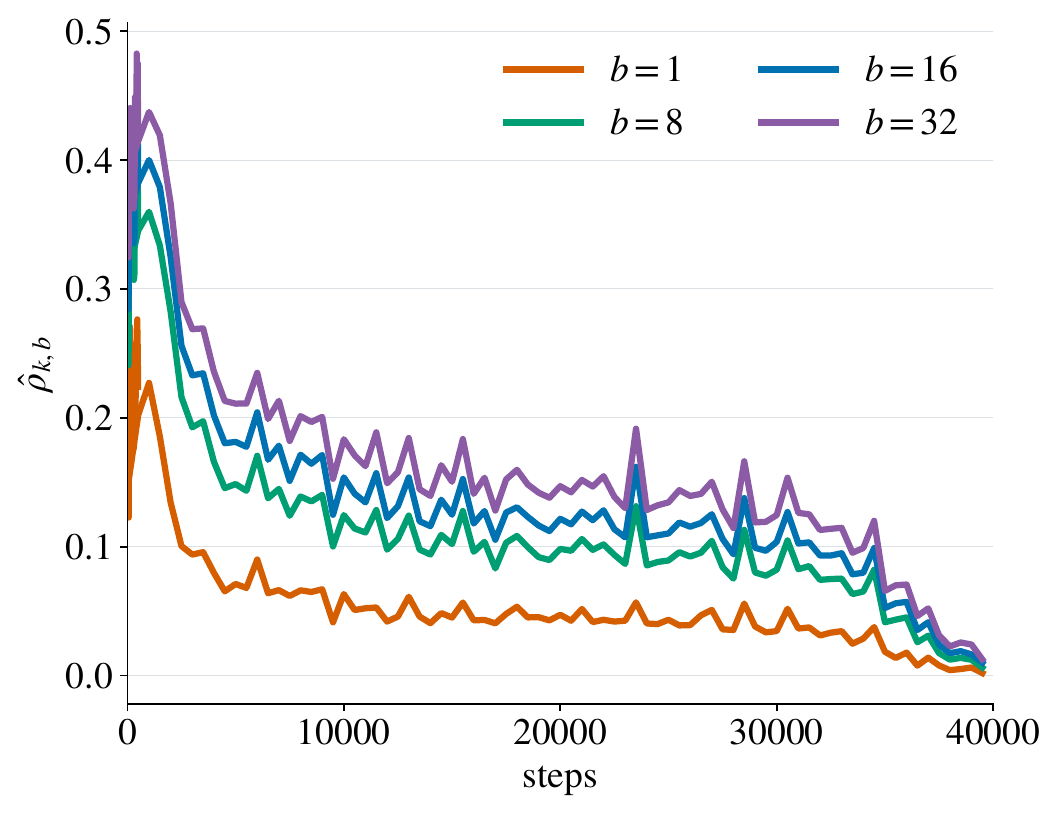}}
\hfill
\subfloat[$\eta_{\rm peak}=0.08$\label{fig:llm-batch-panel-c}]{\includegraphics[width=.24\linewidth]{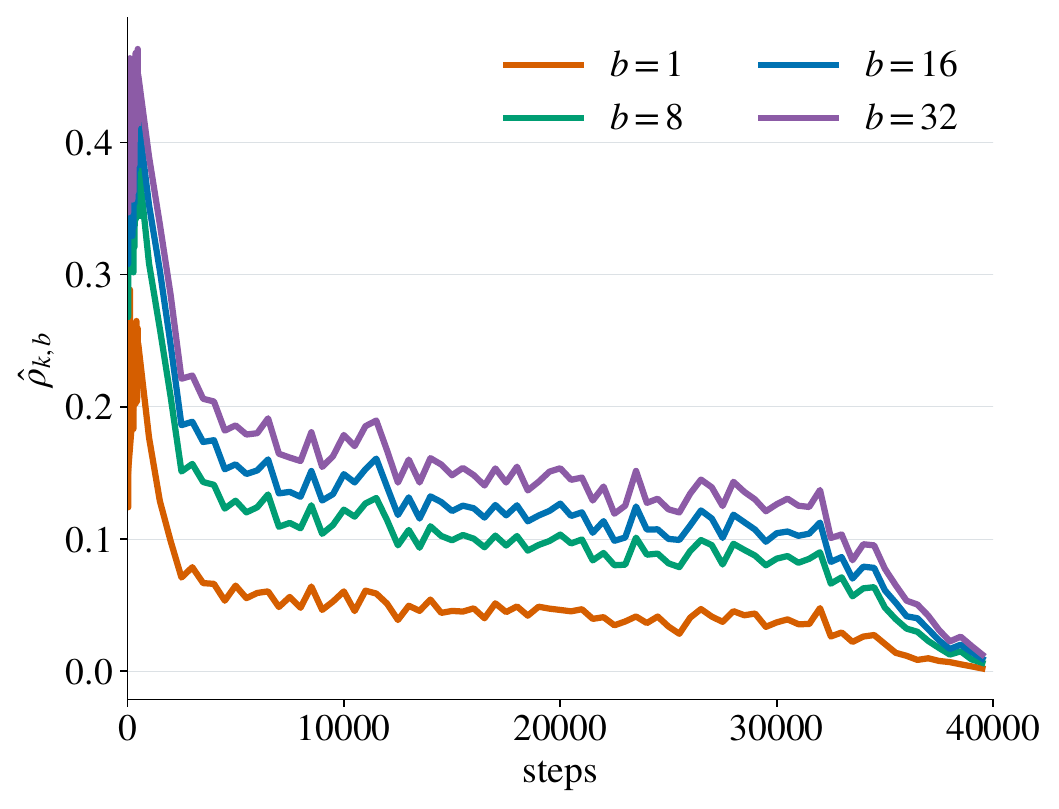}}\hfill
\subfloat[$\eta_{\rm peak}=0.12$\label{fig:llm-batch-panel-d}]{\includegraphics[width=.24\linewidth]{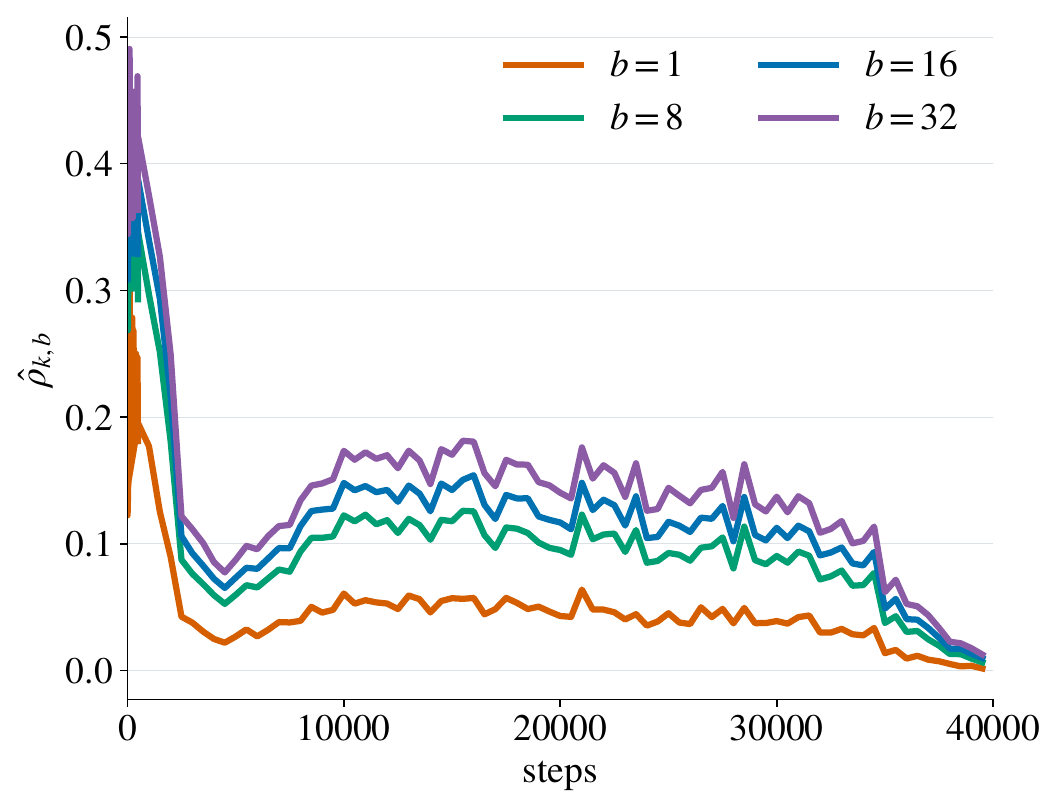}}
\caption{\textbf{Probe-batch effects on T1 quantities.} This experiment uses the 130M Llama-like LLM with training batch size 32; it measures reference-gradient coherence for probe batches $b=1,8,16,32$ at fixed checkpoints. (a)--(d) Results for peak Muon learning rates $0.02$, $0.04$, $0.08$, and $0.12$, respectively.}
\label{fig:llm-batch}
\end{figure}

\subsection{130M: loss balance and continued learning}
\label{subsec:llm-alignment}
We train a 130.7M Llama-like LLM on FineWeb~\citep{penedo2024fineweb} with batch size 32 and sequence length 4,096, processing 5.23B tokens (see Appendix~\ref{app:llm-settings} for details). Four runs all use batch size 32, with Muon peak learning rates $0.02$, $0.04$, $0.08$, and $0.12$, respectively. We find that

\noindent {\bf Loss balance coexists with continued learning.}
Conditional curvature repeatedly lies near $2\hat\rho_{k,32}/\eta_k$, including during the constant-learning-rate stage
(\cref{fig:introduction_muon_130M:d,fig:llm-lr-T1-004,fig:llm-lr-T1-008,fig:llm-lr-T1-012}).
By the loss identity, this means that the conditional mean reference-loss increment is near zero.
Paired Muon-only probes also have positive mean increments at several checkpoints.
Yet validation loss of the complete Muon-plus-AdamW runs improves over longer horizons
(\cref{fig:llm-lr-loss-appendix} in the appendix).

\begin{wrapfigure}[18]{r}{0.43\textwidth}
\centering
\includegraphics[width=\linewidth]{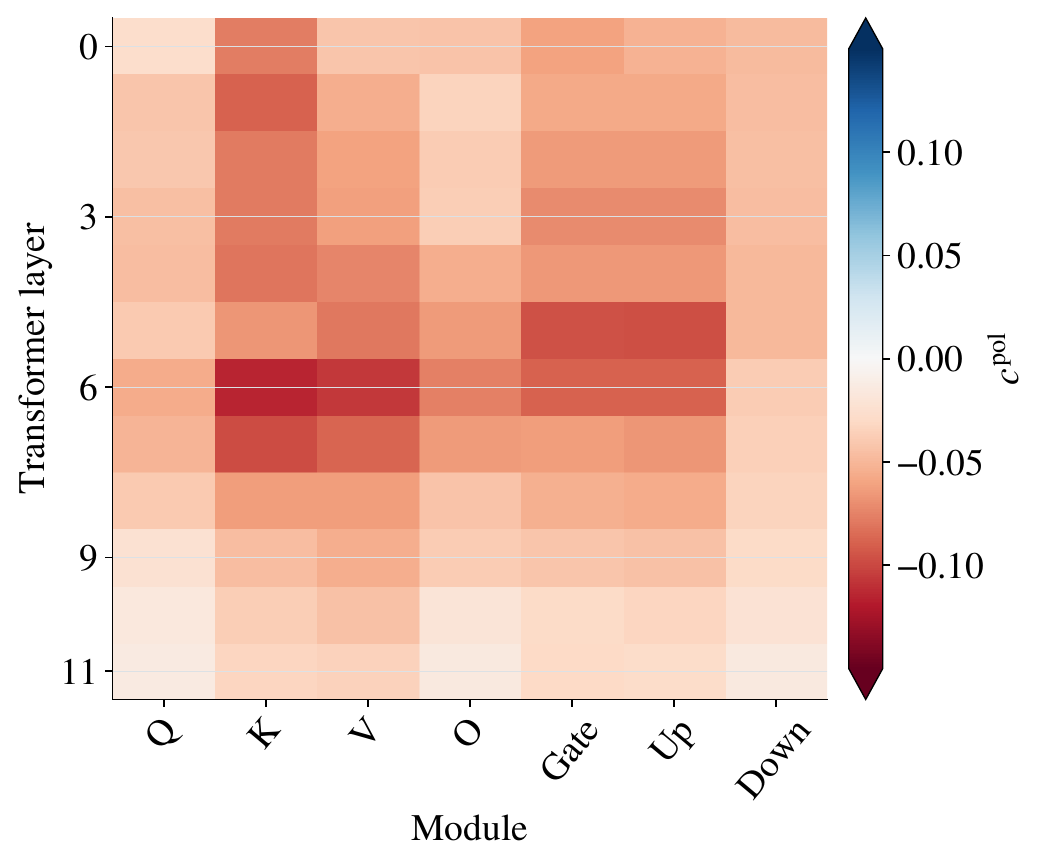}
\caption{\textbf{Modulewise alignment.} Median $c^{\mathrm{pol}}$ during the constant-learning-rate stage of the 130M run with peak rate $0.02$. }
\label{fig:llm-module-medians}
\end{wrapfigure}

\noindent {\bf Global directions stay near orthogonality.}
Across all four runs, sampled consecutive NS-5 directions are predominantly negatively aligned (\cref{fig:introduction_muon_130M:c,fig:llm-lr-cosine-004,fig:llm-lr-cosine-008,fig:llm-lr-cosine-012}). Full-horizon cosine medians range from $-0.048$ to $-0.032$, far from exact reversal at $-1$. Thus, loss-boundary tracking coexists with weak negative alignment and continued learning.

\noindent {\bf Negative alignment varies across layers.}
\label{subsec:llm-layers}
At peak learning rate $0.02$, middle-layer key and value matrices have more negative cosine medians than many query matrices (\cref{fig:llm-module-medians}). Weak global alignment therefore hides differences across layers and matrix types. These temporal-alignment measurements complement the within-layer and cross-layer curvature analysis of \citet{wang2026muon}, providing a separate view of Muon's behavior across parameter blocks.  \cref{fig:llm-layer-panel-a} in the appendix shows the block trajectories.

\noindent {\bf Probe batch size changes stochastic coherence.}
\label{subsec:llm-coherence}
\cref{fig:llm-batch} shows that larger probe batches generally strengthen reference-gradient coherence and shift $2\hat\rho_{k,b}/\eta_k$ trajectories.

\subsection{Learning-rate and batch-size perturbations}
\label{subsec:llm-perturbations}
Here we conduct a perturbation experiment by setting different peak learning rates in WSD to monitor how the validation loss and $c_{k,b}^{\mathrm{pol}}$ will change and build their connections.

In our experiments, we change the Muon learning rate at step 4,000 for the 22M model and step 7,000 for the 130M model. Each experiment branches from rate $0.04$ to $0.02$, $0.04$, or $0.08$. Training batch size stays at 16 for 22M and 32 for 130M. The auxiliary AdamW schedule is unchanged across branches. \cref{fig:perturb22m-lr} shows validation loss and consecutive-direction cosines. Appendix~\ref{app:perturbations} gives the protocols. We have the following findings:

\begin{figure}[h]
\centering
\subfloat[\small 22M: loss]{%
  \includegraphics[width=.24\linewidth]{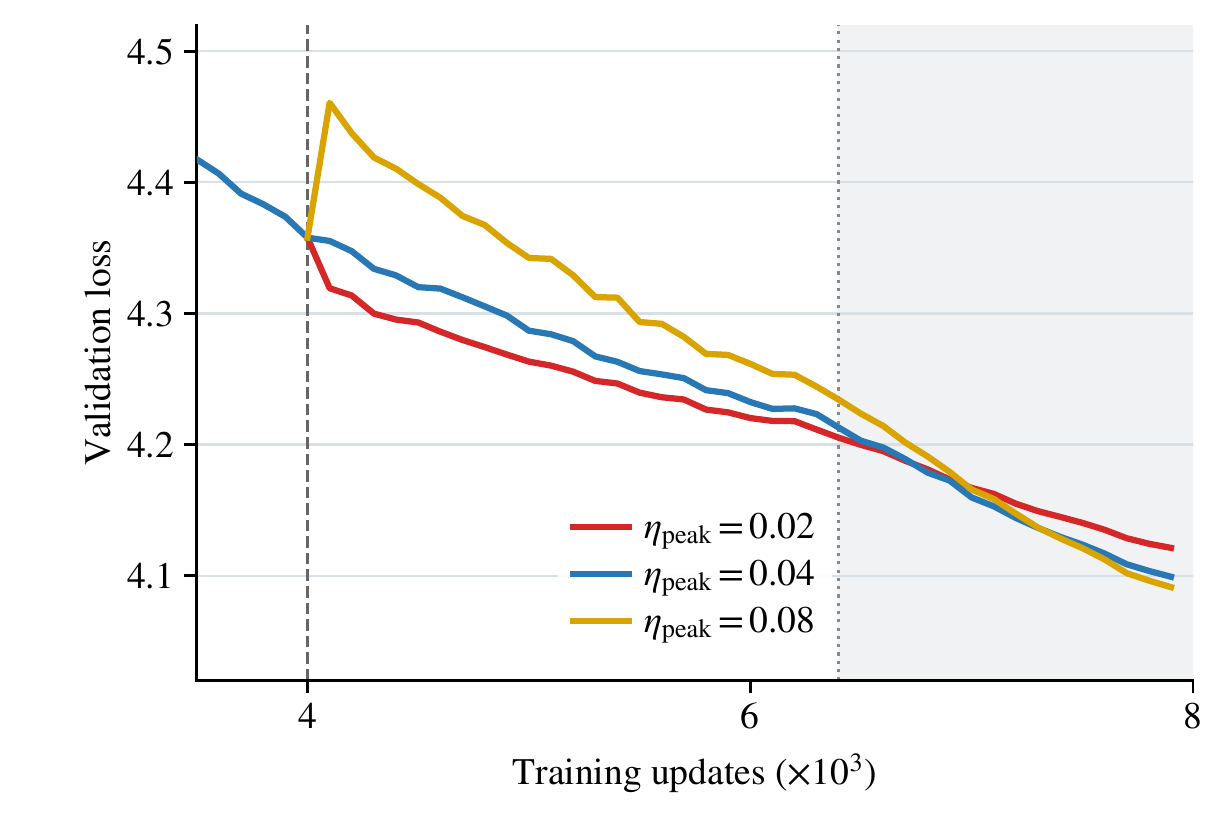}%
  \label{fig:perturb22m-lr-panel-a}%
}\hfill%
\subfloat[\small 22M: $c_{k,b}^{\mathrm{pol}}$ directions]{%
  \includegraphics[width=.24\linewidth]{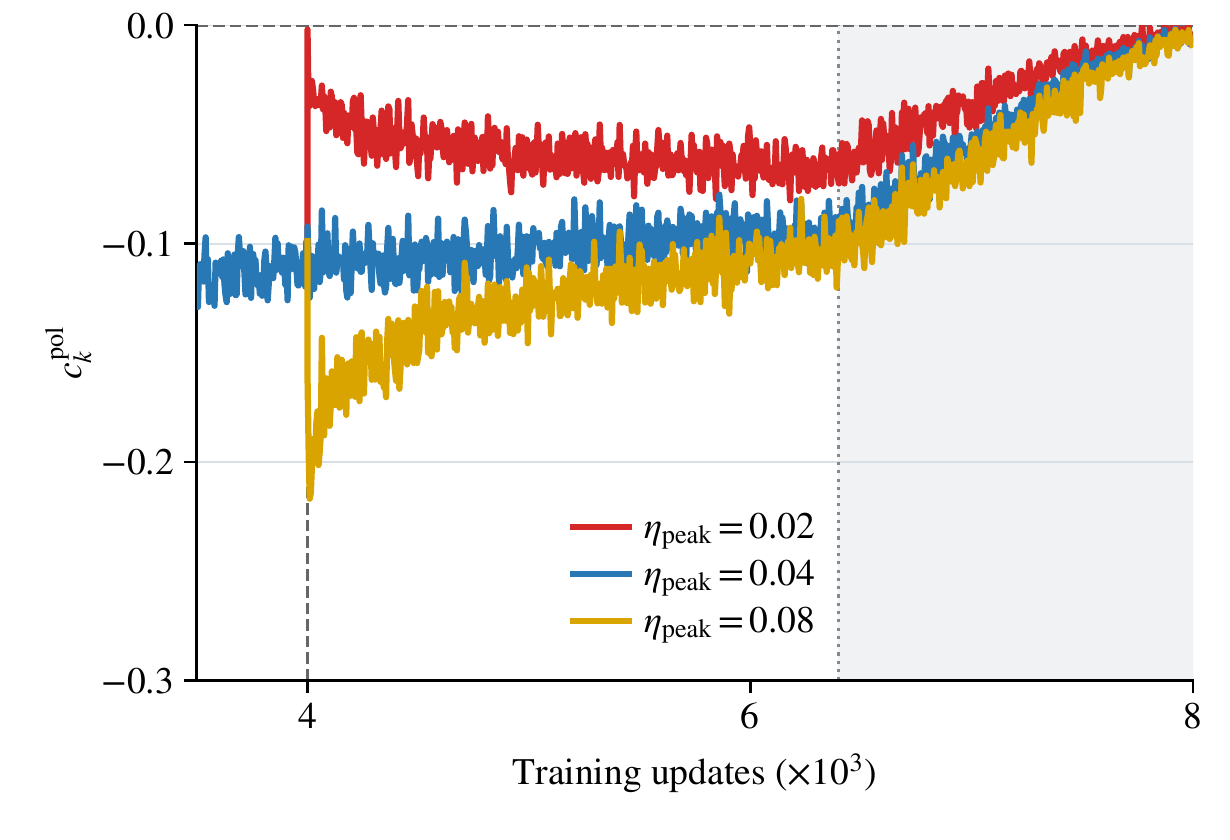}%
  \label{fig:perturb22m-lr-panel-b}%
}\hfill%
\subfloat[\small 130M: loss]{%
  \includegraphics[width=.24\linewidth]{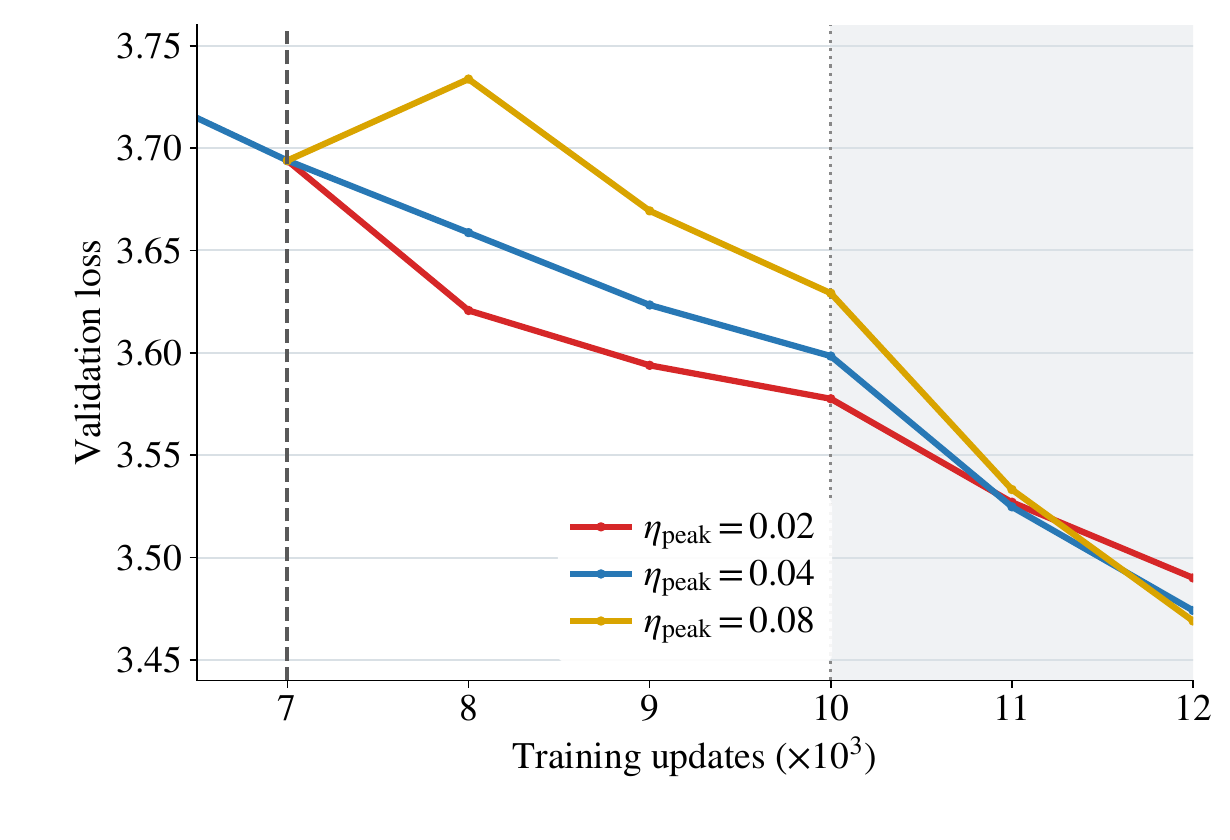}%
  \label{fig:perturb130m-lr-panel-a}%
}\hfill%
\subfloat[\small 130M: $c_{k,b}^{\mathrm{pol}}$ directions]{%
  \includegraphics[width=.24\linewidth]{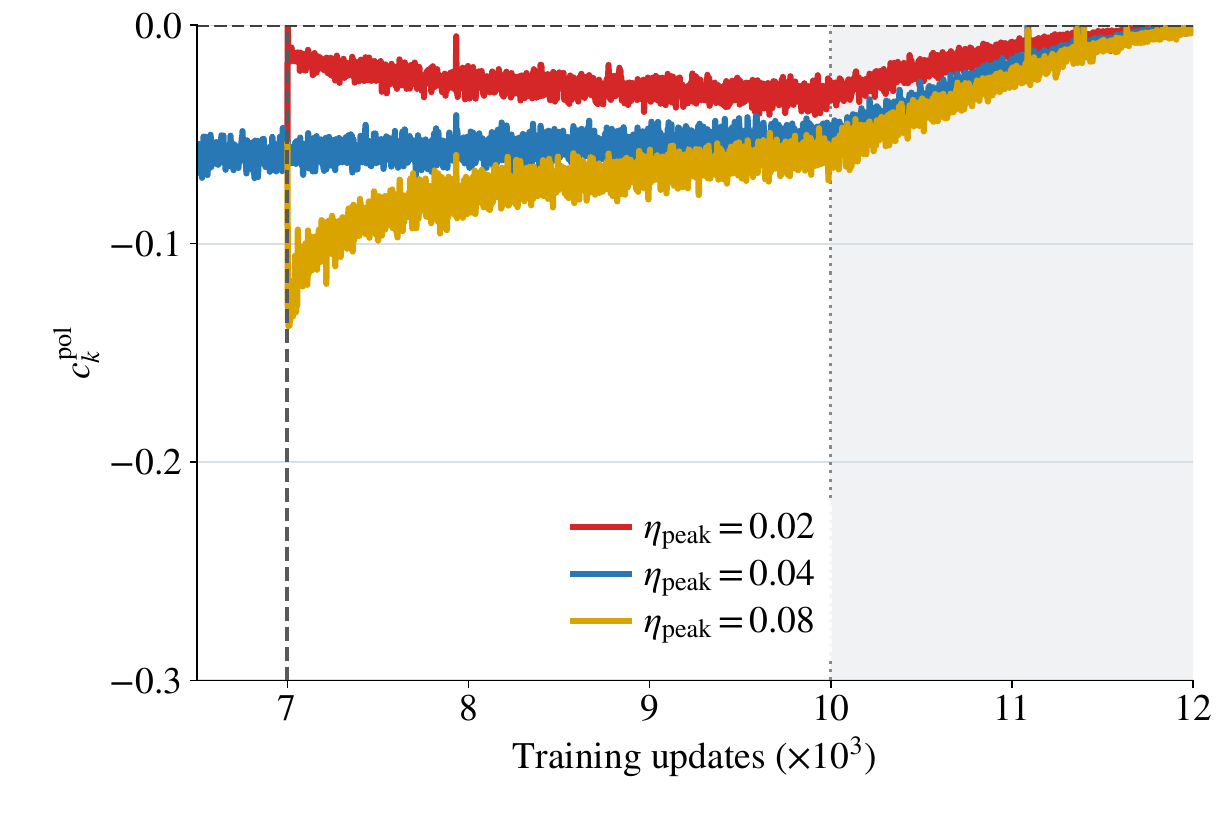}%
  \label{fig:perturb130m-lr-panel-b}%
}
\caption{\textbf{Loss and update directions after learning-rate changes.}
Panels (a,b) show validation loss and global Muon-direction cosines
$c_{k,b}^{\mathrm{pol}}$ for the 22M model; panels (c,d) show the
corresponding results for the 130M Llama-like model.
Shaded grey regions mark learning-rate decay.}
\label{fig:perturb22m-lr}
\end{figure}

\noindent {\bf Alignment tends to return toward its pre-perturbation value.}
Increasing the learning rate initially makes $c^{\mathrm{pol}}$ more negative; decreasing it makes $c^{\mathrm{pol}}$ less negative. After either change, the cosine tends to return toward its pre-perturbation value in both models (\cref{fig:perturb22m-lr-panel-b,fig:perturb130m-lr-panel-b}). 

\noindent {\bf Loss balance and alignment respond differently.}
At the shared 22M state, halving the learning rate puts the conditional step on the loss-decrease side of T1; doubling it puts the step on the loss-increase side.
Yet sampled temporal cosines remain negative in all three branches before decay
(\cref{fig:perturb22m-lr-panel-b}).
Negative alignment therefore does not determine the sign of the conditional loss change.
At later checkpoints, conditional loss increments return toward zero
(\cref{fig:perturb22m-lr-balance} in the appendix).

Interestingly, as shown in \cref{fig:perturb22m-lr-panel-a,fig:perturb130m-lr-panel-a}, validation loss improves under a larger peak learning rate in both models under while $c_{k,b}^{\mathrm{pol}}$ is still well controlled. This result demonstrates the possibility of increasing the peak learning rate in WSD guided by $c_{k,b}^{\mathrm{pol}}$. 
\cite{wu2024large} demonstrate the generalization benefits of large-step GD with the non-monotone loss in separable logistic regression.
We leave how $c_{k,b}^{\mathrm{pol}}$ guides the learning rate schedules for future work.

\subsection{1B: large-scale pre-training}
\label{subsec:llm-1b}
We train a 1B Llama-like model with batch size 512 and sequence length 4,096, using 20B tokens in total, see more settings in Appendix~\ref{app:1b-protocol}.

\noindent {\bf A larger-batch configuration shows stronger cancellation.}
The 1B run in \cref{fig:llm-1b} shows that, when compared to the smooth loss curve in \cref{fig:llm-1b-loss}, the global $c^{\mathrm{pol}}$ in \cref{fig:llm-1b-global} has more fluctuations. The 1B run also has more negative global alignment $c^{\mathrm{pol}}$ than that of 130M as a larger batch size 512 is used. But $c^{\mathrm{pol}}$ is still far from coherent reversal at $-1$. This is consistent with the batch-size trend in the controlled and 22M experiments. The cross-scale comparison alone does not isolate batch size, since the model and training configuration also change. Loss continues to improve despite partial cancellation between successive directions.

\noindent {\bf Alignment differs across layers.}
\cref{fig:llm-1b-layers} shows that layer-level cosines vary along the trajectory. The 1B experiment coincides with that of 130M: the global cosine summarizes heterogeneous directions. A single global value does not describe every layer.

\begin{figure}[!t]
\centering
\subfloat[Loss\label{fig:llm-1b-loss}]{\includegraphics[width=.32\linewidth]{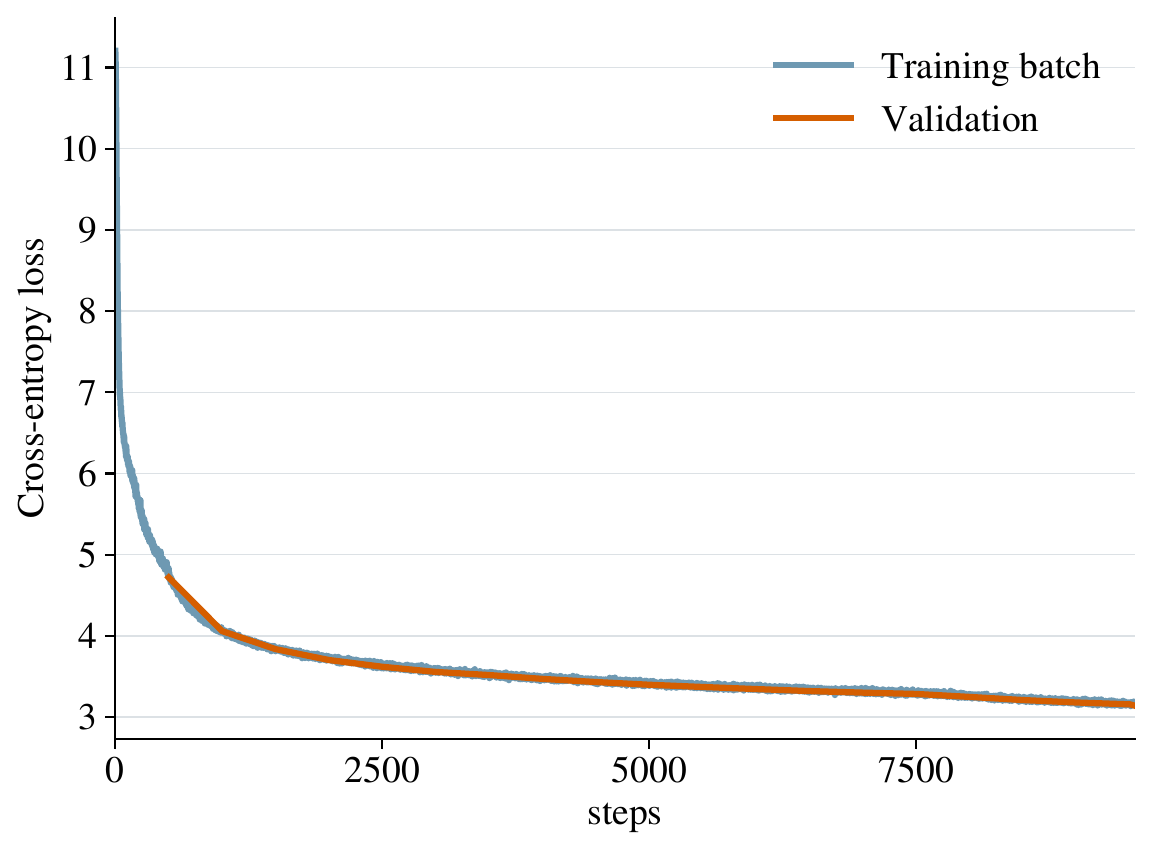}}
\hfill
\subfloat[Global $c^{\mathrm{pol}}$\label{fig:llm-1b-global}]{\includegraphics[width=.32\linewidth]{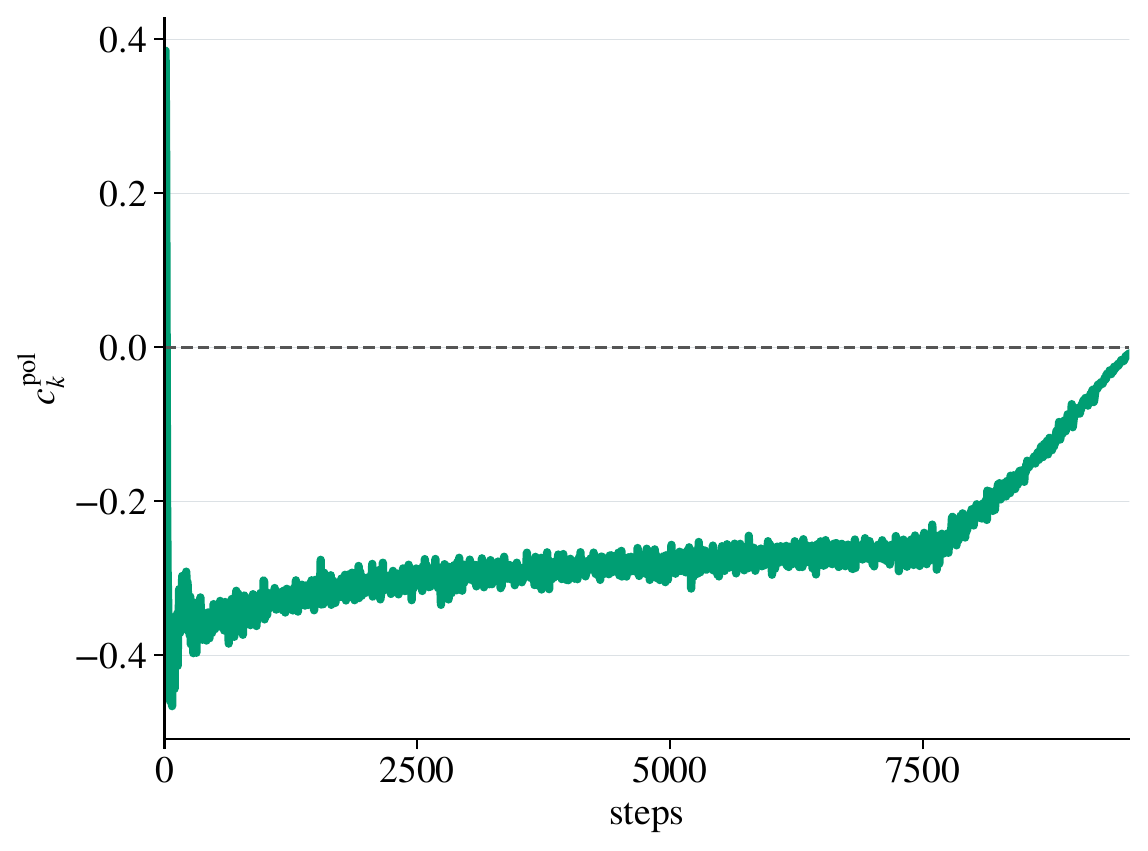}}
\hfill
\subfloat[Layer overall\label{fig:llm-1b-layers}]{\includegraphics[width=.32\linewidth]{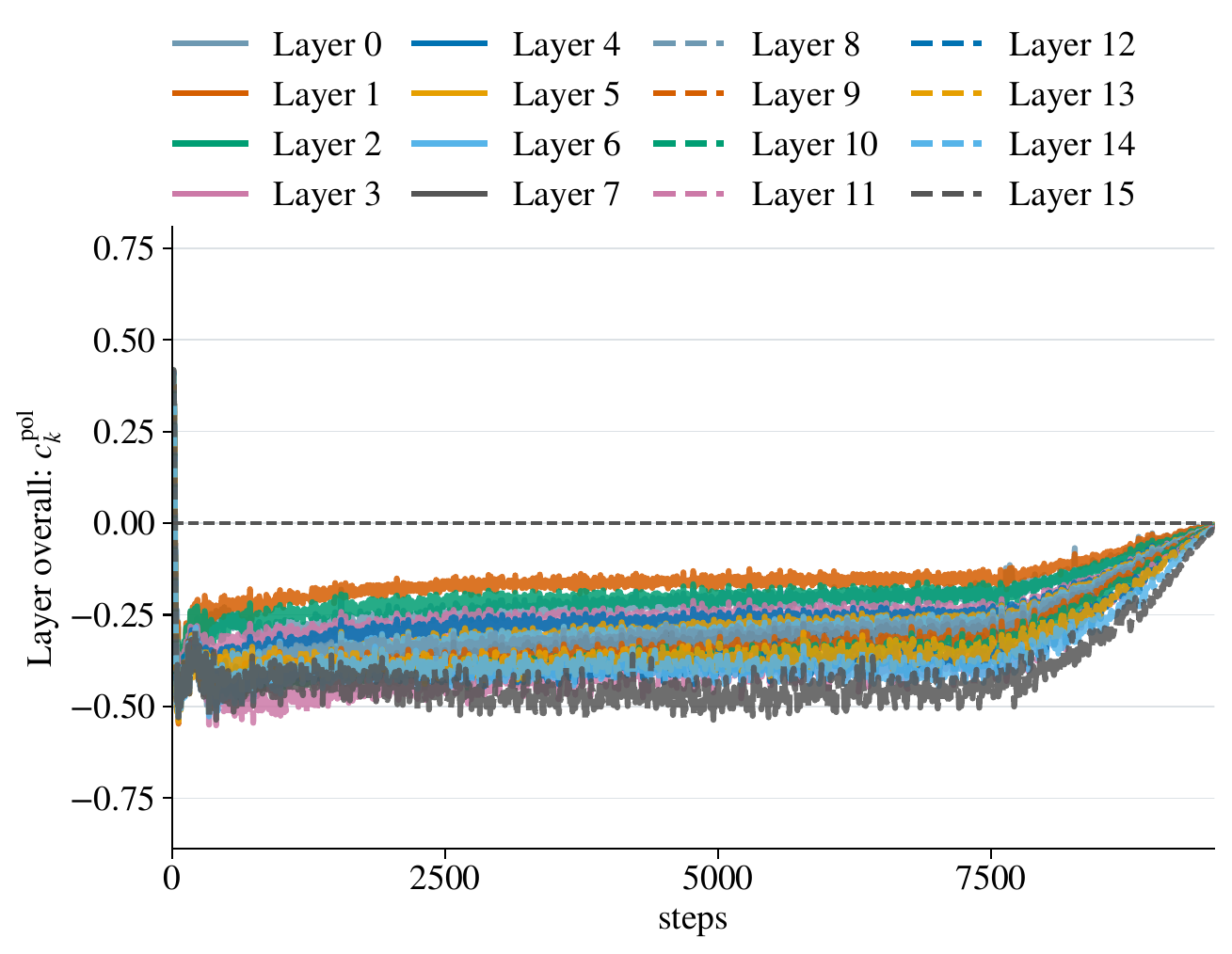}}
\caption{\textbf{Loss and update alignment in 1B pretraining.} (a) shows training and validation loss; (b) shows the global Muon direction cosine $c_k^{\mathrm{pol}}$; (c) shows layer-level direction cosines. Each curve in (c) represents one layer.}
\label{fig:llm-1b}
\end{figure}

\section{Conclusion and discussion}
\label{sec:conclusion}

This paper theoretically demonstrates that Muon does not inherit GD's EoS as a single edge: conditional loss balance and temporal direction become separate signals. Across controlled MLP, CNN, and Transformer experiments, increasing the batch size drives $c_k^{\mathrm{pol}}$ toward coherent reversal, whereas small batches remain closer to orthogonality. Scaling to language models, the 22M and 130M interventions show that loss balance and alignment respond differently to the learning rate; the 130M runs track the loss-neutral boundary with weak negative alignment, while the 1B run exhibits stronger partial cancellation but remains far from reversal. Thus, stochastic LLM pretraining can reach the loss edge without reproducing the full-batch directional regime.

Our WSD perturbations show that a larger peak learning rate can improve validation loss while $c_k^{\mathrm{pol}}$ remains
controlled, motivating schedules that monitor both quantities rather than
treating negative alignment alone as instability. 

Our results characterize the base dynamics induced by Muon's polar
normalization rather than every practical Muon variant. Momentum introduces
temporal filtering and an additional optimizer state, so the appropriate
loss and alignment diagnostics must be reconsidered for the coupled
parameter--momentum dynamics. Extending the split-edge picture to momentum
Muon is an important direction for future work.

\subsection*{AI use statement}
Generative AI tools assisted language editing, LaTeX organization, algebraic consistency checks, and plotting code in this draft. Experimental measurements were supplied separately and were not generated by AI. AI assistance also supported the diagnostic summaries. The latter measurements were produced by running the code on the authors' hardware. The authors are responsible for the manuscript, experimental results, and all AI-assisted material.

\subsection*{Acknowledgment}
We thank for Yikuan Li for experiment suggestions.

\FloatBarrier
\bibliography{arxiv/references}
\bibliographystyle{arxiv/reference}

\clearpage
\appendix
\hypersetup{linkcolor=blue}
\begin{center}
\noindent\rule{\linewidth}{1.2pt}\par
\vspace{0.4em}
{\Large\bfseries Appendix}\par
\vspace{0.4em}
\noindent\rule{\linewidth}{0.4pt}
\end{center}
\vspace{0.5em}
\section*{Contents}
\makeatletter
\newcommand{\appsectionentry}[2]{
  \par\addvspace{0.5em}\noindent
  \hyperref[#1]{\makebox[2.2em][l]{\ref*{#1}}#2}\hfill\pageref*{#1}\par}
\newcommand{\appsubsectionentry}[2]{
  \@dottedtocline{1}{1.5em}{3em}{\hyperref[#1]{\numberline{\ref*{#1}}#2}}{\pageref*{#1}}}
\makeatother
\begingroup
\small
\appsectionentry{app:theory-all}{Theory in Section~\ref*{sec:stochastic}}
\appsubsectionentry{app:det-details}{Full-batch loss balance and curvature visits}
\appsubsectionentry{app:matrix-quadratic}{Matrix recursion and separation of T1 and T2}
\appsubsectionentry{app:theory}{Secant response and the direction-sign boundary}
\appsectionentry{app:experiments}{Experiments on small models}
\appsubsectionentry{app:toc-1}{Data and objectives}
\appsubsectionentry{app:toc-2}{Architectures and initialization}
\appsubsectionentry{app:toc-3}{Optimizer and sampling}
\appsubsectionentry{app:toc-4}{Conditional probes and numerical normalization}
\appsubsectionentry{app:events}{Event rules and uncertainty}
\appsubsectionentry{app:toc-5}{Additional trajectory results}
\appsubsectionentry{app:fixedcp-protocol}{Frozen-state and temporal alignment}
\appsectionentry{app:perturbations}{Perturbation experiments}
\appsubsectionentry{app:perturb22m-protocol}{22M setup and probes}
\appsubsectionentry{app:perturb130m-protocol}{130M learning-rate perturbation setup}
\appsubsectionentry{app:perturb22m-evidence}{Loss and direction responses}
\appsectionentry{app:llm-settings}{Results on 130M and 1B pretraining}
\appsubsectionentry{app:130m-protocol}{130M pre-training protocol}
\appsubsectionentry{app:llm-estimators}{Conditional probes and update scope}
\appsubsectionentry{app:toc-7}{130M loss increments and complete-update balance}
\appsubsectionentry{app:llm-structure}{130M layer structure and batch-dependent coherence}
\appsubsectionentry{app:1b-protocol}{1B LLM pretraining}
\endgroup
\vspace{1em}

\section{Theory in Section~\ref{sec:stochastic}}
\label{app:theory-all}
\subsection{Full-batch loss balance and curvature visits}
\label{app:det-details}
\begin{proof}[Proof of Theorem~\ref{thm:stoch-loss}]
Rearranging \cref{eq:stoch-curvature} and substituting \cref{eq:rho} gives \cref{eq:stoch-loss}. Since $\eta^2\|\bG_k\|_*/2>0$, the expected loss change has the sign of $s_b^{\mathrm M}-2\rho_b/\eta$. This proves the loss-neutral boundary. For full-batch updates, it reduces as follows.

For full-batch exact-polar Muon, \cref{eq:operator-nuclear-duality} gives
$\ipF{\bG_k}{\bP_k}=\nucnorm{\bG_k}$, so $\rho_n=1$.
Write $s_k^{\mathrm M}$ for the full-batch specialization of
\cref{eq:stoch-curvature}. On an active step, \cref{eq:stoch-loss} becomes
\begin{equation}
L(\bW_{k+1})-L(\bW_k)
=\frac{\eta^2\nucnorm{\bG_k}}2
\left(s_k^{\mathrm M}-\frac2\eta\right).
\label{eq:det-loss}
\end{equation}
Thus the loss-neutral boundary is $2/\eta$.

If $L$ is $C^2$ near the update segment, Taylor's integral remainder gives
\begin{equation*}
s_k^{\mathrm M}
=\frac{2}{\nucnorm{\bG_k}}
\int_0^1(1-t)
\ipF{\bP_k}{\nabla^2L(\bW_k-t\eta\bP_k)[\bP_k]}\dd t.
\end{equation*}
Here the Hessian acts on a matrix direction by
\[
\nabla^2L(\bW)[\bP]
=\left.\frac{\mathrm d}{\mathrm du}\nabla L(\bW+u\bP)\right|_{u=0}.
\]
The limit $\eta\to0$ holds at fixed $\bW_k$ and $\bP_k$; it is a
small-step limit, not a continuous-time trajectory.

The tight operator-norm directional smoothness of
\citet{islamov2026noneuclidean}, specialized to this update, is
\begin{equation*}
\begin{aligned}
D_{\mathrm{op}}(\bW_k,\bW_{k+1})
&=\frac{2\bigl[L(\bW_{k+1})-L(\bW_k)
-\ipF{\bG_k}{\bW_{k+1}-\bW_k}\bigr]}
{\opnorm{\bW_{k+1}-\bW_k}^2},\\
s_k^{\mathrm M}
&=\frac{D_{\mathrm{op}}(\bW_k,\bW_{k+1})}{\nucnorm{\bG_k}}.
\end{aligned}
\end{equation*}
This follows from $\opnorm{\bP_k}=1$ and operator--nuclear duality.
It concerns the realized update segment, rather than a maximum over
directions. Positive rescaling of $L$ leaves $s_k^{\mathrm M}$ unchanged.
\end{proof}

\noindent {\bf A deterministic curvature-visit result.}
The loss identity also constrains curvature over time. Under the
assumptions below, curvature cannot eventually stay a fixed distance
below $2/\eta$. 

\begin{proposition}[Deterministic curvature visits]
\label{prop:curvature-visits}
Run full-batch exact-polar Muon with fixed $\eta>0$ on an objective
bounded below. Let
$E_K=\sum_{k<K:\bG_k\ne\bm0}\eta^2\nucnorm{\bG_k}$.
If $E_K\to\infty$, then
\begin{equation*}
\limsup_{k\to\infty:\bG_k\ne\bm0}s_k^{\mathrm M}\geq\frac2\eta.
\end{equation*}
\end{proposition}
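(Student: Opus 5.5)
We argue by contradiction, telescoping the full-batch loss identity \cref{eq:det-loss}. Suppose the conclusion fails. Then there exist $\delta>0$ and an index $k_0$ such that every active step $k\geq k_0$ (that is, every $k$ with $\bG_k\ne\bm0$) satisfies
\[
s_k^{\mathrm M}\leq \frac{2}{\eta}-\delta.
\]
If only finitely many steps are active, then $E_K$ is eventually constant, contradicting $E_K\to\infty$. So we may assume infinitely many active steps, in which case the $\limsup$ is over an infinite index set and makes sense.

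On each active step $k\geq k_0$, \cref{eq:det-loss} gives
\[
L(\bW_{k+1})-L(\bW_k)=\frac{\eta^2\nucnorm{\bG_k}}{2}\Bigl(s_k^{\mathrm M}-\frac2\eta\Bigr)\leq-\frac{\delta}{2}\,\eta^2\nucnorm{\bG_k}.
\]
On inactive steps we have $\bP_k=\Pol(\bm0)=\bm0$. Hence $\bW_{k+1}=\bW_k$ and the loss change is $0$, so these steps contribute nothing on either side.

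Summing from $k_0$ to $K-1$ yields
\[
L(\bW_K)-L(\bW_{k_0})\leq-\frac{\delta}{2}\bigl(E_K-E_{k_0}\bigr).
\]
Since $E_K\to\infty$ and $E_{k_0}$ is a fixed finite number, the right-hand side tends to $-\infty$. Therefore $L(\bW_K)\to-\infty$, which contradicts the assumption that $L$ is bounded below. It follows that for every $\delta>0$, infinitely many active steps have $s_k^{\mathrm M}>2/\eta-\delta$, which is exactly $\limsup s_k^{\mathrm M}\geq 2/\eta$.

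The only delicate point is bookkeeping rather than analysis. First, $s_k^{\mathrm M}$ is defined only on active steps; inactive steps must be shown to contribute zero loss change, which holds because $\Pol(\bm0)=\bm0$. Second, the negated $\limsup$ must be turned into a uniform margin $\delta$ from some index onward. Third, the finitely many steps before $k_0$ only shift $E_K$ by a constant. No smoothness beyond $C^1$ is needed, since \cref{eq:det-loss} is an exact identity, namely the full-batch specialization of \cref{thm:stoch-loss} with $\rho=1$ by \cref{eq:operator-nuclear-duality}.
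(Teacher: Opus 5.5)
Your proposal is correct and matches the paper's proof: you telescope the exact full-batch identity \cref{eq:det-loss}, with inactive steps contributing zero, extract a uniform margin $\delta$ from the negated $\limsup$, and use $E_K\to\infty$ to force $L(\bW_K)\to-\infty$, contradicting the lower bound. Your explicit treatment of the case with only finitely many active steps is a small piece of bookkeeping that the paper leaves implicit.
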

The result concerns full-batch curvature visits. It does not establish
persistent boundary tracking or a condition on temporal alignment.
The stochastic boundary is studied empirically in the main text.

\begin{proof}
Let $\mathcal A_K=\{k<K:\bG_k\ne\bm0\}$.
Inactive steps leave the parameters unchanged. Summing
\cref{eq:det-loss} therefore gives, for $E_K>0$,
\begin{equation*}
\frac{\sum_{k\in\mathcal A_K}\eta^2\nucnorm{\bG_k}s_k^{\mathrm M}}{E_K}
=\frac2\eta-\frac{2[L(\bW_0)-L(\bW_K)]}{E_K}.
\end{equation*}
Suppose the stated limsup were less than $2/\eta$.
For some $\delta>0$, all sufficiently late active steps would satisfy
\[
L(\bW_{k+1})-L(\bW_k)
\leq-\frac\delta2\eta^2\nucnorm{\bG_k}.
\]
Since $E_K\to\infty$, summing this inequality would force
$L(\bW_K)\to-\infty$, contrary to the lower bound.
\end{proof}

\subsection{Matrix recursion and separation of T1 and T2}
\label{app:matrix-quadratic}
\label{app:quadratic-recursion}
\label{app:quadratic-diagnostics}
We analyze the matrix quadratic from \cref{sec:noncommuting-quadratic}. Its Hessian is the right-multiplication
map $\bDelta\mapsto\bDelta\bm X\bm X^\top$.
The formulas below use this structure; they need not hold for an arbitrary
quadratic on matrix space.

Assume, as in the main text, that $m\geq d$, $\bG_k$ has full column rank,
and $\bm S_k-\eta\bm X\bm X^\top$ is nonsingular.
Then
\begin{align}
\bG_{k+1}&=\bP_k(\bm S_k-\eta\bm X\bm X^\top),\notag
\\
\bm S_{k+1}&=|\bm S_k-\eta\bm X\bm X^\top|,\notag
\\
\bP_{k+1}&=\bP_k\operatorname{sign}(\bm S_k-\eta\bm X\bm X^\top).
\label{eq:polar-matrix-recursion}
\end{align}
Absolute value and sign act eigenvaluewise on the symmetric matrix.
No commutativity assumption is needed.

The affine gradient gives
$\bG_{k+1}=\bG_k-\eta\bP_k\bm X\bm X^\top$.
Substitute $\bG_k=\bP_k\bm S_k$ and use
$\bP_k^\top\bP_k=\bm I_d$ to obtain
\[
\bG_{k+1}^\top\bG_{k+1}
=(\bm S_k-\eta\bm X\bm X^\top)^2.
\]
Taking its positive square root proves the second formula.
The identity
$\bP_{k+1}=\bG_{k+1}\bm S_{k+1}^{-1}$ proves the third.

\begin{corollary}[T1 and T2 for the matrix quadratic]
\label{cor:noncommuting-thresholds}
Under the assumptions above,
\begin{align*}
L_{\bm X,\bm E}(\bW_{k+1})-L_{\bm X,\bm E}(\bW_k)
&=-\eta\operatorname{tr}(\bm S_k)
+\frac{\eta^2}{2}\normF{\bm X}^2,
\\
s_k^{\mathrm M}
&=\frac{\normF{\bm X}^2}{\operatorname{tr}(\bm S_k)},
\\
c_k^{\mathrm{pol}}
&=1-\frac{2n_-(\bm S_k-\eta\bm X\bm X^\top)}d.
\end{align*}
Here $n_-$ counts negative eigenvalues.
\end{corollary}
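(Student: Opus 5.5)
The three identities follow from the exact quadratic structure of $L_{\bm X,\bm E}$ together with the recursion in \cref{eq:polar-matrix-recursion}. The argument has three steps, one per identity.

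\textbf{Loss change.} Since $L_{\bm X,\bm E}$ is quadratic with Hessian $\bDelta\mapsto\bDelta\bm X\bm X^\top$, the second-order Taylor expansion is exact:
\[
L(\bW_k-\eta\bP_k)-L(\bW_k)=-\eta\ipF{\bG_k}{\bP_k}+\tfrac{\eta^2}{2}\ipF{\bP_k}{\bP_k\bm X\bm X^\top}.
\]
For the first-order term, write $\bG_k=\bP_k\bm S_k$ and use $\bP_k^\top\bP_k=\bm I_d$, which holds because $\bG_k$ has full column rank and $m\ge d$. This gives $\ipF{\bG_k}{\bP_k}=\operatorname{tr}(\bP_k^\top\bP_k\bm S_k)=\operatorname{tr}(\bm S_k)=\nucnorm{\bG_k}$, consistent with \cref{eq:operator-nuclear-duality}. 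For the quadratic term, $\ipF{\bP_k}{\bP_k\bm X\bm X^\top}=\operatorname{tr}(\bP_k^\top\bP_k\bm X\bm X^\top)=\operatorname{tr}(\bm X\bm X^\top)=\normF{\bm X}^2$. Together these give the first identity.

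\textbf{Curvature.} I substitute the first identity into the full-batch form of \cref{eq:stoch-curvature}, that is, \cref{eq:det-loss} with $\rho=1$. Then
\[
s_k^{\mathrm M}=\frac{2\bigl[\Delta L+\eta\operatorname{tr}(\bm S_k)\bigr]}{\eta^2\operatorname{tr}(\bm S_k)}=\frac{\normF{\bm X}^2}{\operatorname{tr}(\bm S_k)}.
\]
The label noise $\bm E$ affects $\bG_k$, and hence $\bm S_k$, but it never enters the Hessian term.

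\textbf{Cosine.} This is the step needing care. By \cref{eq:polar-matrix-recursion}, $\bP_{k+1}=\bP_k\bm\Sigma$ with $\bm\Sigma=\operatorname{sign}(\bm S_k-\eta\bm X\bm X^\top)$. This matrix is symmetric with eigenvalues $\pm1$ and has no zero eigenvalues, by the nonsingularity assumption. Hence $\bm\Sigma$ is orthogonal, and $\bP_{k+1}^\top\bP_{k+1}=\bm\Sigma^2=\bm I_d$. Both directions therefore have Frobenius norm $\sqrt d$. The inner product is $\ipF{\bP_k}{\bP_{k+1}}=\operatorname{tr}(\bP_k^\top\bP_k\bm\Sigma)=\operatorname{tr}(\bm\Sigma)=d-2n_-$, since the trace of a sign matrix counts positive minus negative eigenvalues. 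Dividing by $d$ gives $c_k^{\mathrm{pol}}=1-2n_-/d$.

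The only genuine subtlety is justifying the recursion's use of $\operatorname{sign}$ without assuming $\bm S_k$ and $\bm X\bm X^\top$ commute. This is already settled in the paper: $\bG_{k+1}=\bP_k(\bm S_k-\eta\bm X\bm X^\top)$, with $\bP_k$ an isometry, has polar factor $\bP_k$ times the polar factor of a symmetric nonsingular matrix, and that polar factor is its sign. So the corollary follows by direct substitution.
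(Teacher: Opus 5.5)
Your proposal is correct and follows essentially the same route as the paper: the exact quadratic expansion with $\ipF{\bG_k}{\bP_k}=\operatorname{tr}(\bm S_k)$ and $\normF{\bP_k\bm X}^2=\normF{\bm X}^2$ (both from $\bP_k^\top\bP_k=\bm I_d$), substitution into the curvature definition, and then $\operatorname{tr}\operatorname{sign}(\bm S_k-\eta\bm X\bm X^\top)=d-2n_-$ via the polar recursion for the cosine. Your check that the sign matrix is orthogonal, so that both polar factors have squared Frobenius norm $d$, simply spells out a step the paper states in one line.
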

\begin{proof}
The exact quadratic expansion gives
\[
L_{\bm X,\bm E}(\bW_k-\eta\bP_k)-L_{\bm X,\bm E}(\bW_k)
=-\eta\ipF{\bG_k}{\bP_k}
+\frac{\eta^2}{2}\normF{\bP_k\bm X}^2.
\]
Full column rank implies
$\ipF{\bG_k}{\bP_k}=\operatorname{tr}(\bm S_k)$ and
$\normF{\bP_k\bm X}^2=\normF{\bm X}^2$.
These identities prove the loss and curvature formulas.
By \cref{eq:polar-matrix-recursion},
\[
\ipF{\bP_k}{\bP_{k+1}}
=\operatorname{tr}\operatorname{sign}(\bm S_k-\eta\bm X\bm X^\top)
=d-2n_-(\bm S_k-\eta\bm X\bm X^\top).
\]
Both polar factors have squared Frobenius norm $d$, giving the cosine.
\end{proof}

T1 is the trace condition
$\operatorname{tr}(\bm S_k)=\eta\normF{\bm X}^2/2$.
T2 is the sign-count condition $n_-=d/2$.
For odd $d$, the cosine can change sign without attaining zero.

\noindent {\bf Exact reversal and two-step displacement.}
Let $\bDelta_k=-\eta_k\bP_k$, with $\eta_k>0$.
For consecutive nonzero directions,
\begin{equation}
\begin{aligned}
\normF{\bW_{k+2}-\bW_k}^2
={}&\normF{\bDelta_k}^2+\normF{\bDelta_{k+1}}^2 +2\normF{\bDelta_k}\normF{\bDelta_{k+1}}c_{k,b}^{\mathrm{pol}}.
\end{aligned}
\label{eq:two-step-displacement}
\end{equation}
At cosine $-1$, Cauchy--Schwarz gives $\bP_{k+1}=-a\bP_k$ for $a>0$.
Nonzero canonical polar factors have operator norm one, so $a=1$.
At fixed learning rate this proves \cref{eq:exact-update-reversal}.
For unequal learning rates or general implemented directions, cosine
$-1$ alone does not imply equal displacement magnitudes.

Take the two stretch matrices in Section~\ref{sec:noncommuting-quadratic}.
They are realized by $m=d=n=3$, $\bm X=\bm I_3$, and
$\bW_k=\bW_\star+\bm E+\bm S_k$, giving
$\bG_k=\bm S_k$ and $\bP_k=\bm I_3$.
Both stretches have trace $2.4\eta$, hence curvature $5/(4\eta)$.
Their shifted matrices have three and one negative eigenvalues,
respectively, so their cosines are $-1$ and $1/3$.
Thus equal loss curvature can accompany opposite alignment signs.
This is a one-step separation result, not a claim of long-time attraction.

\subsection{Secant response and the direction-sign boundary}
\label{app:theory}
\label{app:secant-details}
Scalar quadratic GD reverses direction above $1/\eta$, before loss balance at $2/\eta$. For Muon, the direction-sign boundary also depends on gradient geometry.

\subsubsection{The GD reference value}
For GD, define the directional secant response on a nonzero gradient by
\[
\overline s_k^{\mathrm{GD}}
:=-\frac{\ipF{\bG_k}{\bG_{k+1}-\bG_k}}
{\eta\normF{\bG_k}^2}.
\]
Then
$\ipF{\bG_k}{\bG_{k+1}}=
\normF{\bG_k}^2(1-\eta\overline s_k^{\mathrm{GD}})$.
For nonzero consecutive gradients, their cosine changes sign across
$\overline s_k^{\mathrm{GD}}=1/\eta$.
For a scalar quadratic, $\overline s_k^{\mathrm{GD}}=\lambda$:
the next gradient vanishes at $\eta\lambda=1$ and reverses sign above it.
The cosine at that zero-gradient point is undefined.

\subsubsection{The full-batch Muon condition}
Assume consecutive full gradients are nonzero and use exact polar factors.
Define
\begin{equation*}
\begin{aligned}
\overline s_k^{\mathrm M}
&:=-\frac{\ipF{\bP_k}{\bG_{k+1}-\bG_k}}
{\eta\nucnorm{\bG_k}},\\
a_k&:=\frac{\nucnorm{\bG_{k+1}}}{\nucnorm{\bG_k}},
\qquad
\chi_k:=-\frac{\ipF{\bP_k}{\bG_{k+1}}}{\nucnorm{\bG_{k+1}}}.
\end{aligned}
\end{equation*}
Operator--nuclear duality gives
\begin{equation}
\overline s_k^{\mathrm M}=\frac{1+a_k\chi_k}{\eta}.
\label{eq:det-secant}
\end{equation}
Thus $\overline s_k^{\mathrm M}=1/\eta$ means
$\ipF{\bP_k}{\bG_{k+1}}=0$.
T2 instead requires $\ipF{\bP_k}{\bP_{k+1}}=0$.
The first pairing weights the next singular directions by their singular
values; the second does not.

To express T2 in secant form, let $r_k=\rank(\bG_k)$ and write
\begin{equation*}
\bm R_{k+1}:=\bG_{k+1}
-\frac{\nucnorm{\bG_{k+1}}}{r_{k+1}}\bP_{k+1},
\qquad
\vartheta_k:=1-\frac{\ipF{\bP_k}{\bm R_{k+1}}}{\nucnorm{\bG_k}}.
\end{equation*}
Since $\normF{\bP_k}^2=r_k$, substitution gives the exact relation
\begin{equation*}
\overline s_k^{\mathrm M}
=\frac{\vartheta_k-a_k\sqrt{r_k/r_{k+1}}\,c_k^{\mathrm{pol}}}{\eta}.
\end{equation*}
Its positive cosine coefficient implies
\begin{equation}
\begin{aligned}
c_k^{\mathrm{pol}}=0
&\iff \overline s_k^{\mathrm M}=\frac{\vartheta_k}{\eta},\\
c_k^{\mathrm{pol}}<0
&\iff \overline s_k^{\mathrm M}>\frac{\vartheta_k}{\eta}.
\end{aligned}
\label{eq:muon-t2-secant}
\end{equation}
Unlike GD, Muon has a geometry-dependent coefficient $\vartheta_k$,
which need not equal one. This is an exact relation for a given step,
not a universal threshold or a prediction from secant curvature alone.
If the next nonzero singular values are equal, $\bm R_{k+1}=\bm0$
and the value reduces to $1/\eta$.

For example, take the quadratic model with $\bm X=\bm I_2$ and
$\bG_k=\bm S_k=\eta\operatorname{diag}(1/2,5/2)$.
Then $\bP_k=\bm I_2$ and
$\bP_{k+1}=\operatorname{diag}(-1,1)$, so $c_k^{\mathrm{pol}}=0$.
Yet $\overline s_k^{\mathrm M}=2/(3\eta)$ and $\vartheta_k=2/3$.

\subsubsection{Relation to loss curvature}
For $L\in C^2$ near the update segment, let
$q_k(t)=\ipF{\bP_k}{\nabla^2L(\bW_k-t\eta\bP_k)[\bP_k]}$.
The gradient difference and Taylor remainder give
\begin{equation*}
\overline s_k^{\mathrm M}
=\frac{\int_0^1q_k(t)\dd t}{\nucnorm{\bG_k}},
\qquad
s_k^{\mathrm M}
=\frac{2\int_0^1(1-t)q_k(t)\dd t}{\nucnorm{\bG_k}}.
\end{equation*}
These quantities agree for a constant Hessian, including the quadratic
model in Appendix~\ref{app:matrix-quadratic}.
More generally, for
$\omega_k=\sup_{t,t'\in[0,1]}|q_k(t)-q_k(t')|$,
\[
|s_k^{\mathrm M}-\overline s_k^{\mathrm M}|
\leq\frac{\omega_k}{\nucnorm{\bG_k}}.
\]
At exact polar reversal, $\chi_k=1$, hence
$\overline s_k^{\mathrm M}=(1+a_k)/\eta$.
This reaches $2/\eta$ when $a_k=1$; it differs from the T2 sign boundary.
For an exact two-step return, summing \cref{eq:det-loss} instead gives
\[
\nucnorm{\bG_k}\left(s_k^{\mathrm M}-\frac2\eta\right)
+\nucnorm{\bG_{k+1}}\left(s_{k+1}^{\mathrm M}-\frac2\eta\right)=0.
\]
The two deviations cancel after weighting. Neither step must be individually
loss-neutral.

\subsubsection{Stochastic secants}
Replacing full gradients in \cref{eq:det-secant} by consecutive nonzero
batch gradients gives
\begin{equation*}
\overline s_{k,b}^{\mathrm M}
:=-\frac{\ipF{\bP_k}{\widehat\bG_{k+1}-\widehat\bG_k}}
{\eta\nucnorm{\widehat\bG_k}}
=\frac{1+a_{k,b}\chi_{k,b}}{\eta},
\end{equation*}
where $a_{k,b}=\nucnorm{\widehat\bG_{k+1}}/\nucnorm{\widehat\bG_k}$
and $\chi_{k,b}=-\ipF{\bP_k}{\widehat\bG_{k+1}}/
\nucnorm{\widehat\bG_{k+1}}$.
This pathwise identity uses exact polar factors and a fixed step size.
Batch resampling contributes to the gradient difference, so the stochastic
secant is not a pure Hessian average or the conditional T1 curvature.

\FloatBarrier
\section{Experiments on small models}
\label{app:experiments}

In this section, we present the experimental settings and results of MLP, CNN, and Transformers.
The settings are summarized in \cref{tab:small}.

\subsection{Data and objectives}
\label{app:toc-1}
The MLP and CNN use all 50,000 CIFAR-10 training images for training. The loss is the mean of $\frac12\|\bm f(\bm x)-\bm y\|_2^2$, summed over ten one-hot outputs. Each pixel/channel coordinate is standardized using training-set statistics. No data augmentation is considered.

The Transformer uses 2,048 fixed length-64 sequences from the first 90\% of TinyShakespeare, with vocabulary size 65. Sequence starts use data seed 0 and are saved in the configuration. Its loss is mean per-token cross entropy. All reported objectives are full training losses.

\subsection{Architectures and initialization}
\label{app:toc-2}
\begin{table}[t]
\caption{Small-network architectures and parameter counts.}
\label{tab:small}
\centering
\begin{tabularx}{\linewidth}{@{}lXr@{}}
\toprule
Model & Architecture & Parameters\\
\midrule
MLP & Flattened input, $3072\to128\to128\to10$; tanh after the two hidden layers; biases enabled & 411,146\\
CNN & $3\times3$ padded convolutions, $3\to32\to64$ channels; tanh and $2\times2$ average pooling after each; $4,096\to128\to10$ dense layers with hidden tanh; biases enabled & 545,098\\
Transformer & Two pre-LayerNorm blocks, width 64, four heads, FFN width 256 with GELU; separate Q/K/V; learned token and position embeddings; final LayerNorm and untied bias-free output head & 112,512\\
\bottomrule
\end{tabularx}
\end{table}
The MLP and CNN use PyTorch's default Linear/Conv2d initialization after setting training seed 0. Transformer Linear and Embedding weights use independent normal initialization with standard deviation 0.02, Linear biases are zero, and LayerNorm uses its default initialization. There is no dropout or BatchNorm. All models run in evaluation mode while retaining gradients.

\subsection{Optimizer and sampling}
\label{app:toc-3}
All trainable tensors receive SVD polar updates at $\eta=0.007$. Convolution kernels are reshaped to output channels by remaining coordinates; vectors become column matrices. There is no momentum, weight decay, auxiliary optimizer, or learning-rate schedule.

The suite contains 17 runs of 1,500 updates, with training seed 0:
\begin{itemize}
\item MLP: batch sizes $1,64,128,256,512,4096,\mathrm{full}$.
\item CNN: batch sizes $1,64,512,4096,\mathrm{full}$.
\item Transformer: batch sizes $1,16,128,512,\mathrm{full}$.
\end{itemize}
Full batch contains 50,000 images or 2,048 sequences. Finite batches are sampled without replacement within a step and independently across steps. Batch-size runs share prefixes of sampled index lists. Data, training-batch, and diagnostic seeds are 0, 17, and 29. Microbatch sizes are 1,000, 256, and 64 for MLP, CNN, and Transformer, respectively.

\subsection{Conditional probes and numerical normalization}
\label{app:toc-4}
At each iterate, independent probe batches are evaluated on a separate model copy initialized at the same $\bW_k$. Each candidate update is scored on the fixed full objective. Batch size one uses 32 probes; other finite batches use 16. Full batch uses its actual deterministic increment. The recorded environment is Python 3.12.3, PyTorch 2.7.0+cu128, CUDA 12.8, and an NVIDIA GeForce RTX 4090; model computations use float32 with strict determinism enabled. Inner products multiply and accumulate in float64.

The compact SVD retains singular values above the tolerance
$\max(m,n)\epsilon_{\mathrm{fp32}}\sigma_{\max}$.
The resulting direction may differ from the exact polar factor.
Write $\widetilde{\bm P}_j(\bm G_j)$ for this numerical direction.
The diagnostics use the measured full-objective amplitude
\begin{equation*}
A_k=\sum_j\langle \bm{G}_{j,k},\widetilde{\bm{P}}_j(\bm{G}_{j,k})\rangle_{\mathrm{F}}
\end{equation*}
in place of the exact dual norm $N_k=\sum_j\|\bm{G}_{j,k}\|_*$. For probe $i$, write $d_i=L(\bm{W}_k-\eta\widetilde{\bm{P}}_{B_i})-L(\bm{W}_k)$ and $a_i=\langle \bm{G}_k,\widetilde{\bm{P}}_{B_i}\rangle$. The plotted quantities are
\begin{equation*}
\hat\rho_b=\frac{\overline a}{A_k},\qquad
\hat s_b^{\mathrm M}=\frac{2(\overline d+\eta\overline a)}{\eta^2A_k}.
\end{equation*}
Hence full-batch $\hat\rho_n=1$ holds by the recorded normalization. The discrepancy $|A_k-N_k|/N_k$ is logged and reaches approximately $0.29\%$ in current CIFAR runs. The hats therefore denote numerical estimators under this convention, not exact evaluations of the ideal single-matrix theory. The general loss identity remains exact for the implemented direction and its measured alignment.

For several parameter matrices, the exact theory extends using the product norm $\max_j\|\bm{P}_j\|_{\mathrm{op}}$ and dual norm $\sum_j\|\bm{G}_j\|_*$. Loss evaluations use the complete simultaneous update and therefore include cross-block interactions. The recorded global cosine is
\begin{equation*}
c^{\mathrm{global}}_k=
\frac{\sum_j\langle \bm{P}_{j,k},\bm{P}_{j,k+1}\rangle_{\mathrm{F}}}
{\sqrt{\sum_j\|\bm{P}_{j,k}\|_F^2}\sqrt{\sum_j\|\bm{P}_{j,k+1}\|_F^2}},
\end{equation*}
including all trainable tensors. The full-batch layer curves show selected weight matrices; Transformer block aggregates include all tensors in the corresponding block. A cosine paired with update index $k$ compares the directions used at $k$ and $k+1$; the code also evaluates the next direction at the terminal state, so all 1500 rows contain a cosine.

\subsection{Event rules and uncertainty}\label{app:events}
We report the first detected T1 and T2 events, without persistence filtering.

T1 uses the conditional loss-increment estimate for finite batches and the realized fixed-objective increment at full batch. A qualifying rise exceeds $10^{-8}+10^{-7}|L(\bm{W}_k)|$. T2 requires $c_k^{\mathrm{pol}}<-10^{-7}$. All displayed bars use the first qualifying step in unthinned records (the saved window-one rule), without persistence filtering or interpolation. Thus the CNN full-batch loss/T1 event at step 1350 is a first detected event, not a claim about sustained onset.

The 95\% Monte Carlo intervals use the Student-$t$ quantile and paired probe increments. They describe pointwise finite-probe uncertainty, not across-seed variability or simultaneous confidence for a selected first event. Current and historical runs are separate protocols; historical sparse diagnostics are not filled in or joined to the current trajectories.

\noindent {\bf Event comparison.}
\cref{fig:small-network-onsets} compares the detected events.
Their times vary across diagnostics and batch sizes.
The matrix example in Section~\ref{sec:noncommuting-quadratic} proves
the theoretical separation. Unequal onset times alone do not prove it:
a scalar GD mode also reverses before reaching loss balance.

\subsection{Additional trajectory results}
\label{app:toc-5}
\label{app:figure-evidence}
\label{app:toc-6}
\noindent {\bf Plot scope.}
The main T1 panels use batches $64,512,\mathrm{full}$ for MLP/CNN
and $16,512,\mathrm{full}$ for Transformer.
The T2 panels and appendix grid use four batches per model.
These are $64,512,4096,\mathrm{full}$ for MLP/CNN and
$16,128,512,\mathrm{full}$ for Transformer.
Trajectory panels show steps $0,20,\ldots,1480$ without smoothing.
Full-batch global minima in the complete records are $-0.946209$,
$-0.932072$, and $-0.905996$ for MLP, CNN, and Transformer.
The displayed sampling grid need not retain every minimum.

The archive contains configurations and numerical metadata, but no executed source hash, raw datasets, environment lock, or checkpoints.

\noindent {\bf Batch-size comparisons.}
\cref{fig:network-batch-grid} combines fixed-objective loss, loss balance, and global temporal alignment. Each row compares the same four batches within one model. 
\begin{figure}[!t]
\centering
\subfloat[MLP: loss\label{fig:network-batch-grid-panel-a}]{\includegraphics[width=.32\linewidth]{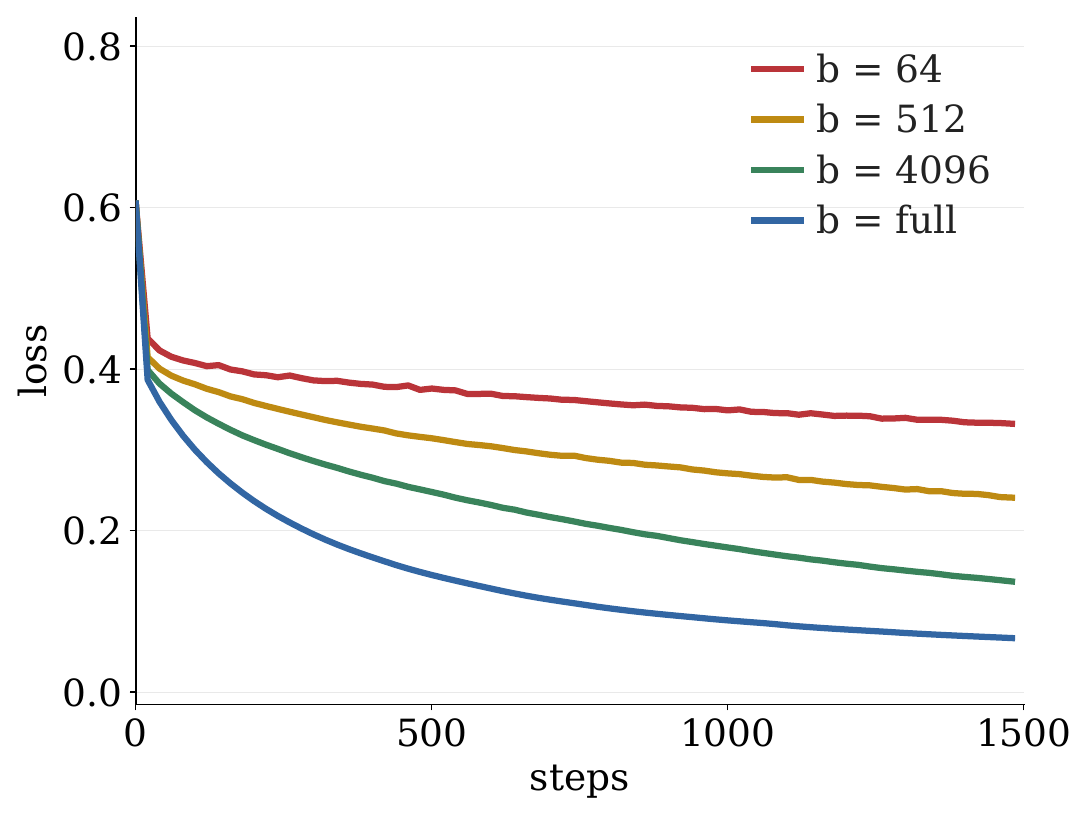}}\hfill
\subfloat[MLP: T1\label{fig:network-batch-grid-panel-b}]{\includegraphics[width=.32\linewidth]{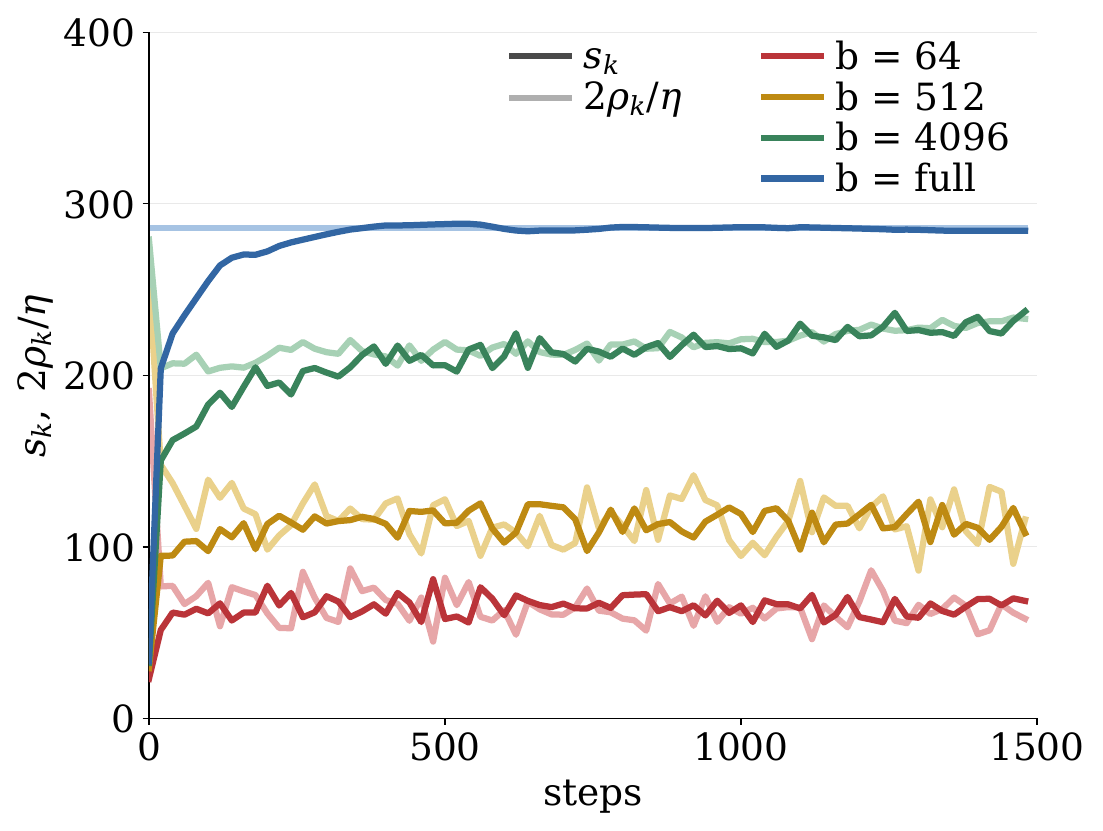}}\hfill
\subfloat[MLP: T2\label{fig:network-batch-grid-panel-c}]{\includegraphics[width=.32\linewidth]{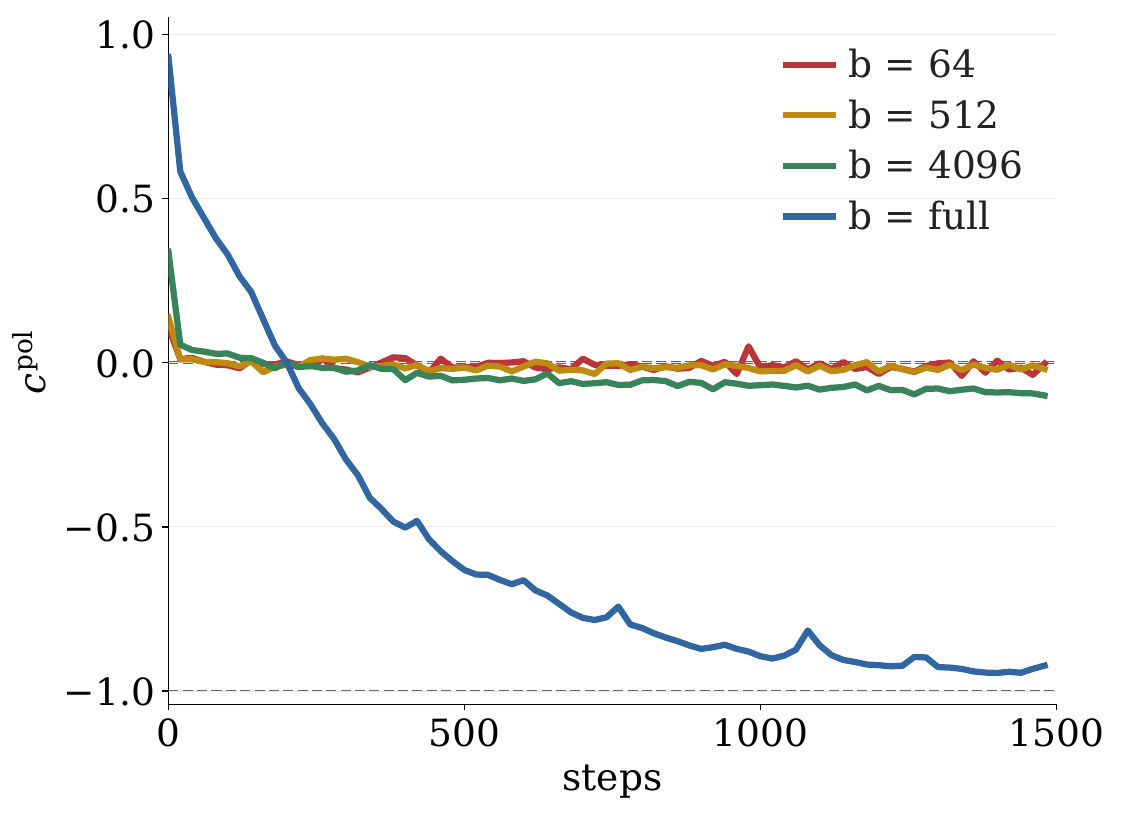}}
\par\smallskip
\subfloat[CNN: loss\label{fig:network-batch-grid-panel-d}]{\includegraphics[width=.32\linewidth]{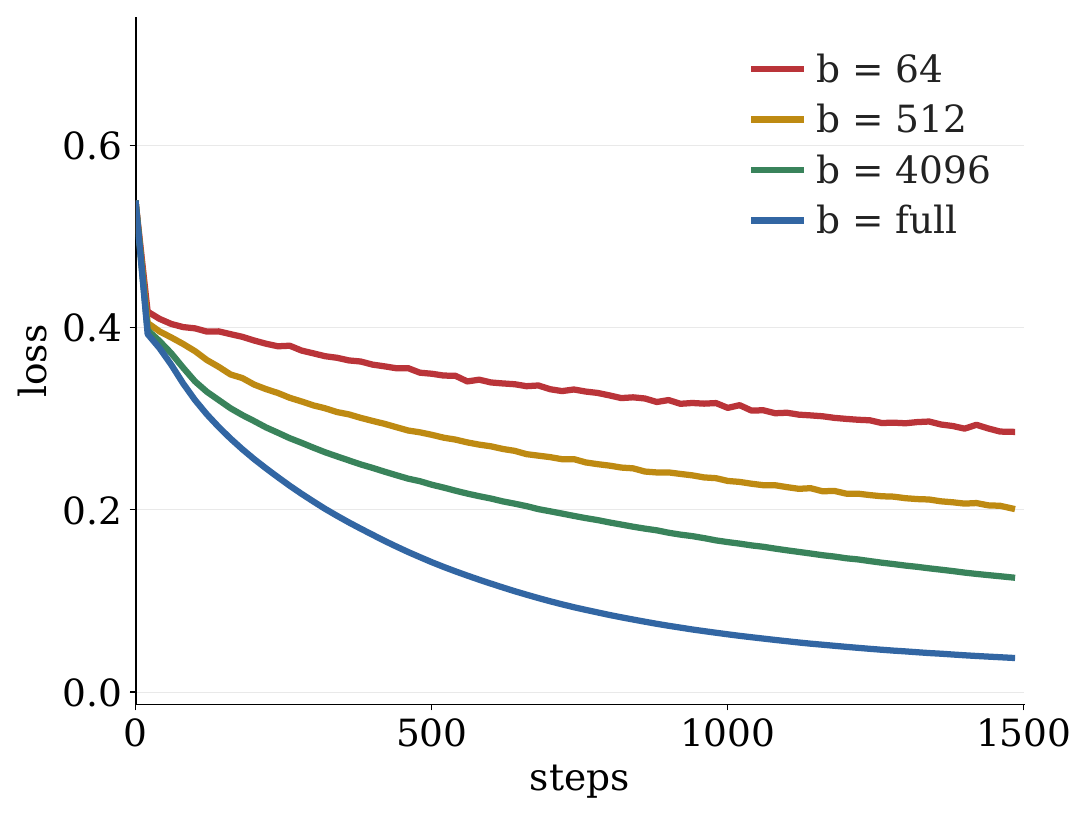}}\hfill
\subfloat[CNN: T1\label{fig:network-batch-grid-panel-e}]{\includegraphics[width=.32\linewidth]{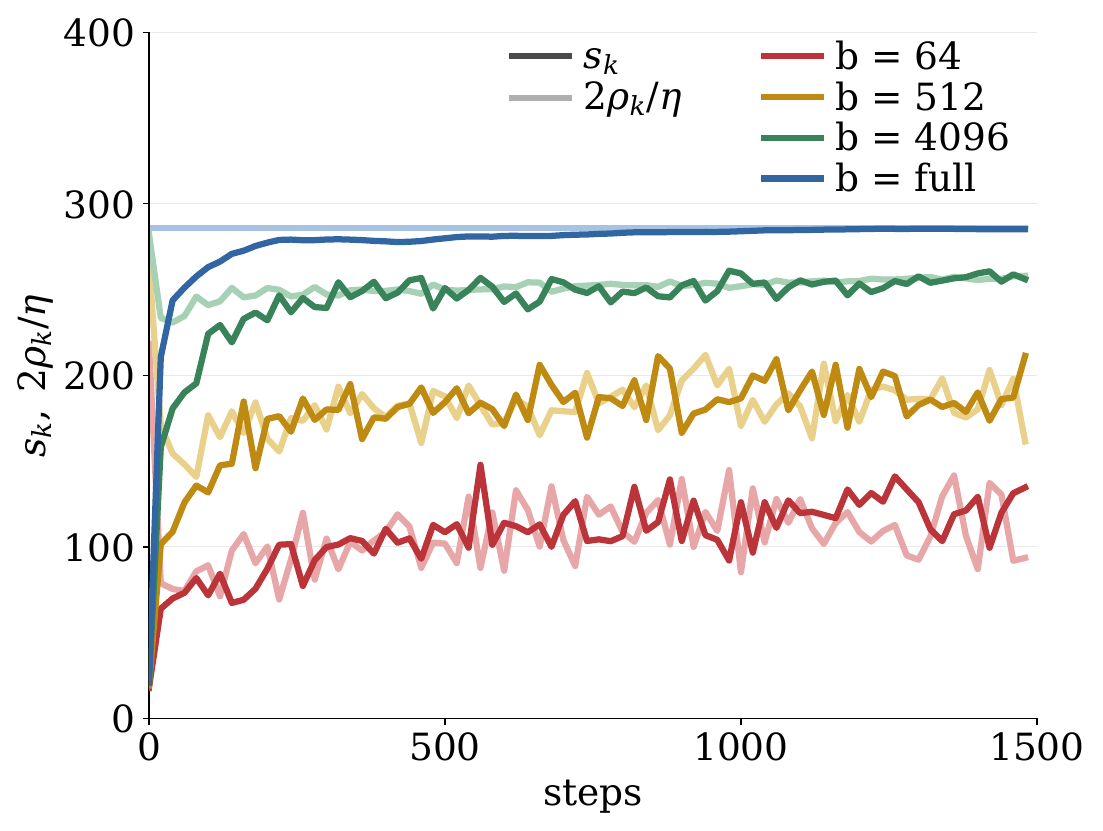}}\hfill
\subfloat[CNN: T2\label{fig:network-batch-grid-panel-f}]{\includegraphics[width=.32\linewidth]{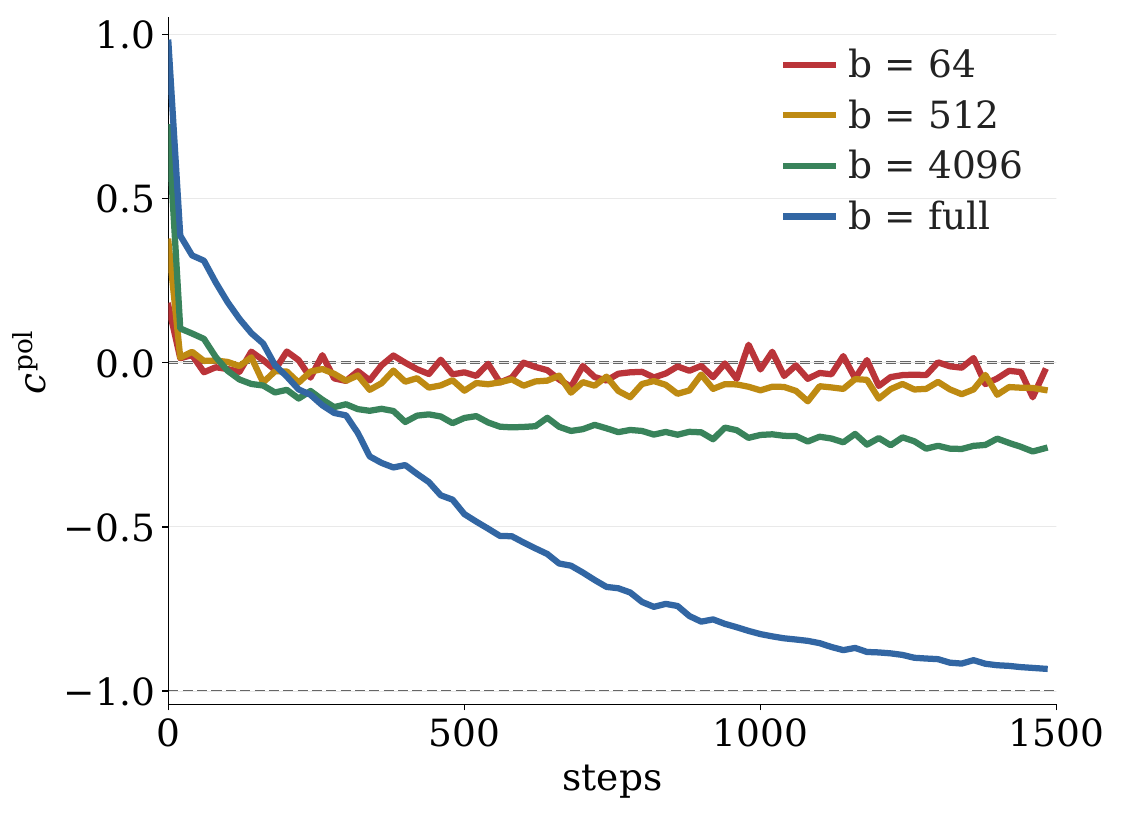}}
\par\smallskip
\subfloat[Transformer: loss\label{fig:network-batch-grid-panel-g}]{\includegraphics[width=.32\linewidth]{figures/neural_network/appendix/01_batch_comparison/transformer_loss.pdf}}\hfill
\subfloat[Transformer: T1\label{fig:network-batch-grid-panel-h}]{\includegraphics[width=.32\linewidth]{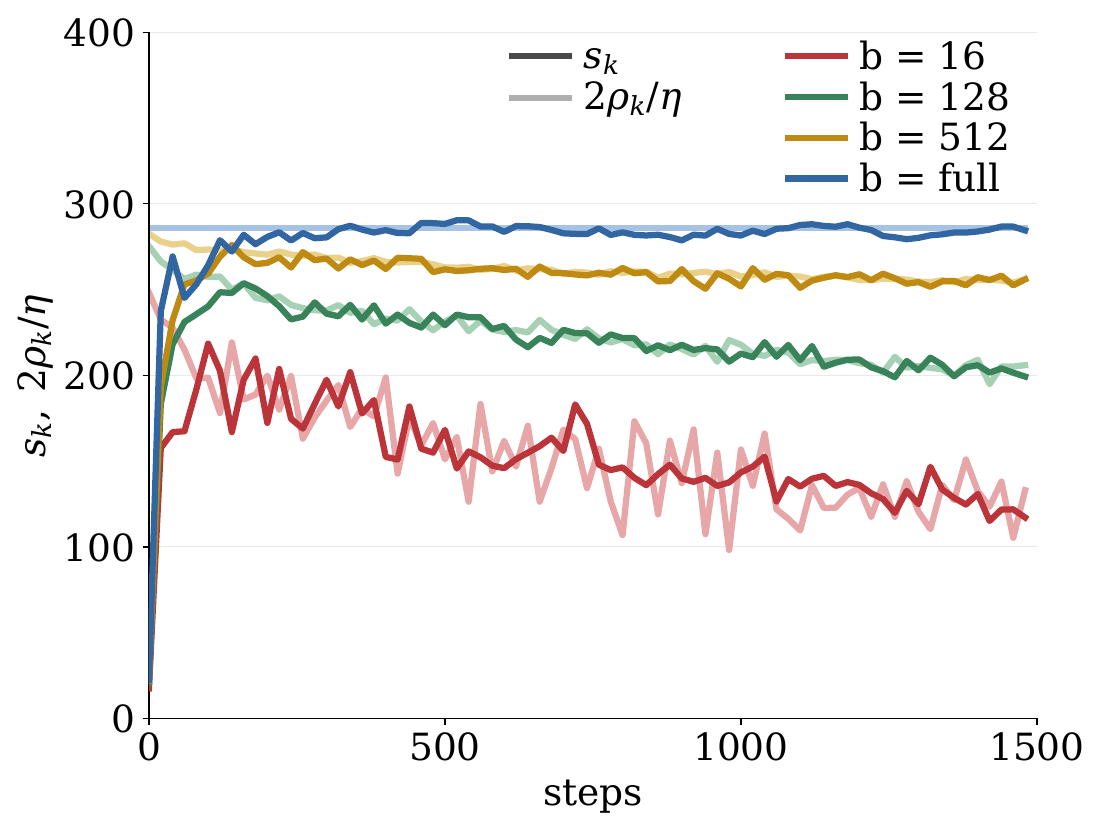}}\hfill
\subfloat[Transformer: T2\label{fig:network-batch-grid-panel-i}]{\includegraphics[width=.32\linewidth]{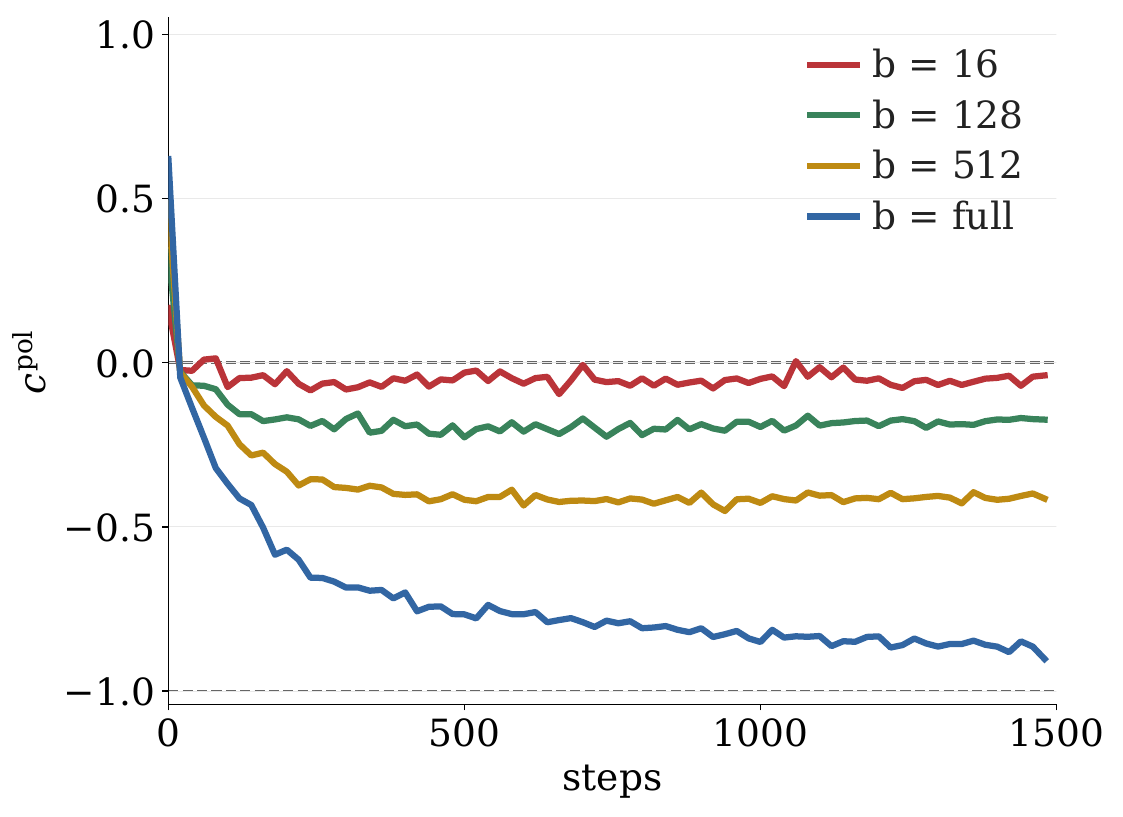}}
\caption{\textbf{Batch-dependent loss and direction diagnostics.} These experiments use the MLP, CNN, and Transformer at $\eta=0.007$; they measure full-objective loss, T1, and T2 across four batches. Rows correspond to MLP, CNN, and Transformer, while columns show loss, T1, and T2. T1 pairs dark curvature curves with light boundaries; curves connect saved observations.}
\label{fig:network-batch-grid}
\end{figure}

\noindent {\bf Full-batch layer and block directions.}
\cref{fig:network-layer-grid} separates layer/block trajectories from the global statistic. Only full-batch training is displayed, using curves throughout. The Transformer panels report the two saved block aggregates, rather than individual Q/K/V matrices.
\begin{figure}[!t]
\centering
\subfloat[MLP: layers/blocks\label{fig:network-layer-grid-panel-a}]{\includegraphics[width=.32\linewidth]{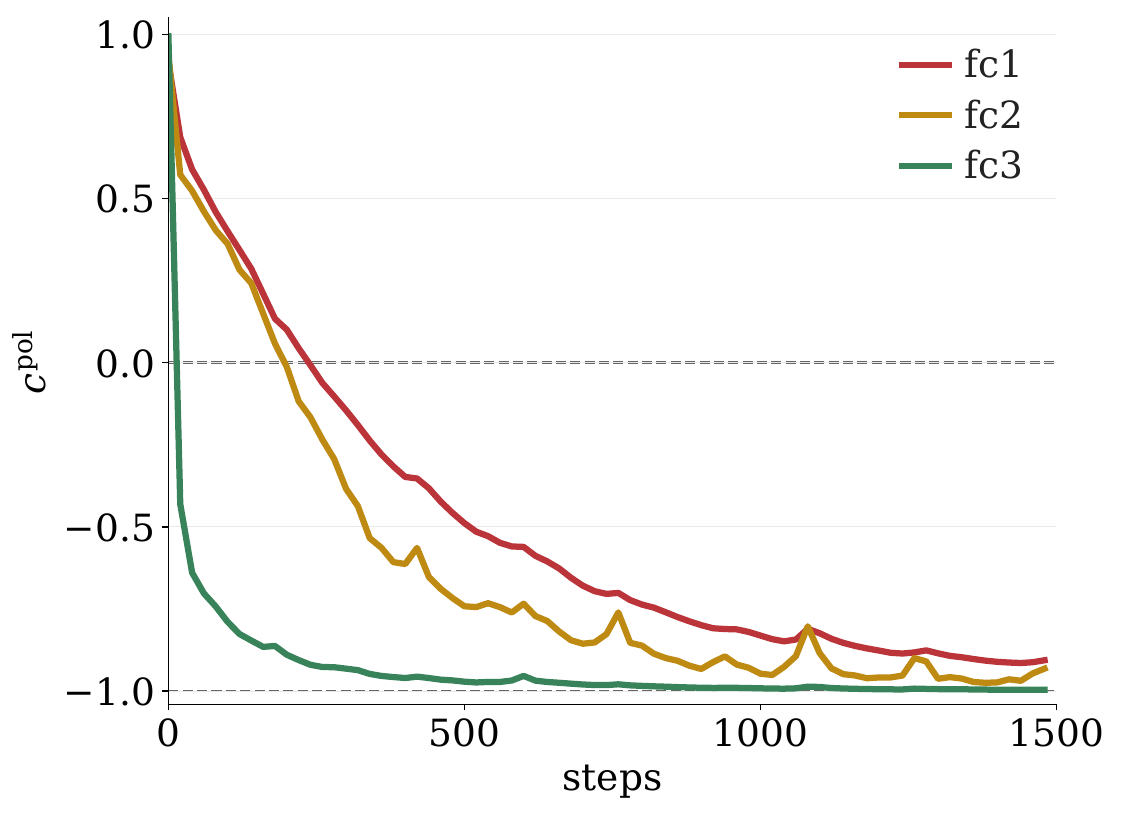}}\hfill
\subfloat[CNN: layers/blocks\label{fig:network-layer-grid-panel-b}]{\includegraphics[width=.32\linewidth]{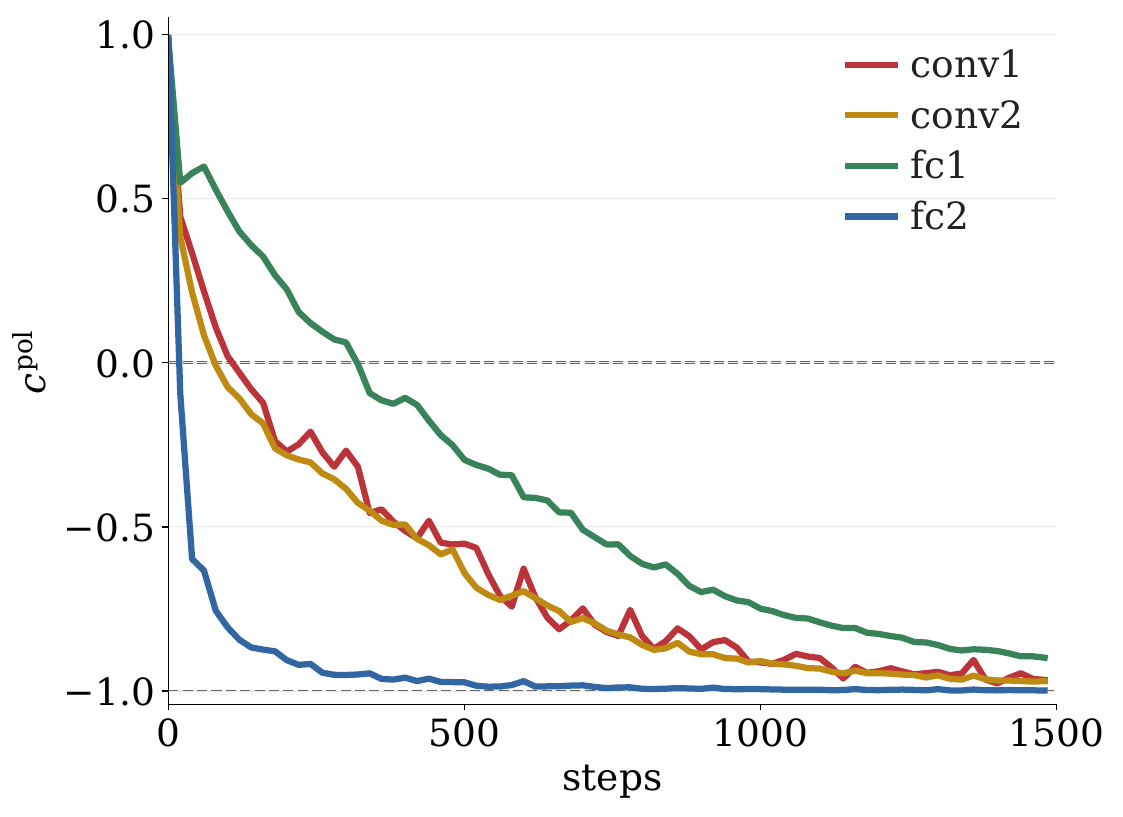}}\hfill
\subfloat[Transformer: layers/blocks\label{fig:network-layer-grid-panel-c}]{\includegraphics[width=.32\linewidth]{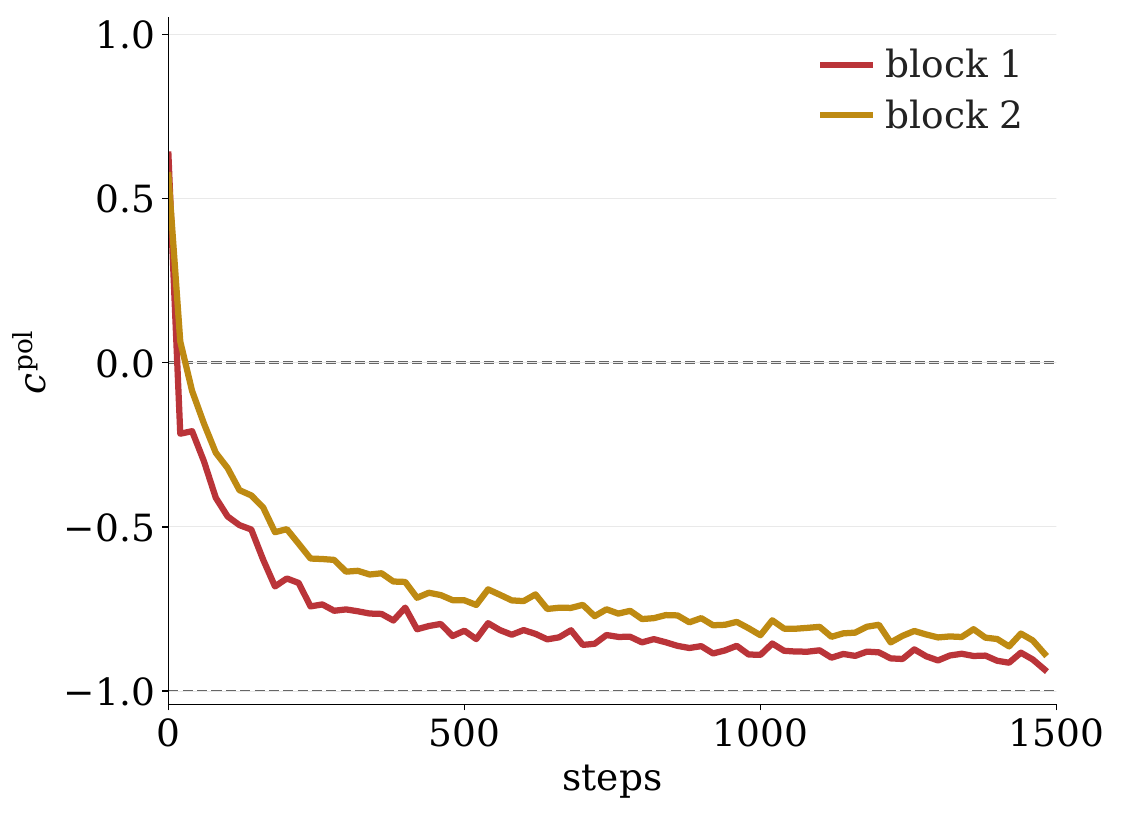}}
\par\smallskip
\subfloat[MLP: global\label{fig:network-layer-grid-panel-d}]{\includegraphics[width=.32\linewidth]{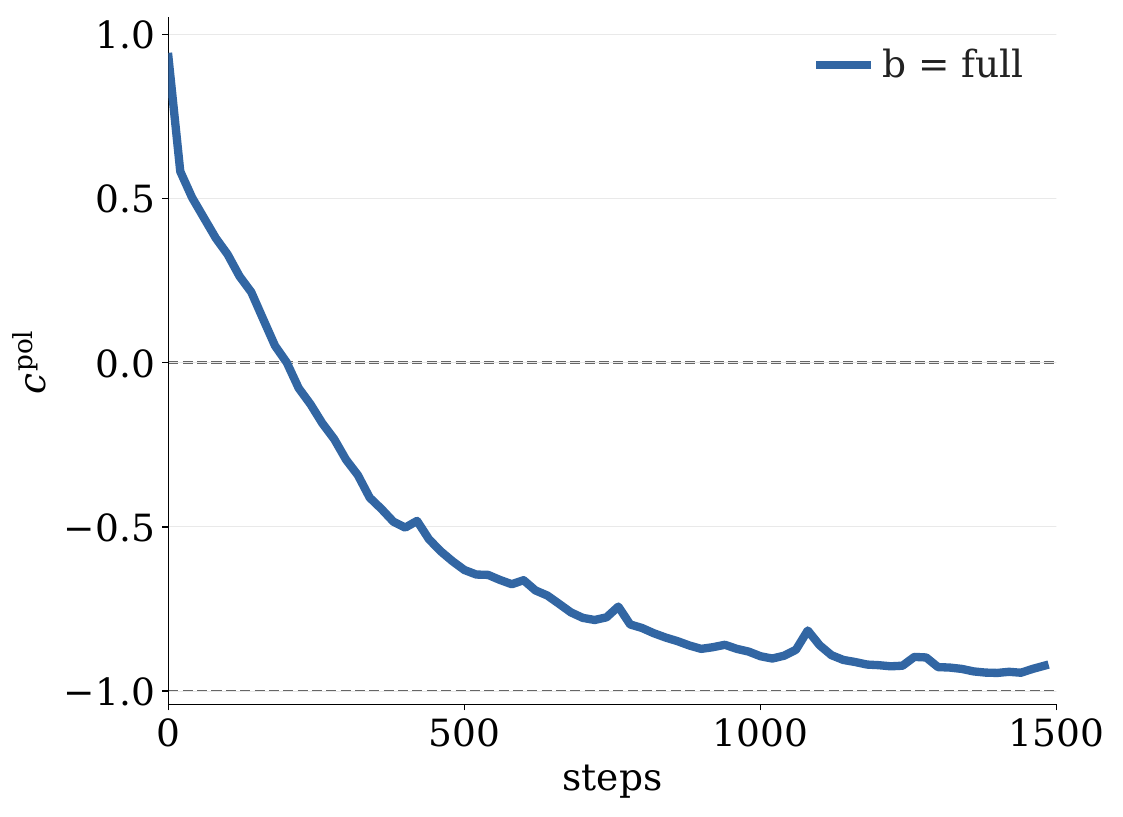}}\hfill
\subfloat[CNN: global\label{fig:network-layer-grid-panel-e}]{\includegraphics[width=.32\linewidth]{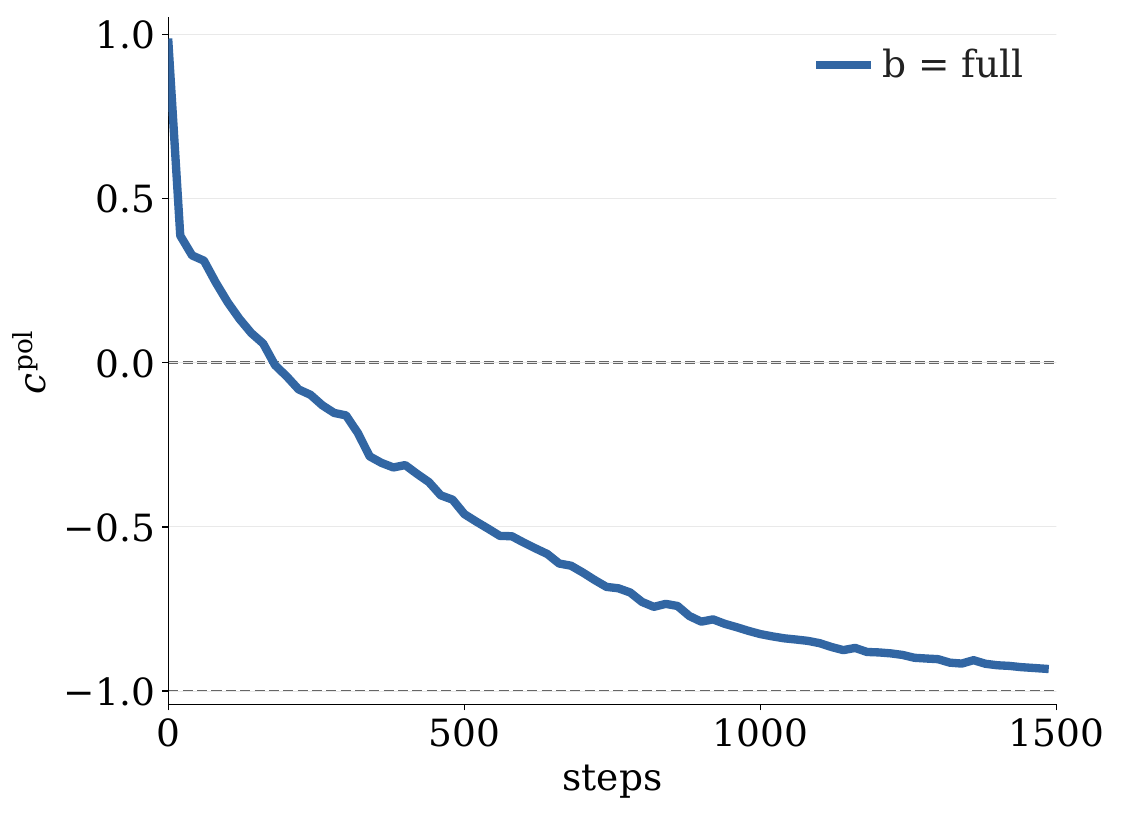}}\hfill
\subfloat[Transformer: global\label{fig:network-layer-grid-panel-f}]{\includegraphics[width=.32\linewidth]{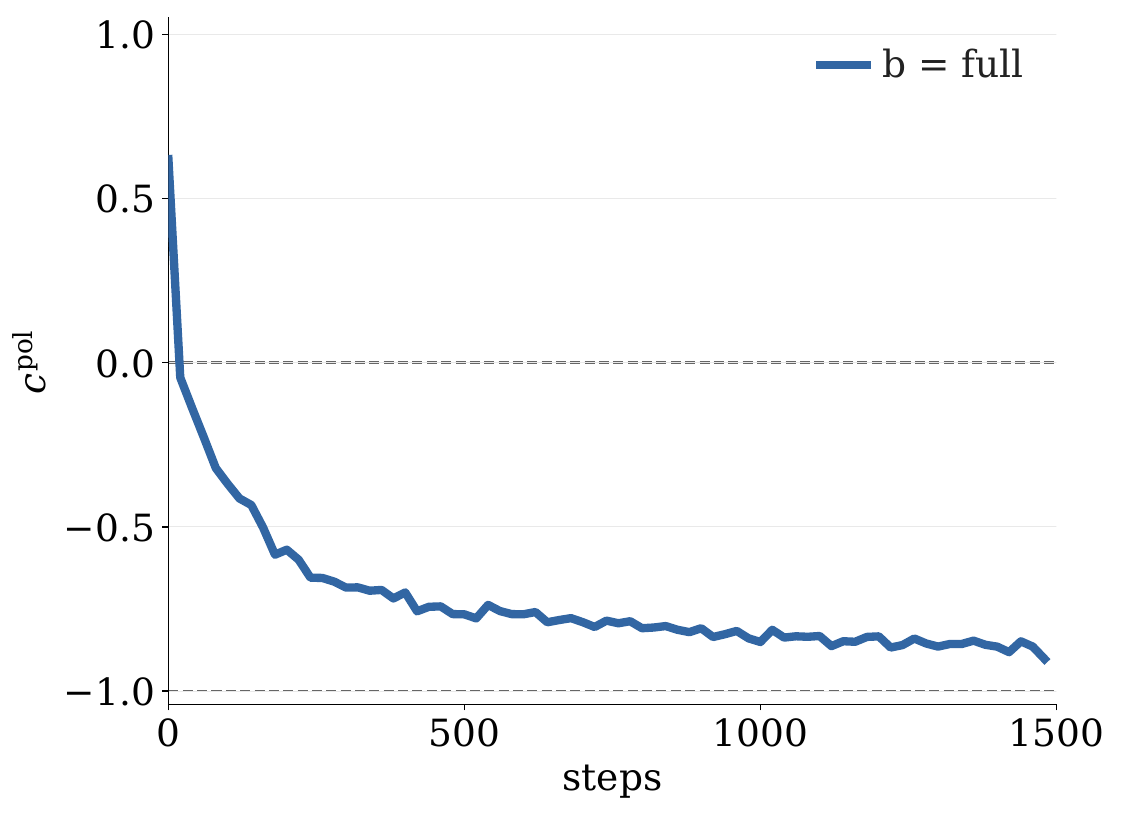}}
\caption{\textbf{Layerwise and global directional alignment.} These experiments use full-batch MLP, CNN, and Transformer models at $\eta=0.007$; they measure block-level and global update-direction cosines. Columns show MLP, CNN, and Transformer. The top row contains layer or block trajectories, and the bottom row contains global cosines computed from aggregated inner products and norms.}
\label{fig:network-layer-grid}
\end{figure}

\FloatBarrier

\subsection{Frozen-state and temporal alignment}
\label{app:fixedcp-protocol}
\label{subsec:fixedcp-noise}
In our experiment shown in \cref{fig:introduction-muon} and \cref{fig:controlled-diagnostics}, $c_k^{\mathrm{pol}}$ is quite close to zero.
However, unrelated directions in a high-dimensional space are already likely to be nearly orthogonal. A near-zero cosine alone therefore does not tell us whether the weak alignment comes from Muon's training dynamics or simply from minibatch randomness.

To distinguish these explanations, we compare two measurements. At selected
checkpoints spanning the warmup, stable, and decay stages of the WSD schedule,
we freeze the model state, draw independent minibatches, and compute the
corresponding Muon directions. Their pairwise cosines measure the variation
caused by minibatch sampling alone. We then compare them with the cosines
between consecutive directions along the actual training trajectory, where
each update changes the state used to compute the next direction.

In both seeds, the frozen-state mean cosines are small but positive. By
contrast, during the stable stage, the temporal mean cosines are approximately
$-0.031$ and $-0.032$, and more than $99\%$ of consecutive pairs are negative
(\cref{fig:fixedcp-noise}). Thus, minibatch dispersion can account for much of
the near-orthogonality, but it cannot explain the systematic negative bias
along training. The sign difference instead supports a feedback effect caused
by the evolving parameter state. Because the training trajectory jointly
updates the Muon and auxiliary AdamW parameter groups, this experiment does
not attribute the feedback to either optimizer alone.
The protocol follows below.

\begin{figure}[!t]
\centering
\subfloat[Seed 1337\label{fig:fixedcp-1337}]{
\includegraphics[width=0.49\linewidth]{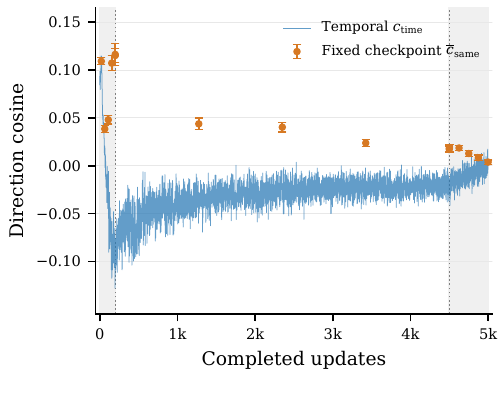}}\hfill
\subfloat[Seed 2026\label{fig:fixedcp-2026}]{
\includegraphics[width=0.49\linewidth]{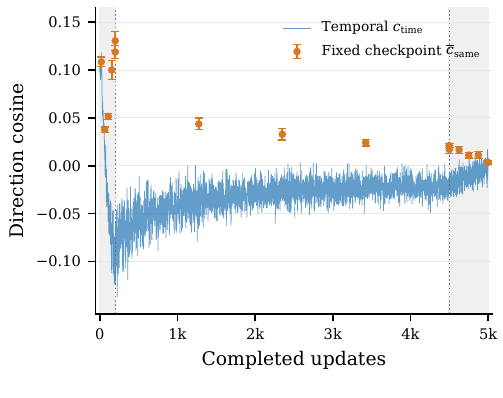}}
\caption{
Frozen-state and temporal direction alignment in a Tiny Transformer
trained with no-momentum NS-5 Muon at batch size eight.
Orange markers estimate $c_k^{\mathrm{same}}$ from all 120 unordered
pairs of 16 independent probe batches while holding the model state fixed;
bars show 95\% delete-one-batch jackknife intervals.
Blue curves show the realized consecutive-step statistic
$c_k^{\mathrm{time}}$. Both statistics use only the directions on the Muon parameter subspace. Dashed and dotted vertical lines mark zero and the
warmup--stable--decay boundaries, respectively.
}
\label{fig:fixedcp-noise}
\end{figure}
\noindent {\bf Direction statistics.}
Let $\widetilde{\bP}_k(B)$ collect the NS-5 Muon directions from batch $B$ at state $(\bW_k,\bZ_k)$, where $\bZ_k$ contains auxiliary parameters. Normalize them jointly as $\bU_k(B)=\widetilde{\bP}_k(B)/\|\widetilde{\bP}_k(B)\|_F$. For independent batches $B,B'$,
\begin{equation}
\begin{aligned}
c_k^{\mathrm{same}}
&:=\E_{B,B'}\!\left[\ipF{\bU_k(B)}{\bU_k(B')}\mid\bW_k,\bZ_k\right]\\
&=\left\|\E_B[\bU_k(B)\mid\bW_k,\bZ_k]\right\|_F^2\geq0.
\end{aligned}
\label{eq:fixed-state-null}
\end{equation}
Independent-batch dispersion can therefore produce near-orthogonality, but not a negative population-mean cosine at a fixed state. Finite-sample estimates can be negative. During training, we instead measure
\begin{equation}
c_k^{\mathrm{time}}:=\ipF{\bU_{k-1}(B_{k-1})}{\bU_k(B_k)}.
\label{eq:temporal-cosine}
\end{equation}
The first update changes the state used to compute the next direction.

\noindent {\bf Model and data.}
The decoder-only Transformer has four layers, width 96, four heads, FFN width 288, RMSNorm, rotary embeddings, a gated SiLU MLP, and tied token/output embeddings. It has 28 Muon matrices, vocabulary size 50,304, and context length 256.

The GPT-2-tokenized FineWeb-Edu cache~\citep{penedo2024fineweb}, \nolinkurl{fineweb_edu_gpt2_6b}, contains 6,000,004,097 training tokens and disjoint validation/test regions of 5,000,000 tokens each. Training windows are sampled with replacement. Each seed (1337 and 2026) runs 5,000 updates with batch eight and microbatch two: 2,048 tokens per update and 10,240,000 tokens in total.

\noindent {\bf Optimization.}
Muon uses NS-5, zero momentum, zero weight decay, and unit update scale. Gradients accumulate before NS-5. Training uses bfloat16; NS-5 uses float16; updates use float32 directions. Muon and auxiliary AdamW peak rates are 0.034 and $3\times10^{-4}$. AdamW uses betas $(0.9,0.95)$, $\epsilon=10^{-8}$, and weight decay 0.1 on matrices and zero on normalization vectors.

Both schedules use 200 warmup updates, a constant stage over $200\leq k<4500$, and 500 linear-decay updates ending at 10\% of peak rate. The environment is Python 3.12.3, PyTorch 2.7.0+cu128, and an RTX 4090.

\noindent {\bf Frozen-state probes.}
The index $k$ counts completed updates. Checkpoints are $20,65,110,154,199$ in warmup; $200,1275,2350,3424,4499$ in the constant stage; and $4500,4625,4750,4874,4999$ in decay. At each checkpoint, 16 independent batches of eight sequences give 120 direction pairs. All model parameters, including auxiliary parameters, remain fixed. Parameter/buffer digests are checked before and after probing; gradients are cleared and training random states restored.

We use the statistics defined above on the Muon subspace. Global and group cosines aggregate inner products and squared norms before normalization. Saved per-matrix Gram matrices and window starts allow reconstruction of pairwise statistics.

\noindent {\bf Uncertainty and indexing.}
Pairs share batches. We therefore use a delete-one-batch jackknife. Let $\hat\mu$ be the pair mean, $\hat\mu_{(-i)}$ exclude batch $i$, and $\overline\mu_{(-)}$ average these estimates. Then
\[
\widehat{\mathrm{se}}_{\rm J}^{\,2}
=\frac{15}{16}\sum_{i=1}^{16}(\hat\mu_{(-i)}-\overline\mu_{(-)})^2,
\qquad
\hat\mu\pm t_{0.975,15}\widehat{\mathrm{se}}_{\rm J}.
\]
These approximate pointwise intervals can be unreliable for a nearly degenerate pair statistic. Raw outputs also retain bounded-differences intervals. Neither interval measures across-seed uncertainty.

Temporal cosines are logged at every update and indexed by the ending state, as in \cref{eq:temporal-cosine}. The initial incoming cosine is undefined. Constant-stage summaries use 4,300 pairs per seed over $200\leq k<4500$. \cref{fig:fixedcp-noise} shows unsmoothed values, separately by seed, without temporal-window intervals.

\noindent {\bf Results.}
Frozen-state global means range from $0.0172$ to $0.1153$ for seed 1337 and from $0.0170$ to $0.1189$ for seed 2026. Temporal means are $-0.0314$ and $-0.0324$; $99.91\%$ and $99.70\%$ of pairs are negative. All 28 matrices have negative temporal means in both seeds.

\label{app:fixedcp-null}
The fixed-state population mean is nonnegative by \cref{eq:fixed-state-null}, although finite estimates may be negative. The observed sign difference supports update feedback beyond independent-batch dispersion. Joint Muon and AdamW updates prevent assigning this feedback to either optimizer alone.

\FloatBarrier
\section{Perturbation experiments}
\label{app:perturbations}
\subsection{22M setup and probes}
\label{app:perturb22m-protocol}
\noindent {\bf Model and data.}
The 22,518,016-parameter Transformer has 12 layers, width 256, eight heads, and FFN width 704. Muon updates 9,633,792 parameters; AdamW updates 12,884,224. It uses GPT-2-tokenized FineWeb \texttt{sample-10BT}, sequence length 4,096, microbatch four, and a fixed reference set of 32 sequences. Precision and NS-5 conventions match Appendix~\ref{app:130m-protocol}. Training, data, diagnostic, and evaluation seeds are 1337, 2026, 314159, and 271828.

\noindent {\bf Branches and schedule.}
All branches start at checkpoint 4,000, with reference loss $4.3578633$. The control uses $(\eta_{\rm peak},b)=(0.04,16)$; branches use $(0.02,16)$, $(0.08,16)$, $(0.04,8)$, and $(0.04,32)$. Auxiliary AdamW stays at peak rate $0.001$, betas $(0.8,0.999)$, weight decay $0.1$, and $\epsilon=10^{-8}$.

Training lasts 8,000 updates, with 400 warmup updates and 1,600 linear-decay updates ending at 10\% of peak rate. Branch rates are constant over $4000\leq k<6400$. A record indexed by $k$ describes the candidate update from state $k$; continuation records span 4000--7999.

\noindent {\bf Measurements.}
Conditional probes use eight draws for each batch $1,8,16,32$ every 500 updates from the branch point. The displayed curvature panels use probe batch size 16 for both pre-branch history and all branches, independently of training batch size. Probe estimators and pointwise Student-$t$ intervals are defined in Appendix~\ref{app:llm-estimators}. Cosines compare adjacent NS-5 directions and are logged every five updates. Fixed-reference loss and realized-update diagnostics are logged every 100 updates.

The last conditional, reference-loss, and cosine records occur at 7,500, 7,900, and 7,995. Curves connect recorded samples without smoothing or filling missing values. Validation uses 256 sequences every 500 updates and 4,096 sequences at completion.

\noindent {\bf Token budgets.}
Checkpoint 4,000 contains 262,144,000 processed tokens. After 4,000 further updates, batches 8, 16, and 32 reach 393,216,000, 524,288,000, and 786,432,000 tokens. Later batch-size comparisons therefore use different token budgets. These are single-seed branches from one checkpoint.

\subsection{130M learning-rate perturbation setup}
\label{app:perturb130m-protocol}
The 130M branches use the architecture in Appendix~\ref{app:130m-protocol}, no-momentum NS-5 Muon, and training batch size 32. At step 7,000, they change the Muon rate from $0.04$ to $0.02$, $0.04$, or $0.08$. The auxiliary AdamW schedule is unchanged across branches.

Each branch continues for 5,000 updates: 3,000 at constant rate, then 2,000 with linear decay to 10\% of peak rate. Validation uses 64 sequences every 1,000 updates. Both the 22M and 130M perturbation panels report global consecutive-direction cosines on the Muon parameter subspace. No conditional T1 probes are available for these 130M branches.

The 130M panels, \cref{fig:perturb130m-lr-panel-a,fig:perturb130m-lr-panel-b}, start at step 6,500. The pre-branch loss segment connects validations at steps 6,000 and 7,000; there is no validation measurement at step 6,500. These checkpoint branches are separate from the 130M runs with 40 tokens per parameter in Appendix~\ref{app:130m-protocol}.

\subsection{Loss and direction responses}
\label{app:perturb22m-evidence}
\noindent {\bf 22M conditional responses.}
Table~\ref{tab:perturb22m-initial} reports the initial conditional loss response. Intervals use eight paired increments and $t_{0.975,7}$; they measure probe uncertainty at one state. We use paired increments because curvature and boundary share an alignment term.

\begin{table}[t]
\centering
\caption{Conditional loss response at checkpoint 4000. Learning rates are
Muon rates; the probe batch equals the branch training batch.
All increments evaluate a Muon-only virtual update on the same finite
reference objective, with auxiliary parameters fixed.}
\label{tab:perturb22m-initial}
\begin{tabular}{@{}rrrrrl@{}}
\toprule
$\eta$ & $b$ & $\hat s^{\mathrm M}$ & $2\hat\rho/\eta$ & $\overline d$ & 95\% interval for $\overline d$\\
\midrule
0.02 & 16 & 17.182 & 34.453 & $-0.01967$ & $[-0.01999,-0.01936]$\\
0.04 & 16 & 17.169 & 17.227 & $-0.00026$ & $[-0.00138,\phantom{-}0.00086]$\\
0.08 & 16 & 16.964 & 8.613 & $\phantom{-}0.15219$ & $[\phantom{-}0.14650,\phantom{-}0.15789]$\\
0.04 & 8 & 13.830 & 14.575 & $-0.00340$ & $[-0.00512,-0.00167]$\\
0.04 & 32 & 20.321 & 19.406 & $\phantom{-}0.00417$ & $[\phantom{-}0.00349,\phantom{-}0.00485]$\\
\bottomrule
\end{tabular}
\end{table}

Over $5000\leq k<6400$, the 280 sampled cosines have means $-0.064$, $-0.100$, and $-0.117$ for learning rates $0.02$, $0.04$, and $0.08$. At rate $0.04$, batches 8, 16, and 32 give $-0.056$, $-0.100$, and $-0.167$. All five branches have 480 negative observations over $4000\leq k<6400$. These signs describe sampled pairs, not every step.

At checkpoint 4,000, doubling batch size raises coherence by $12.7\%$ and curvature by $18.4\%$, yielding a positive mean increment. Batch eight gives a negative mean increment. Thus a higher loss boundary alone does not imply greater one-step descent.

\noindent {\bf One-step probes and training trajectories.}
The doubled-rate branch has reference loss $4.3579$ at step 4,000, $4.4608$ at 4,100, and $4.2616$ at 6,000. The batch-eight branch reaches $4.3906$ at 4,100 despite its negative initial conditional increment. A Muon-only one-step probe does not determine later loss under complete updates.

The 22M reference-loss samples every 100 updates show transient changes, not stepwise oscillations.

\begin{figure}[!t]
\centering
\subfloat[Loss\label{fig:perturb22m-batch-loss}]{\includegraphics[width=.32\linewidth]{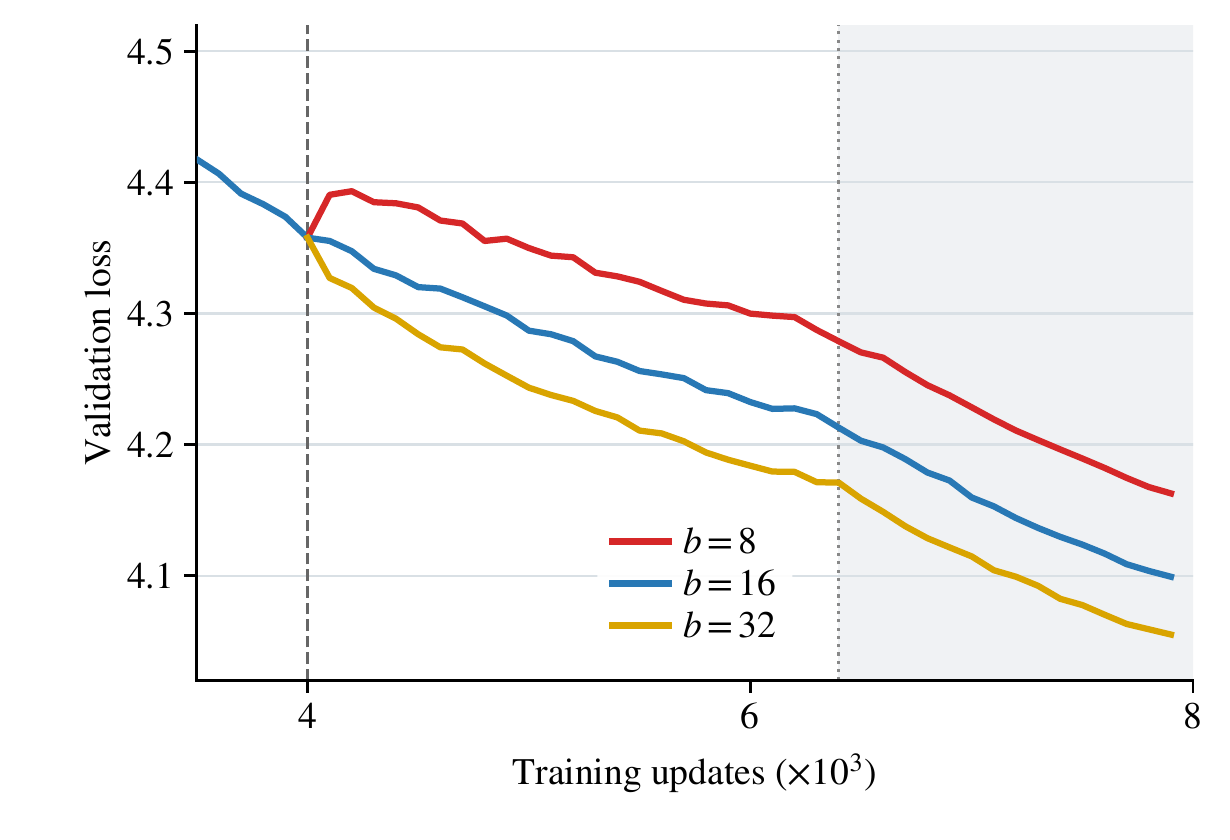}}
\hfill
\subfloat[Loss balance\label{fig:perturb22m-batch-balance}]{\includegraphics[width=.32\linewidth]{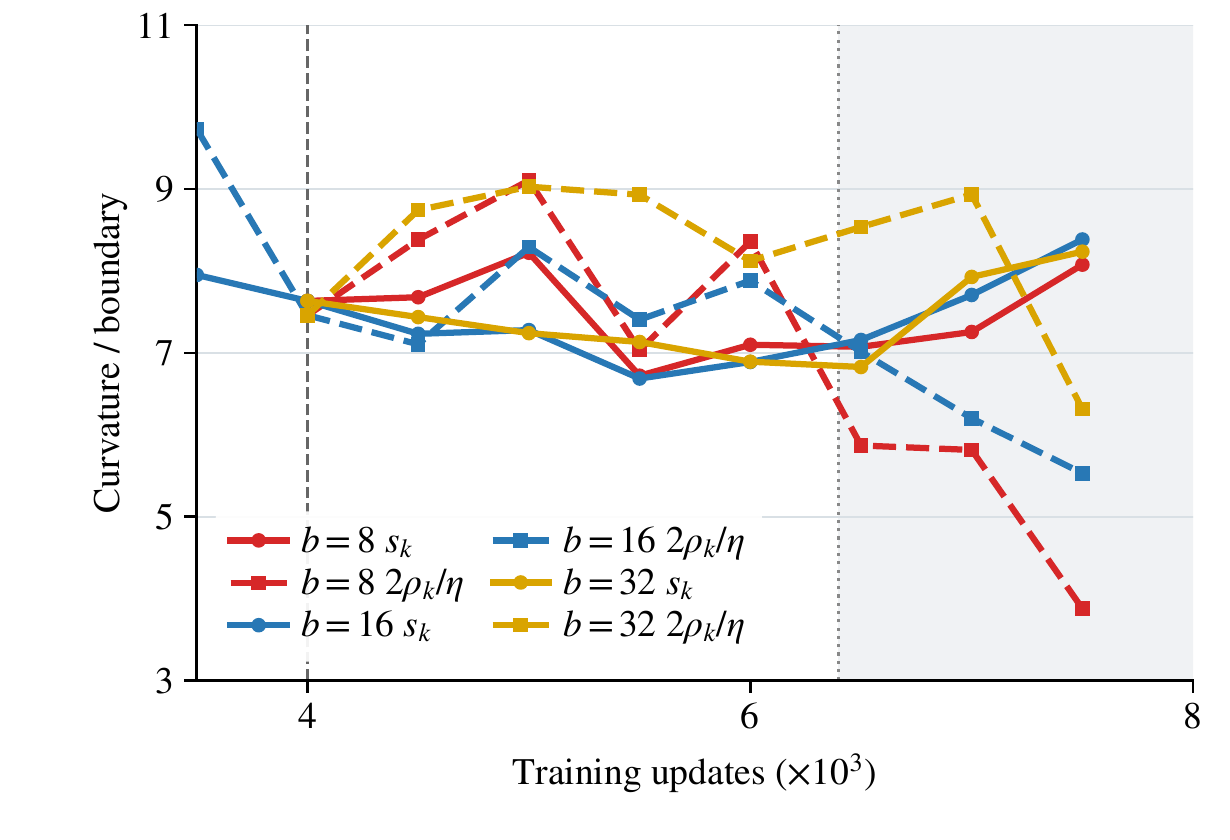}}
\hfill
\subfloat[Update directions\label{fig:perturb22m-batch-cpol}]{\includegraphics[width=.32\linewidth]{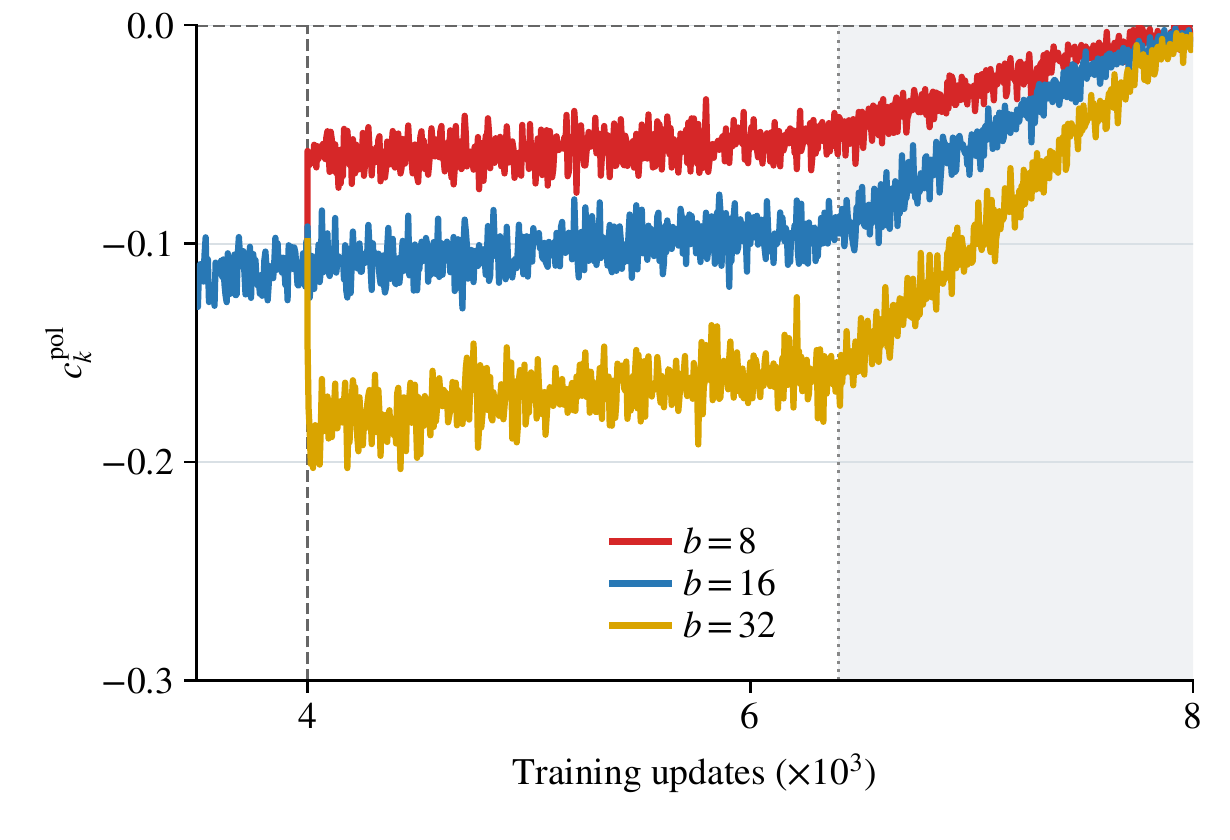}}
\caption{\textbf{Responses to checkpoint batch-size changes.} This one-seed experiment uses a 22M model with Muon peak learning rate $0.04$ and branch training batches $8$, $16$, and $32$. Panels show fixed-reference loss, conditional T1 curvature and boundary with probe batch size 16, and matrix-averaged consecutive-direction cosines. The displayed range is steps 3500--8000, with branching and schedule-change markers shown.}
\label{fig:perturb22m-batch}
\end{figure}

\FloatBarrier

\begin{wrapfigure}{r}{0.48\textwidth}
\centering
\includegraphics[width=\linewidth]{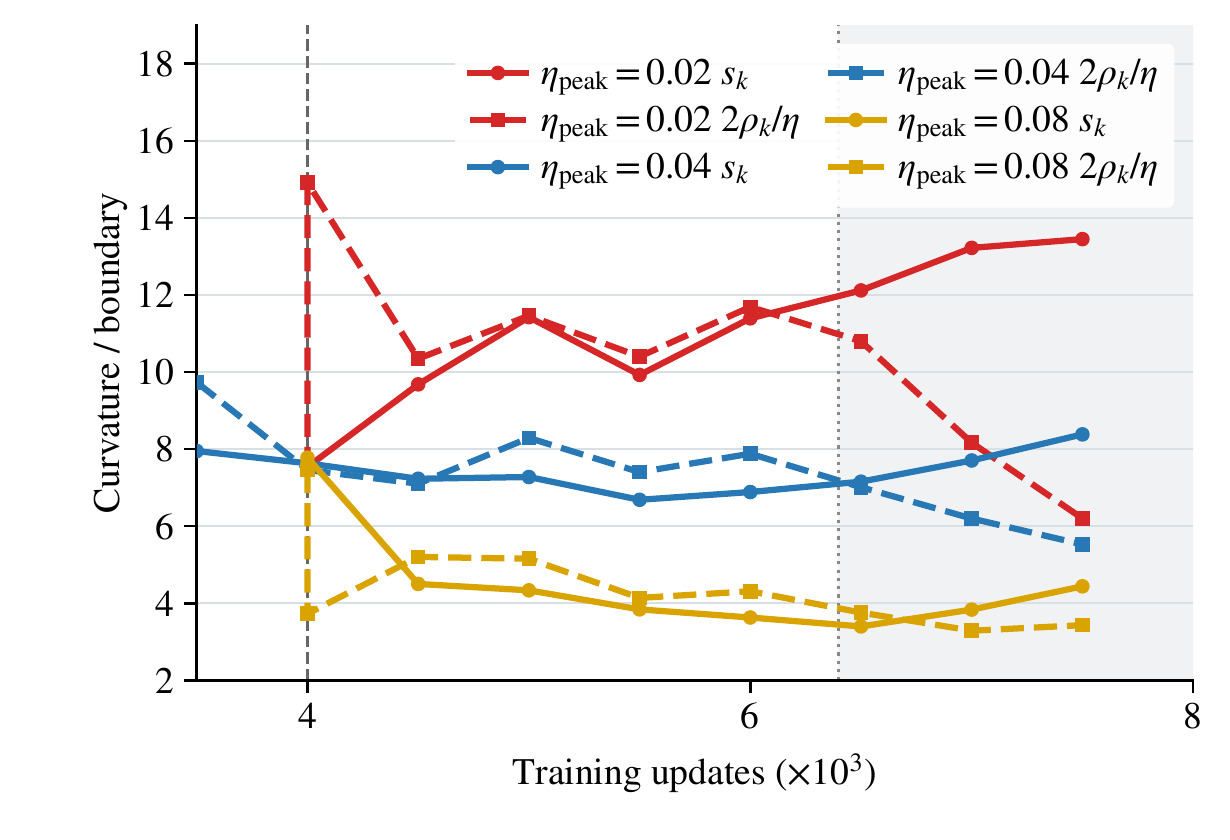}
\caption{\textbf{22M loss balance after learning-rate changes.} Muon rates are $0.02$, $0.04$, and $0.08$. Curves compare conditional curvature with $2\hat\rho_k/\eta_k$ at probe batch size 16. Training batch size stays at 16.}
\label{fig:perturb22m-lr-balance}
\end{wrapfigure}

\noindent {\bf Recovery of loss balance and alignment.}
\cref{fig:perturb22m-lr-balance} shows the 22M conditional curvature and loss-neutral boundary after learning-rate changes. Sparse probes show a return toward loss balance, but do not resolve its recovery time. The cosine also tends to return toward its pre-perturbation value after either increasing or decreasing the rate. This trend does not require the branch means to coincide.

\noindent {\bf 130M responses.}
The 130M branches show the same tendency in the global direction cosine (\cref{fig:perturb22m-lr}\subref{fig:perturb130m-lr-panel-b}). Increasing the rate initially makes alignment more negative; decreasing it makes alignment less negative. Both responses then move back toward the pre-perturbation value. Validation loss improves over the full continuation (\cref{fig:perturb22m-lr}\subref{fig:perturb130m-lr-panel-a}). These loss and cosine records do not measure the conditional T1 boundary.

Both models enter learning-rate decay during the displayed continuation. Later cosine changes therefore reflect the schedule as well as the earlier intervention. The observed return is a trend, not evidence of convergence to a shared cosine value.

\FloatBarrier
\section{Results on 130M and 1B pretraining}
\label{app:llm-settings}
\label{app:llm-figures}
\subsection{130M pre-training protocol}
\label{app:130m-protocol}
\noindent {\bf Model, data, and budget.}
The 130M Llama-like LLM has 12 layers, width 768, 12 heads, FFN width 2,304, and vocabulary size 50,304. Of 130,665,216 parameters, Muon updates 92,012,544 and AdamW updates 38,652,672.

GPT-2-tokenized FineWeb \texttt{sample-10BT} supplies six billion training tokens and 16,777,216 validation tokens from disjoint documents. Training windows are sampled with replacement. Batch size 32 and sequence length 4,096 give 131,072 tokens per update. The 39,876 updates process 5,226,627,072 tokens, about 40 per parameter. Microbatch size is two.

\begin{table}[t]
\centering
\caption{Recorded 130M pre-training configuration.}
\begin{tabular}{lp{0.60\linewidth}}
\toprule
Item & Setting\\
\midrule
Muon & NS-5; momentum 0; weight decay 0; update scale 1\\
Muon peak learning rates & 0.02, 0.04, 0.08, 0.12\\
Auxiliary optimizer & AdamW; peak LR 0.001; betas $(0.8,0.999)$; $\epsilon=10^{-8}$; weight decay 0.1\\
Schedule & Linear warmup fraction 0.05; constant stage; decay fraction 0.2; final LR fraction 0.1\\
Training / data seed & 1337 / 2026\\
Diagnostic / evaluation seed & 314159 / 271828\\
Training / probe precision & bfloat16 / float32; NS arithmetic float32\\
Reference objective & 32 sequences, 131,072 tokens\\
Conditional probes & Batch sizes 1, 8, 16, 32; eight draws; every 500 updates\\
Other diagnostics & Realized reference diagnostics every 100 updates; adjacent-direction cosine every 5 updates\\
Validation & Every 500 updates, 256 sequences; final evaluation 4096 sequences\\
Environment & Python 3.11.5; PyTorch 2.8.0+cu128; CUDA 12.8; NVIDIA A100-PCIE-40GB\\
\bottomrule
\end{tabular}
\end{table}

\subsection{Conditional probes and update scope}
\label{app:llm-estimators}
The 22M and 130M probes use no-momentum NS-5 with unit scale and zero Muon weight decay. Write $\bm W$ for Muon matrices and $\bm Z$ for auxiliary parameters. Each probe evaluates $L_R(\bm W_k-\eta_k\widetilde{\bm P}_{k,b}^{(r)},\bm Z_k)$, holding $\bm Z_k$ fixed. Training instead updates both groups. \cref{fig:llm-audit}(b) compares their realized loss changes.

\noindent {\bf Estimators.}
Write $\bm{G}_{R,j}=\nabla_{\bm{W}_j}L_R(\bm{W}_k,\bm{Z}_k)$ and $N_R=\sum_j\|\bm{G}_{R,j}\|_*$. For probe $r$, define
\[
 A_r=\sum_j\langle \bm{G}_{R,j},\widetilde{\bm{P}}_{k,b,j}^{(r)}\rangle_{\mathrm{F}},
 \qquad d_r=L_R(\bm{W}_k-\eta_k\widetilde{\bm{P}}_{k,b}^{(r)},\bm{Z}_k)-L_R(\bm{W}_k,\bm{Z}_k).
\]
The plotted conditional quantities are
\[
 \hat\rho_{k,b}=\frac{\overline A}{N_R},\qquad
 \hat s^{\mathrm M}_{k,b}=\frac{2(\overline d+\eta_k\overline A)}{\eta_k^2N_R}.
\]
The difference between the two T1 curves has the sign of $\overline d$.
This loss identity holds for the implemented directions. Exact-polar
properties, such as unit operator norm and full-batch coherence one,
are not assumed for NS-5.
The plotted $c_k^{\mathrm{pol}}$ is the Frobenius cosine between consecutive
implemented directions on the Muon parameter subspace. Their displacements
obey \cref{eq:two-step-displacement} , but cosine $-1$ alone need not give
an exact two-step return under NS-5.

The maximum recorded duality discrepancies are 20.8\%, 22.6\%, 27.9\%,
and 18.8\% in increasing learning-rate order. They measure first-order
pairing error relative to the nuclear norm, not direction error.
The \texttt{reference} probe uses a finite sample and NS-5, so it need
not have coherence one. Monte Carlo intervals exclude reference-sample
uncertainty.

\noindent {\bf Sampling and uncertainty.}
Eight draws per batch size give pointwise 95\% intervals using $t_{0.975,7}$. Loss-sign tests use paired increments $d_r$. These intervals measure conditional probe uncertainty, not variation across seeds or reference samples.

Cosines compare consecutive Muon directions and are logged every five updates. The pair has lag one; five is the logging interval. Unrecorded pairs do not determine a first-crossing time.

\noindent {\bf Plotting.}
Curves use the scheduled learning rate and connect samples without smoothing. The introductory 130M curvature panel uses a symmetric-log scale with linear threshold 10. \cref{fig:llm-batch} uses 80 checkpoints from 0 to 39,500; \cref{fig:llm-layer} covers $2000\leq k<31900$. Single-rate panels use peak rate $0.02$; multi-rate panels include all four runs.

\noindent {\bf Dense early probes.}
Separate same-seed runs sample steps $0,10,\ldots,490$ with the same schedules. Their trajectories differ from the long runs: maximum early loss differences are $0.02129,0.03554,0.04501,0.04016$ in increasing-rate order. Plots overlay these runs without joining them; insets show steps 10--490. Full-horizon summaries use only the long runs.

\noindent {\bf Final validation and provenance.}
The final validation losses are 3.35085, 3.32335, 3.31944, and 3.33932 for increasing Muon peak learning rate. These are one-seed observations, not a statistically resolved ranking. The runs report completion at 39,876 steps. The recorded source hash is \nolinkurl{5556127225dd13877c4f432ddeecf706509c1fb6a7cbace30a0b6b763d67e403}.

\subsection{130M loss increments and complete-update balance}
\label{app:toc-7}

Figures~\ref{fig:llm-audit}, \ref{fig:llm-audit-004}, \ref{fig:llm-audit-008}, and~\ref{fig:llm-audit-012} compare conditional Muon-only increments with realized Muon-only and complete-update increments at each learning rate, respectively.

Let $\Delta\bm\theta_k$ be the complete Muon-plus-AdamW displacement.
On the fixed reference objective, define
\[
A_k=-\langle\nabla L_R(\bm\theta_k),\Delta\bm\theta_k\rangle,
\qquad
D_k=L_R(\bm\theta_k+\Delta\bm\theta_k)-L_R(\bm\theta_k).
\]
Using the sum $N_k$ of Muon reference-gradient nuclear norms, set
\[
\widetilde s_k^{\rm full}=\frac{2(D_k+A_k)}{\eta_k^2N_k},
\qquad
\widetilde\rho_k^{\rm full}=\frac{A_k}{\eta_kN_k}.
\]
Then
\[
D_k=\frac{\eta_k^2N_k}{2}
\left(\widetilde s_k^{\rm full}
-\frac{2\widetilde\rho_k^{\rm full}}{\eta_k}\right).
\]
These are realized full-update quantities, normalized on the Muon scale.
They are not conditional means, and $\widetilde\rho_k^{\rm full}$ need
not lie in $[-1,1]$. Direction cosines cover the Muon subspace only;
they do not include auxiliary AdamW updates.

At peak LR 0.02, paired conditional loss intervals have positive lower
endpoints at steps 3500, 9000, and 9500: approximately
$7.45\times10^{-3}$, $4.11\times10^{-4}$, and $8.48\times10^{-4}$.
Positive lower endpoints also occur at steps 16,500 and 38,000.
These are pointwise observations, not a corrected test of onset across
all checkpoints.

\begin{figure}[!t]
\centering
\subfloat[$\eta_{\rm peak}=0.02$\label{fig:llm-lr-loss-appendix-panel-a}]{\includegraphics[width=.24\linewidth]{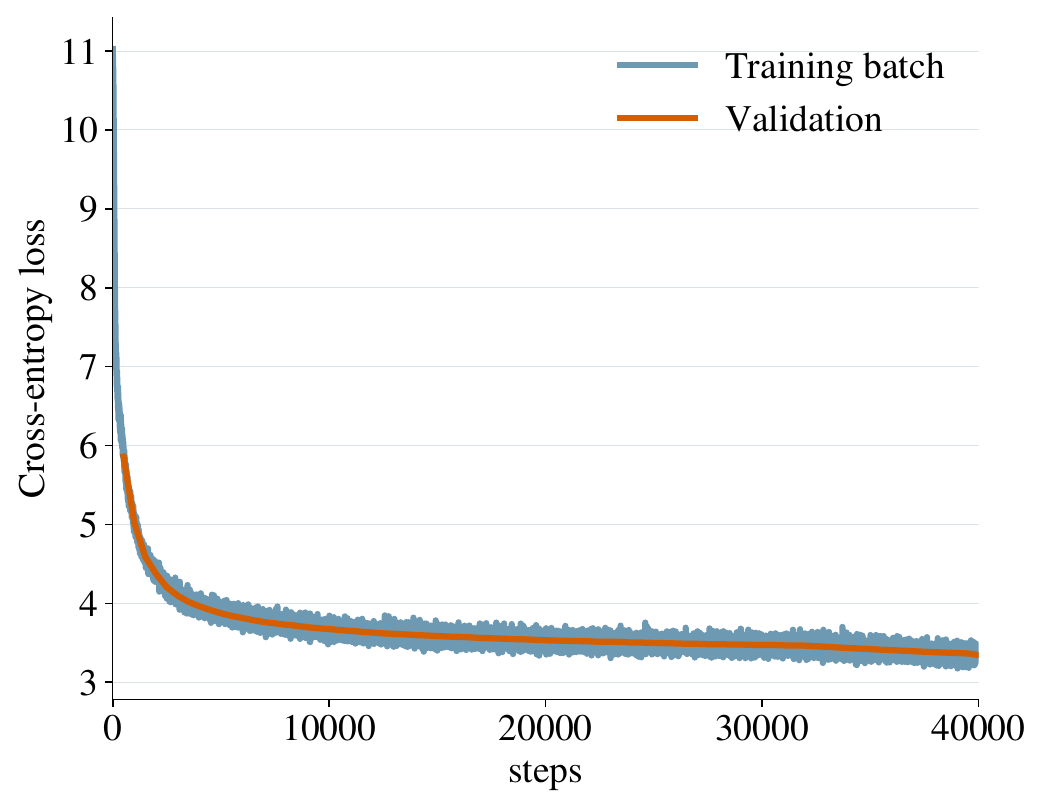}}\hfill
\subfloat[$\eta_{\rm peak}=0.04$\label{fig:llm-lr-loss-appendix-panel-b}]{\includegraphics[width=.24\linewidth]{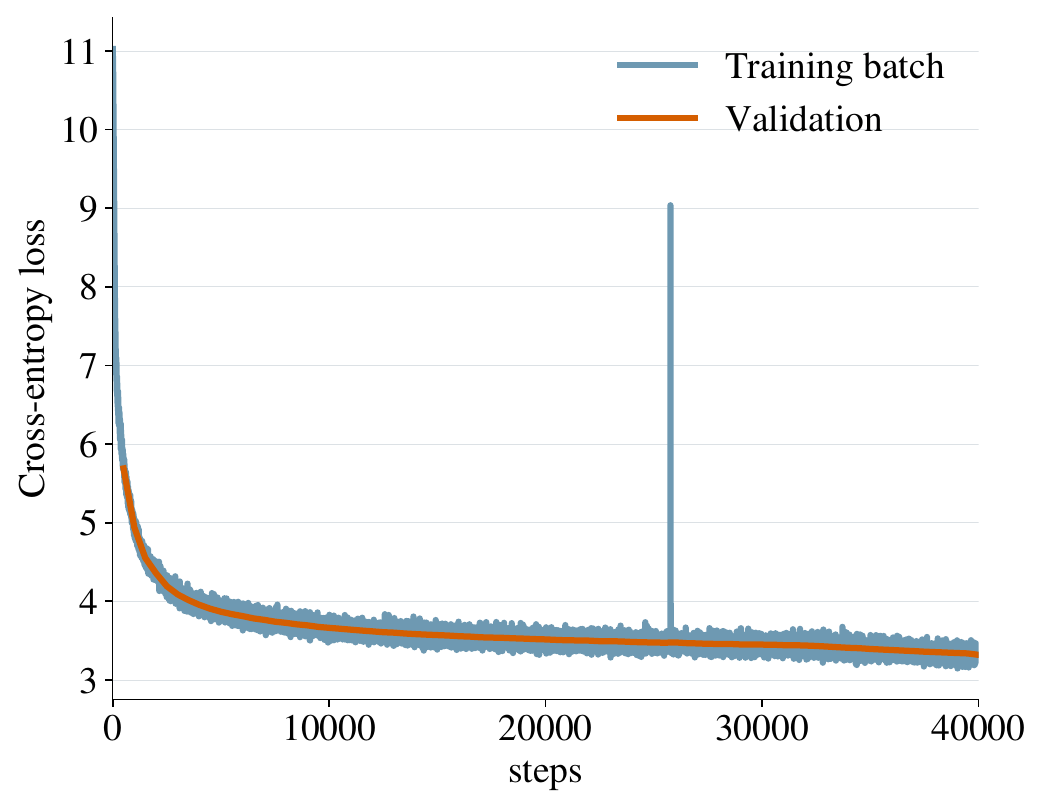}}\hfill
\subfloat[$\eta_{\rm peak}=0.08$\label{fig:llm-lr-loss-appendix-panel-c}]{\includegraphics[width=.24\linewidth]{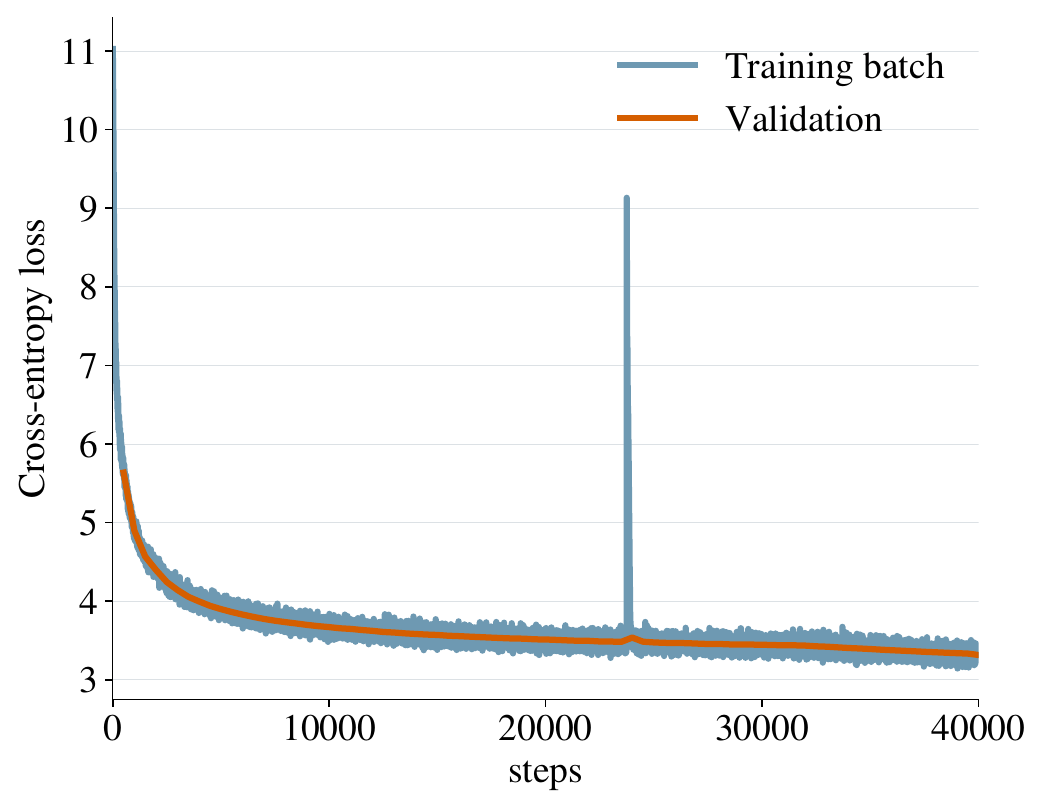}}\hfill
\subfloat[$\eta_{\rm peak}=0.12$\label{fig:llm-lr-loss-appendix-panel-d}]{\includegraphics[width=.24\linewidth]{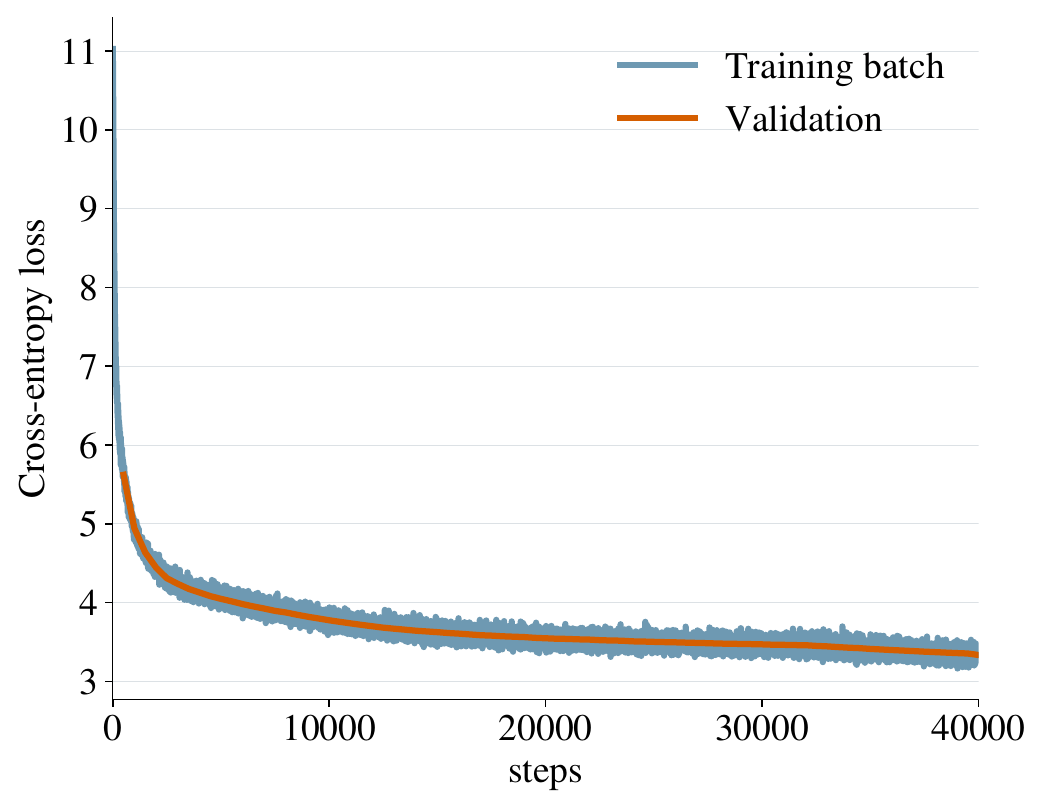}}
\caption{\textbf{Training and validation loss across learning rates.} This experiment uses the 130M Llama-like LLM with no-momentum NS-5 Muon at four peak learning rates; it measures training and validation loss. (a)--(d) Curves for peak rates $0.02$, $0.04$, $0.08$, and $0.12$.}
\label{fig:llm-lr-loss-appendix}
\end{figure}

\begin{figure}[!t]
\centering
\subfloat[Paired increments\label{fig:llm-audit-panel-a}]{\includegraphics[width=.32\linewidth]{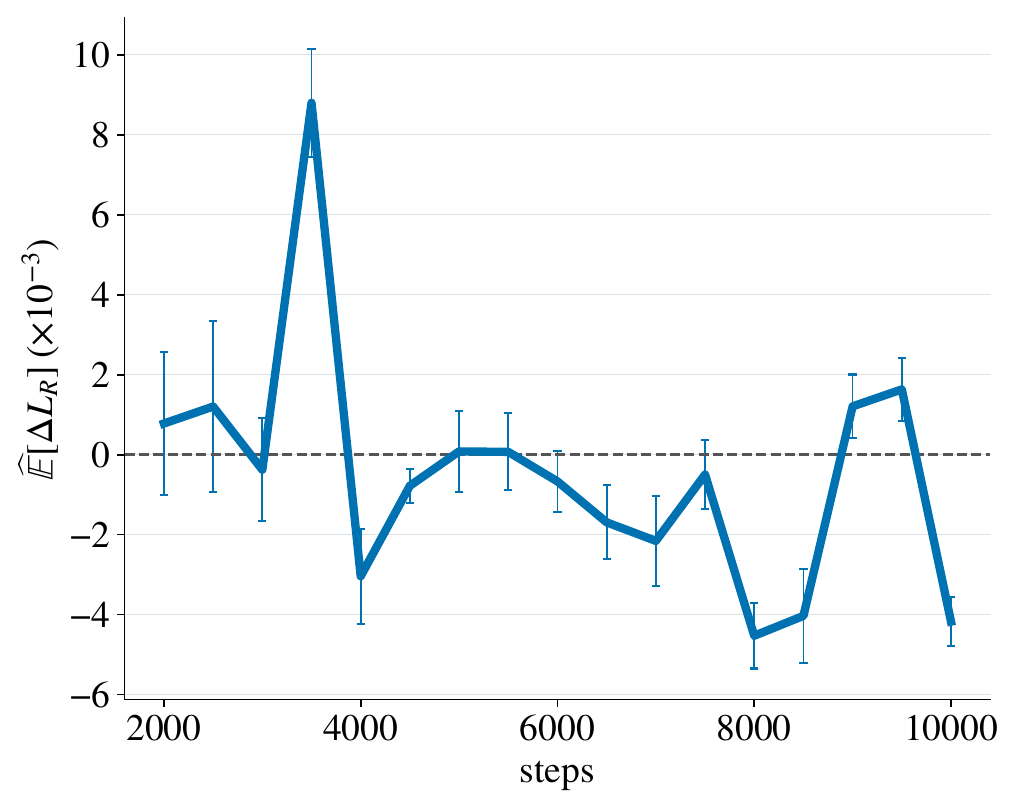}}\hfill
\subfloat[Realized loss increments\label{fig:llm-audit-panel-b}]{\includegraphics[width=.32\linewidth]{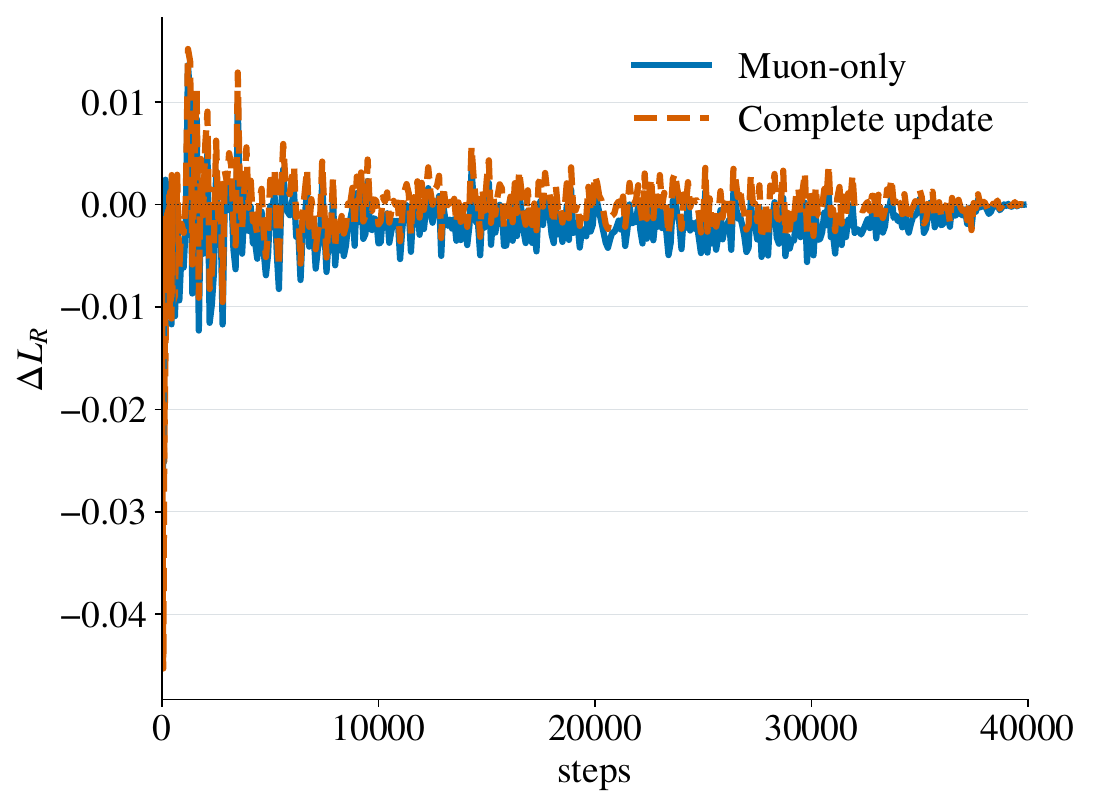}}\hfill
\subfloat[Complete-update balance\label{fig:llm-audit-panel-c}]{\includegraphics[width=.32\linewidth]{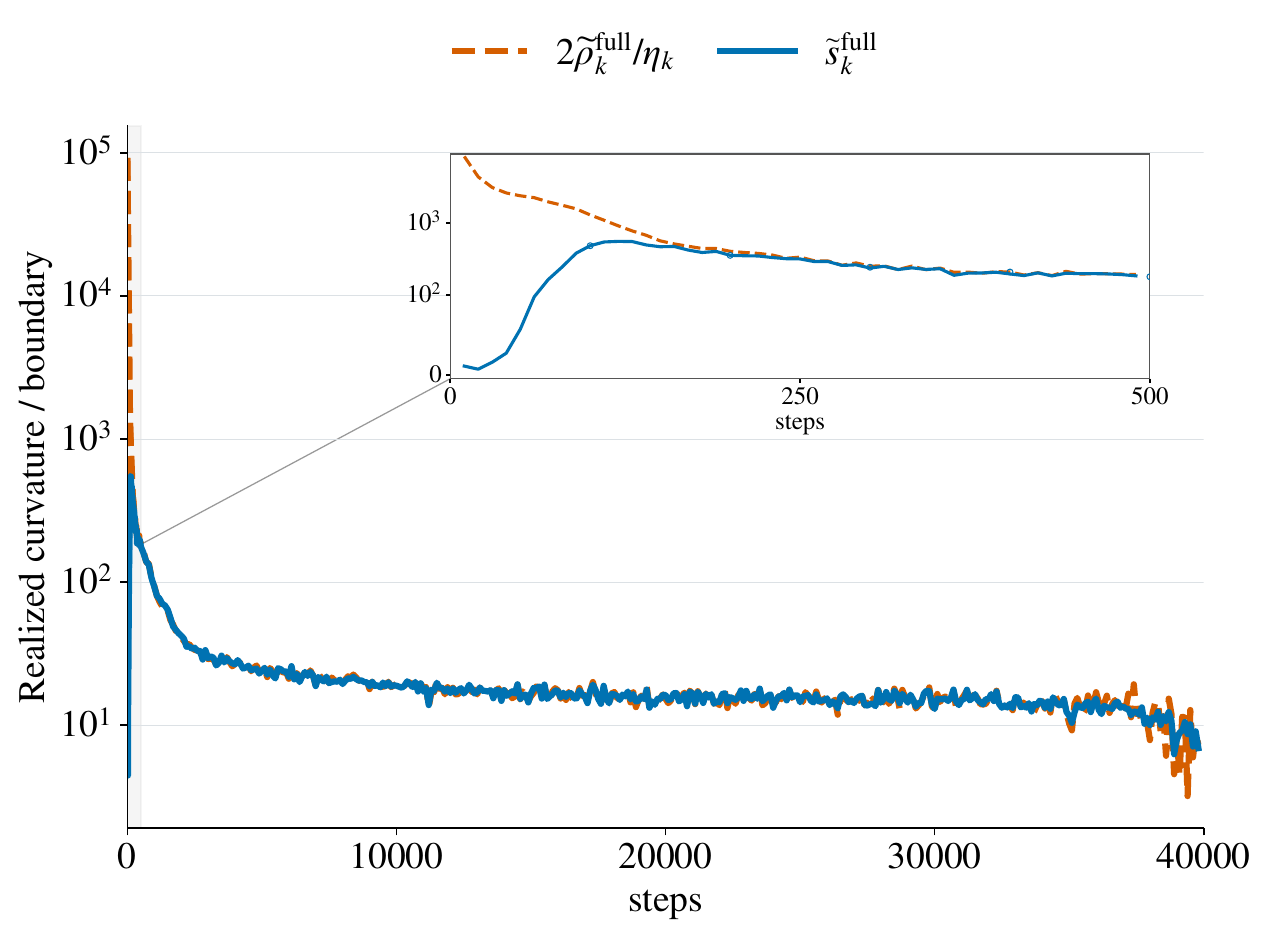}}
\caption{\textbf{Reference-loss checks at peak rate $0.02$.} (a) Conditional Muon-only increments with pointwise 95\% intervals. (b) Realized Muon-only and complete-update increments. (c) Complete-update curvature $\widetilde s_k^{\rm full}$ and boundary $2\widetilde\rho_k^{\rm full}/\eta_k$. (b) and (c) include dense early probes; the inset covers steps 10--490.}
\label{fig:llm-audit}
\end{figure}

\begin{figure}[!t]
\centering
\subfloat[Paired increments\label{fig:llm-audit-004-panel-a}]{\includegraphics[width=.32\linewidth]{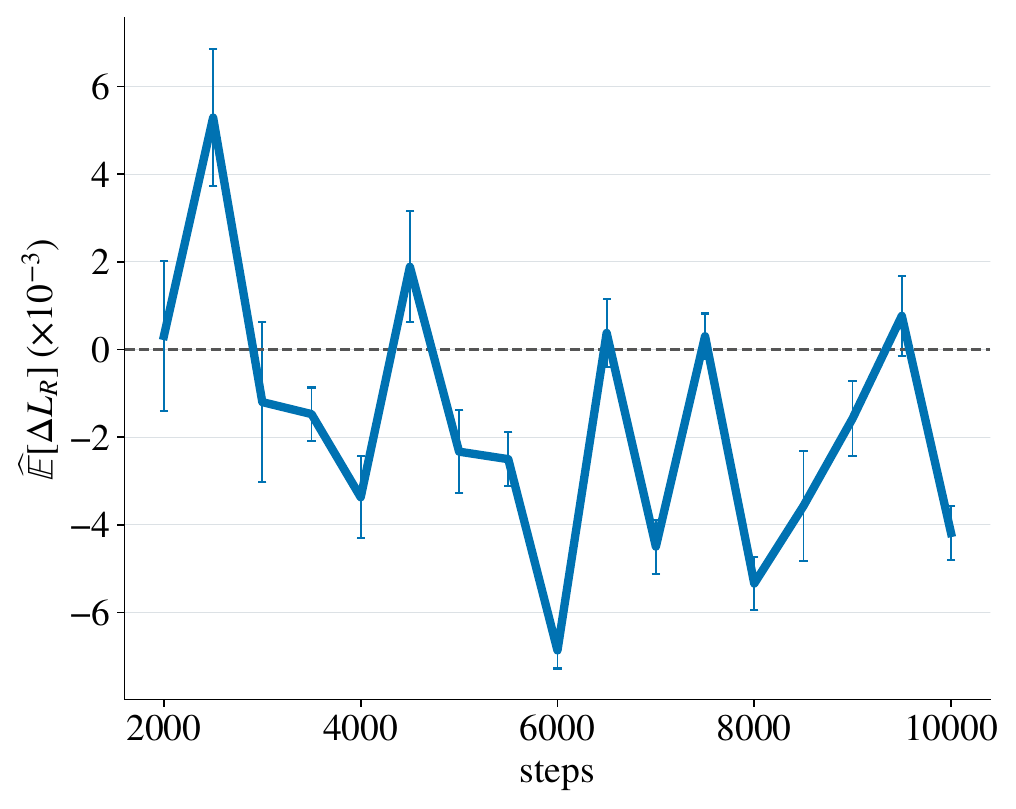}}\hfill
\subfloat[Realized loss increments\label{fig:llm-audit-004-panel-b}]{\includegraphics[width=.32\linewidth]{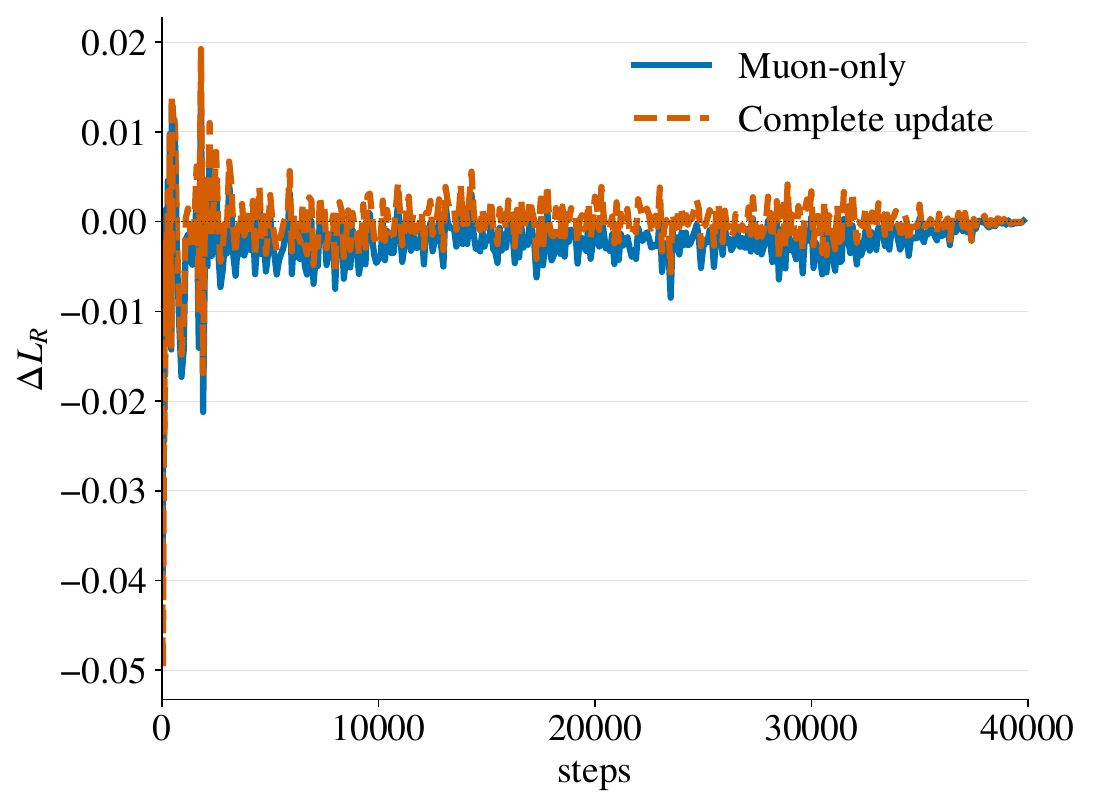}}\hfill
\subfloat[Complete-update balance\label{fig:llm-audit-004-panel-c}]{\includegraphics[width=.32\linewidth]{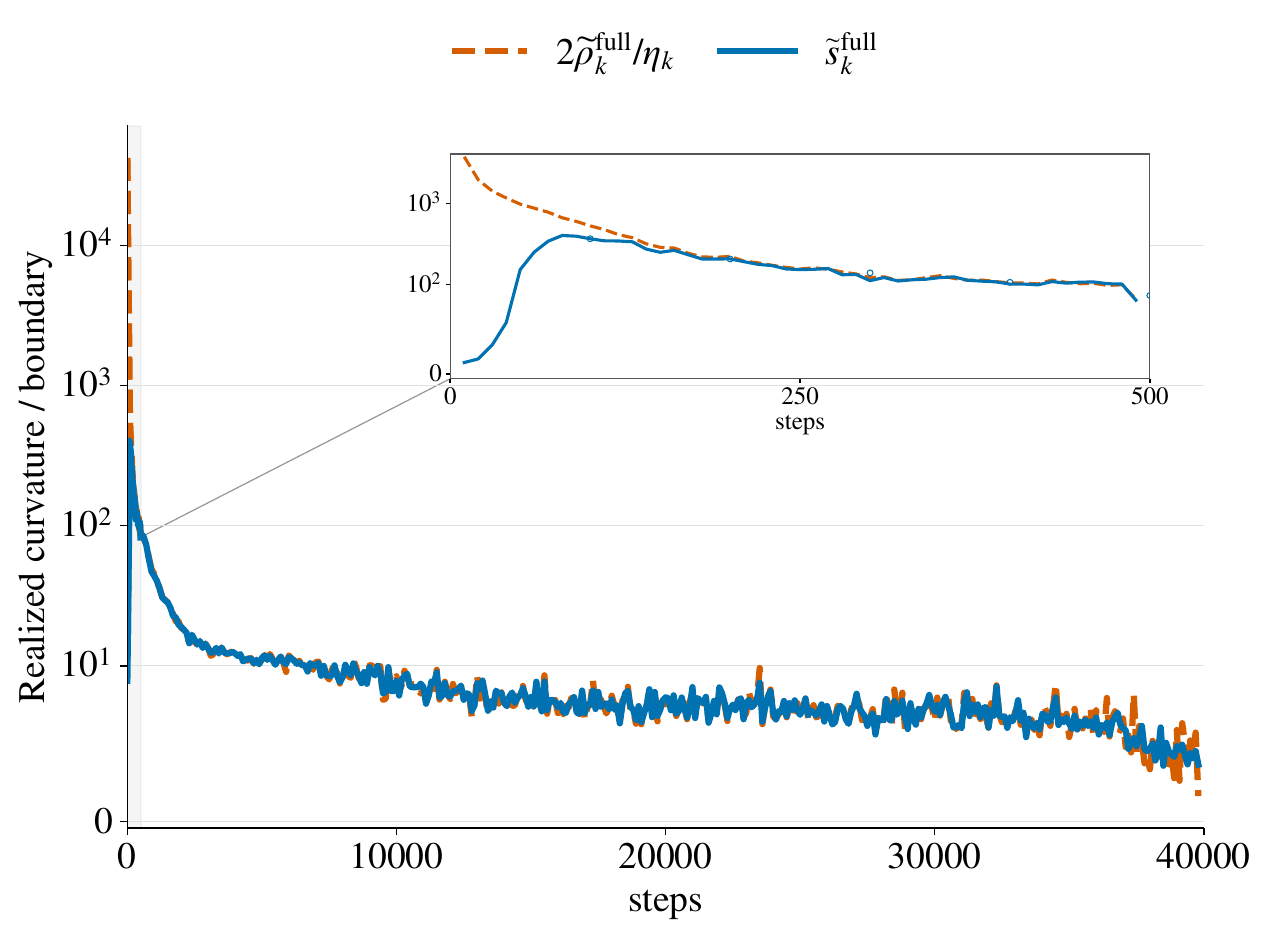}}
\caption{\textbf{Reference-loss checks at peak rate $0.04$.} (a) Conditional Muon-only increments with pointwise 95\% intervals. (b) Realized Muon-only and complete-update increments. (c) Complete-update curvature $\widetilde s_k^{\rm full}$ and boundary $2\widetilde\rho_k^{\rm full}/\eta_k$. (b) and (c) include dense early probes; the inset covers steps 10--490.}
\label{fig:llm-audit-004}
\end{figure}

\begin{figure}[!t]
\centering
\subfloat[Paired increments\label{fig:llm-audit-008-panel-a}]{\includegraphics[width=.32\linewidth]{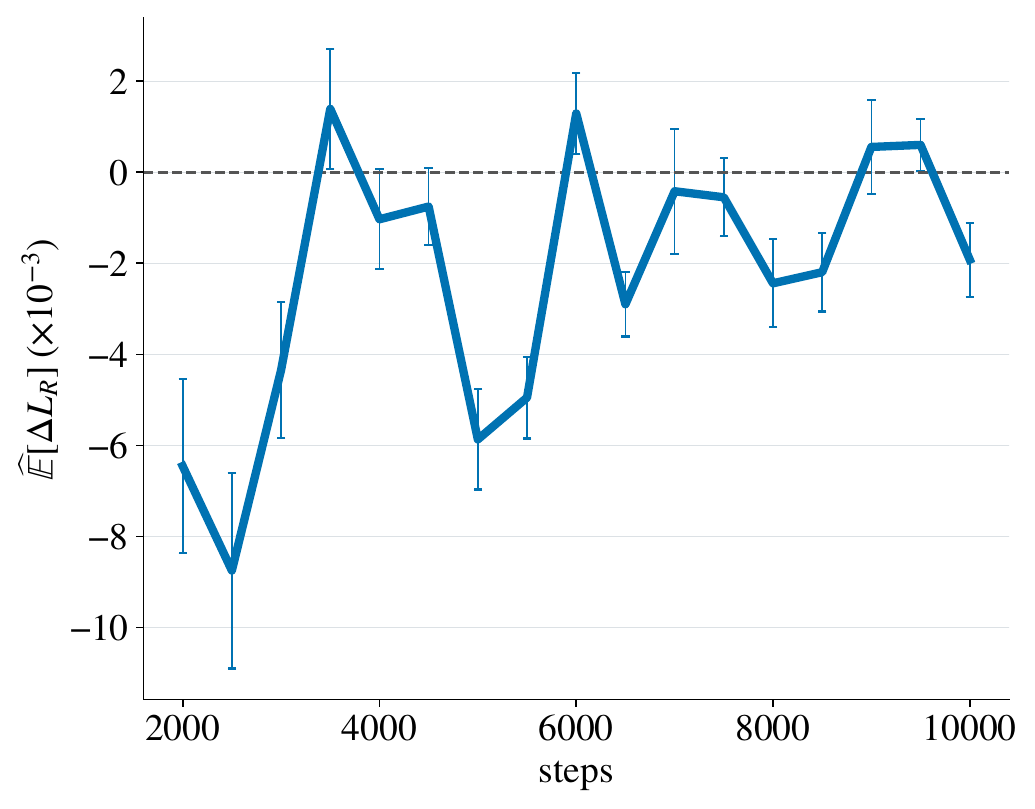}}\hfill
\subfloat[Realized loss increments\label{fig:llm-audit-008-panel-b}]{\includegraphics[width=.32\linewidth]{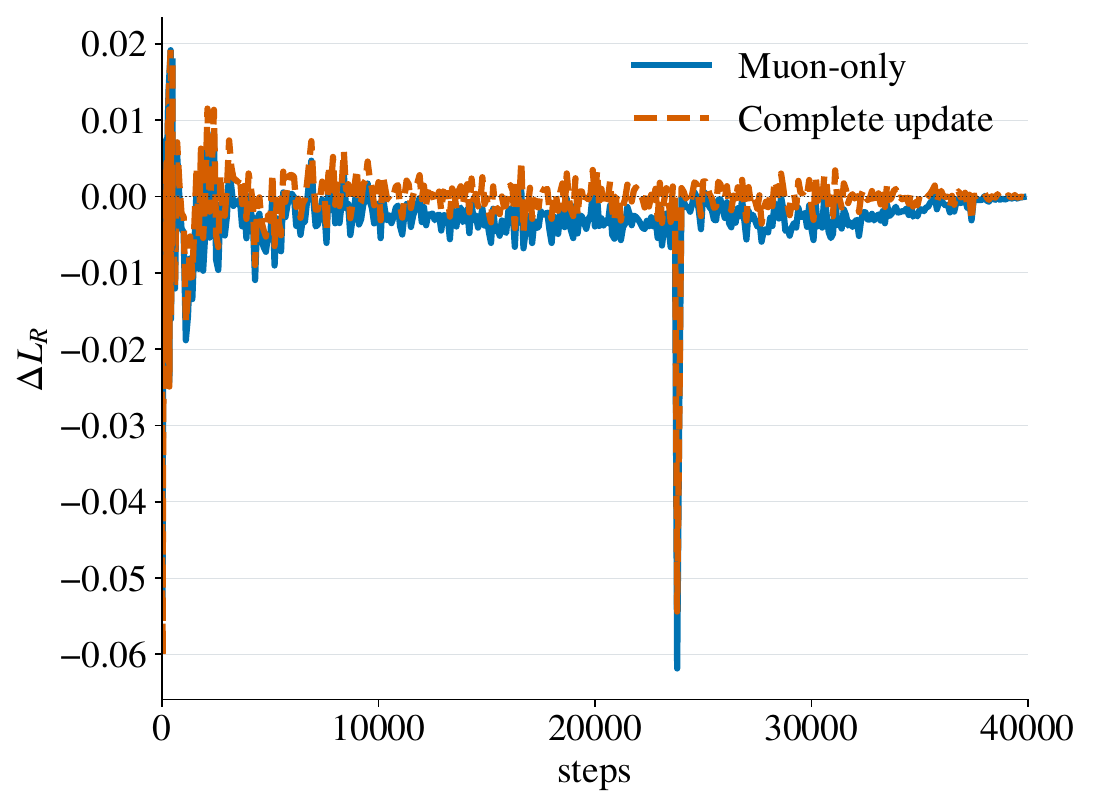}}\hfill
\subfloat[Complete-update balance\label{fig:llm-audit-008-panel-c}]{\includegraphics[width=.32\linewidth]{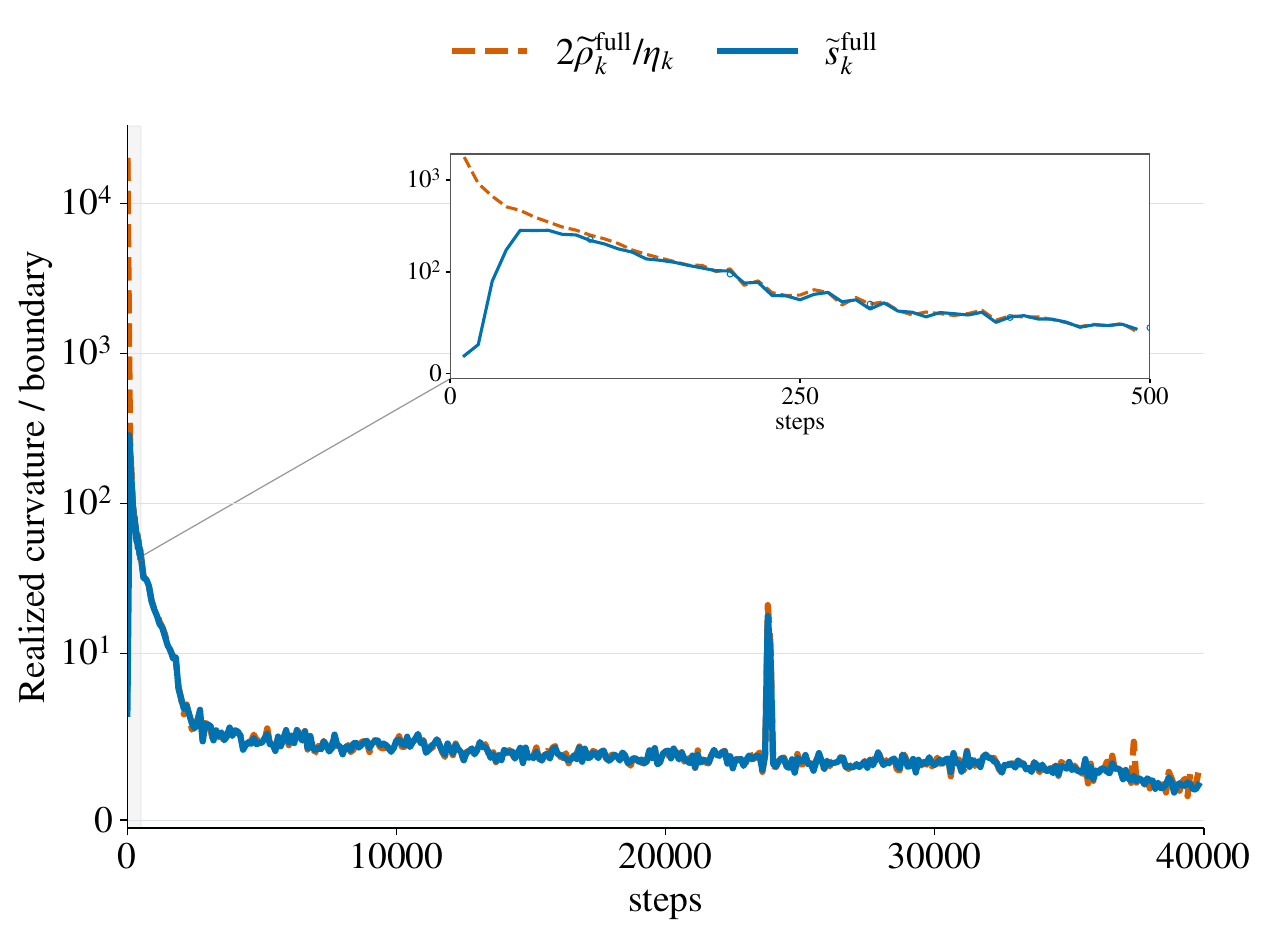}}
\caption{\textbf{Reference-loss checks at peak rate $0.08$.} (a) Conditional Muon-only increments with pointwise 95\% intervals. (b) Realized Muon-only and complete-update increments. (c) Complete-update curvature $\widetilde s_k^{\rm full}$ and boundary $2\widetilde\rho_k^{\rm full}/\eta_k$. (b) and (c) include dense early probes; the inset covers steps 10--490.}
\label{fig:llm-audit-008}
\end{figure}

\begin{figure}[!t]
\centering
\subfloat[Paired increments\label{fig:llm-audit-012-panel-a}]{\includegraphics[width=.32\linewidth]{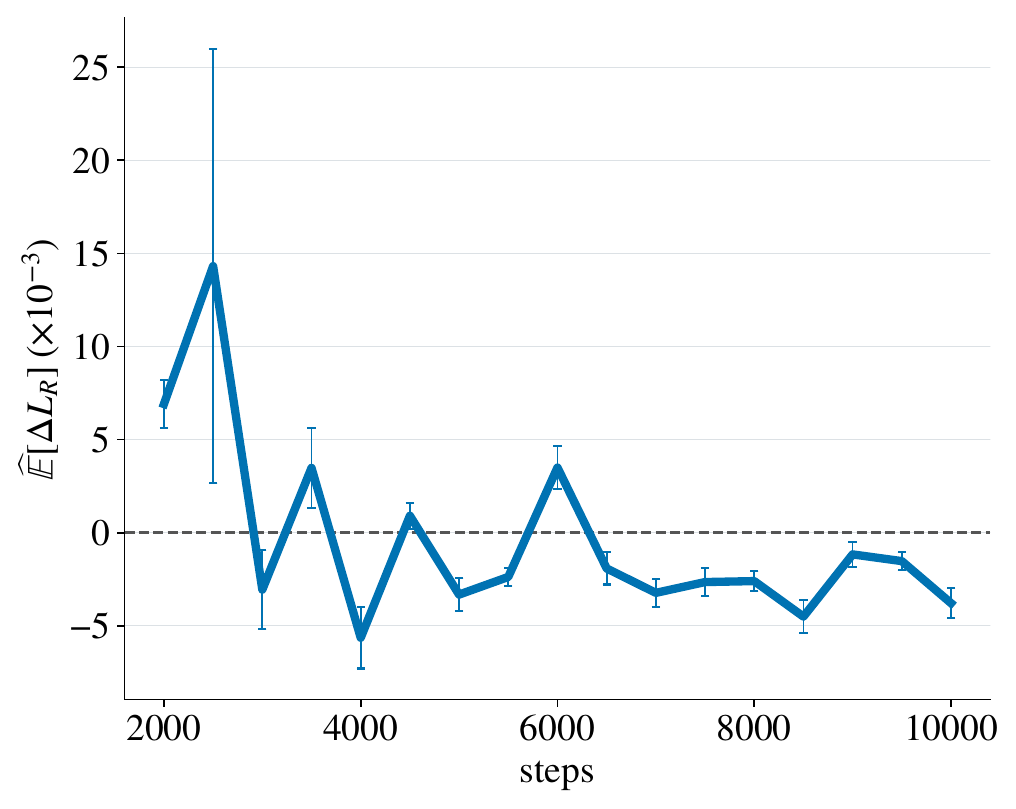}}\hfill
\subfloat[Realized loss increments\label{fig:llm-audit-012-panel-b}]{\includegraphics[width=.32\linewidth]{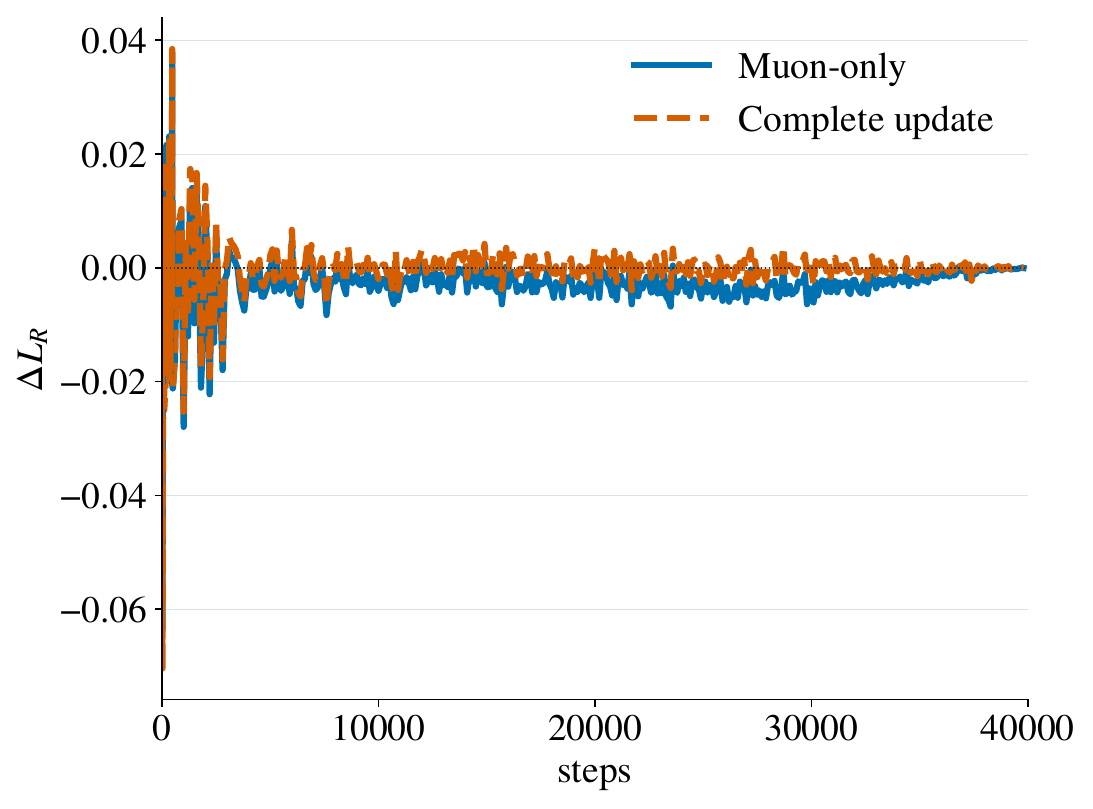}}\hfill
\subfloat[Complete-update balance\label{fig:llm-audit-012-panel-c}]{\includegraphics[width=.32\linewidth]{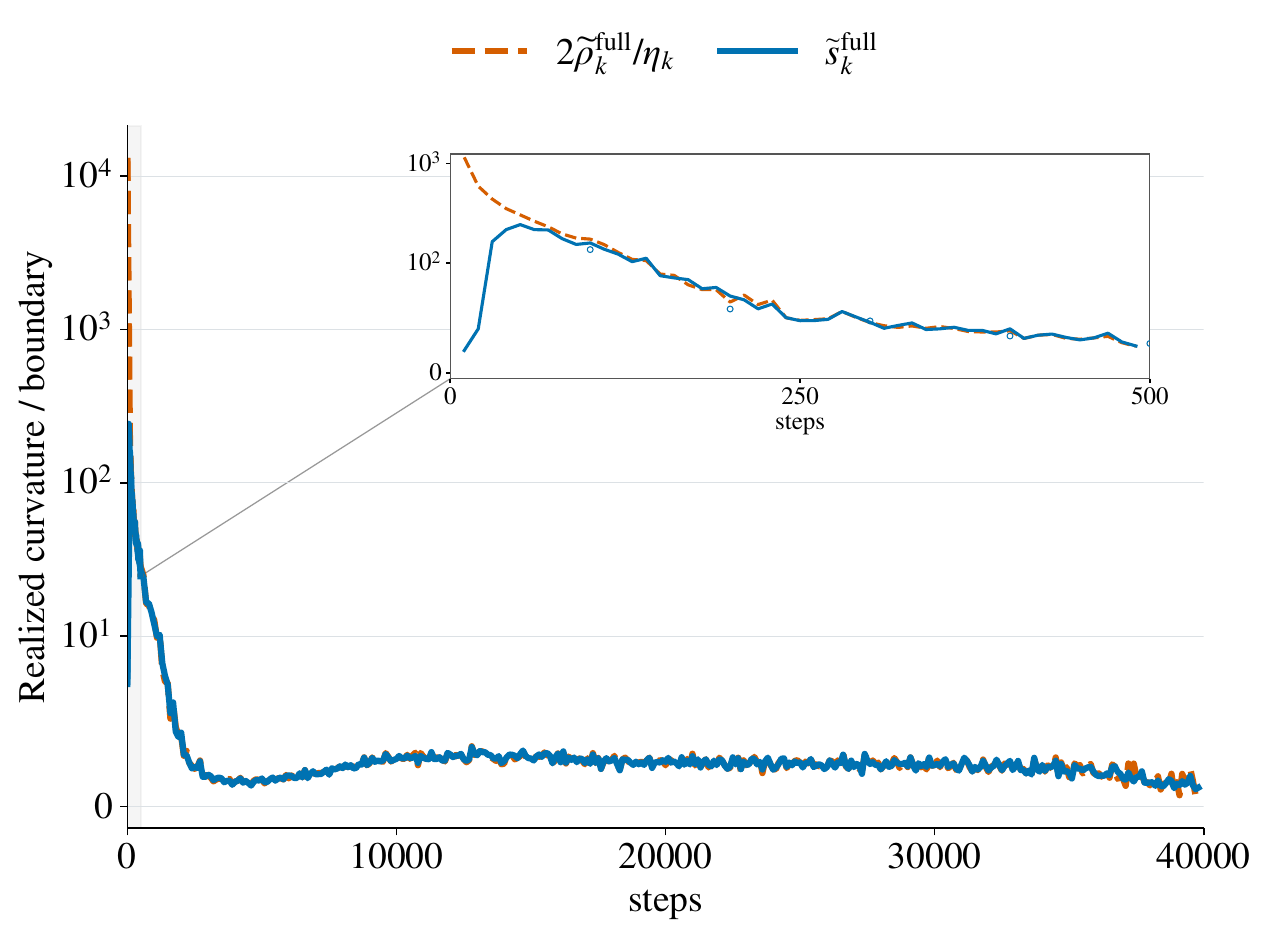}}
\caption{\textbf{Reference-loss checks at peak rate $0.12$.} (a) Conditional Muon-only increments with pointwise 95\% intervals. (b) Realized Muon-only and complete-update increments. (c) Complete-update curvature $\widetilde s_k^{\rm full}$ and boundary $2\widetilde\rho_k^{\rm full}/\eta_k$. (b) and (c) include dense early probes; the inset covers steps 10--490.}
\label{fig:llm-audit-012}
\end{figure}

\subsection{130M layer structure and batch-dependent coherence}
\label{app:llm-structure}
Figures~\ref{fig:llm-layer} and~\ref{fig:llm-batch} support the layer and probe-batch analyses in Section~\ref{subsec:llm-alignment}. They report matrixwise heterogeneity and fixed-checkpoint probe-batch effects, respectively; the latter do not vary the training batch.

\begin{figure}[!t]
\centering
\subfloat[Block trajectories\label{fig:llm-layer-panel-a}]{\includegraphics[width=.485\linewidth]{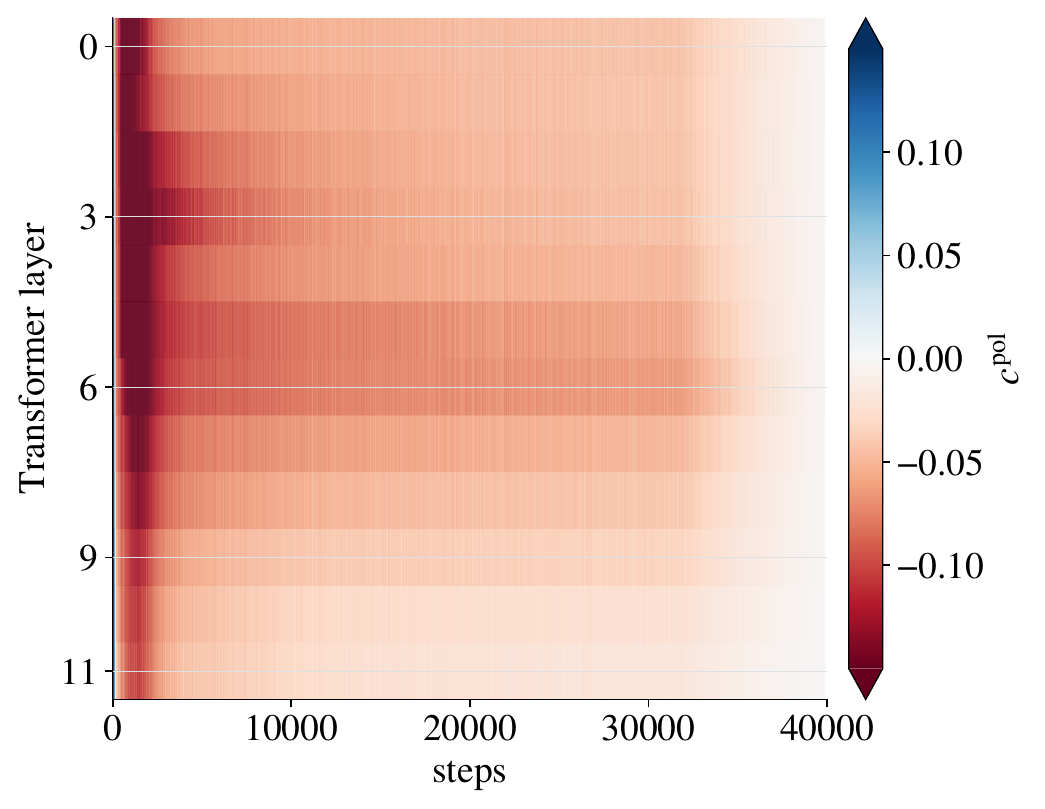}}\hfill
\subfloat[Global cosine\label{fig:llm-layer-panel-b}]{\includegraphics[width=.485\linewidth]{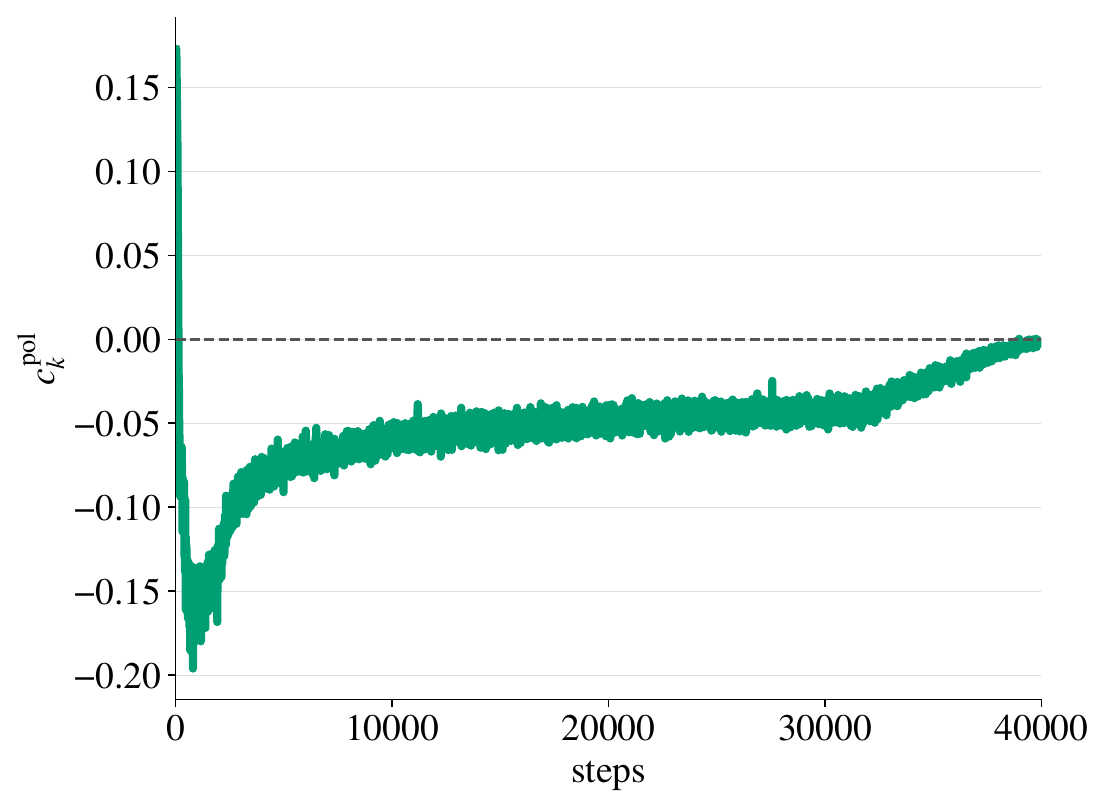}}
\caption{\textbf{Layer and module variation in directional alignment.} This experiment uses the 130M Llama-like LLM at peak Muon learning rate $0.02$. (a) shows block trajectories; (b) shows the global cosine. Both cover the constant-learning-rate stage. \cref{fig:llm-module-medians} shows module medians.}
\label{fig:llm-layer}
\end{figure}

\subsection{1B LLM pretraining}
\label{app:1b-protocol}
The 1B Llama-like model has 16 layers, model width 2,048, 16 attention heads, and FFN width 6,400. Table~\ref{tab:1b-config} lists its architecture, optimization settings, token budget, and recorded diagnostics. Training processes approximately 20 tokens per parameter.

\begin{table}[!t]
\centering
\small
\caption{1B pretraining configuration and recorded diagnostics.}
\label{tab:1b-config}
\begin{tabularx}{\linewidth}{@{}lX@{}}
\toprule
Item & Setting\\
\midrule
Model & Llama-like; approximately 1B parameters; 16 layers\\
Model width / heads & 2,048 / 16; head dimension 128\\
FFN & SwiGLU; intermediate width 6,400\\
Normalization / positions & RMSNorm / RoPE\\
Vocabulary / output head & 50,304; tied input and output embeddings\\
Data & GPT-2-tokenized FineWeb \texttt{sample-100BT}\\
Prepared token pools & 50,000,000,000 training tokens and 16,777,216 validation tokens from disjoint documents\\
Training sampling & Random contiguous windows sampled with replacement\\
Training batch / microbatch & 512 / 1 sequences\\
Sequence length & 4,096 tokens\\
Tokens per update & 2,097,152\\
Training updates & 9,544\\
Total training tokens & 20,015,218,688\\
Token budget & Approximately 20 tokens per parameter\\
\midrule
Muon parameter scope & Attention and FFN weight matrices\\
Muon & NS-5; momentum 0; weight decay 0; update scale 1\\
Muon peak LR & 0.04\\
NS-5 settings & Five Newton--Schulz iterations; coefficients
$(3.4445,-4.7750,2.0315)$; Frobenius normalization with
$\epsilon=10^{-7}$; float32 arithmetic\\
Auxiliary parameter scope & Token embeddings and normalization parameters\\
Auxiliary optimizer & AdamW; peak LR $0.001$; betas $(0.8,0.999)$;
$\epsilon=10^{-8}$; weight decay $0.1$\\
WSD schedule & Linear warmup fraction $0.05$; constant stage;
cosine-decay fraction $0.20$; final LR fraction $0.10$;
the same schedule multiplier is applied to Muon and auxiliary AdamW\\
Training precision & bfloat16 autocast with float32 parameters\\
Memory settings & Activation checkpointing; loss chunks of 128 tokens\\
Gradient clipping & None\\
Training seed & 1337\\
Configured data seed & 2026\\
\midrule
Evaluation protocol & Every 500 updates, 1024 sequences; final evaluation 4096 sequences\\
Recorded training / validation losses & 9,544 / 20 values\\
Recorded global direction cosines & 1,909 samples at steps 0--9,540\\
Direction scope & Global and layer-level cosines of consecutive Muon directions\\
Conditional T1 probes & Not performed\\
\bottomrule
\end{tabularx}
\end{table}

\cref{fig:llm-1b} shows loss, global $c_k^{\mathrm{pol}}$, and layer-level alignment. Cosines compare Muon directions, not the complete Muon-plus-AdamW updates. This run has no conditional T1 probes.

\end{document}